\theoremstyle{plain}
\newtheorem{theorem}{Theorem}[section]
\newtheorem{proposition}{Proposition}[section]
\newtheorem{lemma}{Lemma}[section]
\newtheorem{corollary}[theorem]{Corollary}
\theoremstyle{definition}
\newtheorem{definition}{Definition}
\theoremstyle{remark}
\newcommand{\rewardf}{\mathcal{R}}
\newcommand{\reward}{r}
\DeclareRobustCommand\onedot{\futurelet\@let@token\@onedot}
\def\@onedot{\ifx\@let@token.\else.\null\fi\xspace}
\def\ie{{i.e}\onedot}
\title{Incentivizing Honesty among Competitors in Collaborative Learning and Optimization}
\author{%
  Florian E. Dorner \\
  MPI for Intelligent Systems, Tübingen\\
  ETH Zurich \\
  \texttt{florian.dorner@tuebingen.mpg.de} \\
  % examples of more authors
  \And
  Nikola Konstantinov\\
  INSAIT, Sofia University\\
  \texttt{nikola.konstantinov@insait.ai} \\
  \And
  Georgi Pashaliev\\
  Sofia High School of Mathematics\\
  \And
  Martin Vechev\\
  ETH Zurich\\
  \texttt{martin.vechev@inf.ethz.ch} 
}
\begin{document}

\maketitle

\begin{abstract}
Collaborative learning techniques have the potential to enable training machine learning models that are superior to models trained on a single entity’s data. However, in many cases, potential participants in such collaborative schemes are competitors on a downstream task, such as firms that each aim to attract customers by providing the best recommendations. This can incentivize dishonest updates that damage other participants' models, potentially undermining the benefits of collaboration. In this work, we formulate a game that models such interactions and study two learning tasks within this framework: single-round mean estimation and multi-round SGD on strongly convex objectives. For a natural class of player actions, we show that rational clients are incentivized to strongly manipulate their updates, preventing learning. We then propose mechanisms that incentivize honest communication and ensure learning quality comparable to full cooperation. Lastly, we empirically demonstrate the effectiveness of our incentive scheme on a standard non-convex federated learning benchmark. Our work shows that explicitly modeling the incentives and actions of dishonest clients, rather than assuming them malicious, can enable strong robustness guarantees for collaborative learning.
\end{abstract}

\section{Introduction}
\label{sec:introduction}
Recent years have seen an increased interest in designing methods for collaborative learning, where multiple participants contribute data and train a model jointly. The premise is that the participants will then be able to obtain a better model than if they were learning in isolation. Most prominently, \textit{federated learning (FL)} \citep{kairouz2021advances} provides a method for training models in a distributed manner, allowing data to stay with institutions, while still harvesting the benefits of collaboration. 

An underlying premise for the success of collaborative learning schemes is that the participants contribute data (or gradient updates) relevant to the learning task at hand. However, when participants are in competition on some downstream task, they may have an incentive to sabotage other participants' models. For instance, firms that are competing on the same market can often improve their machine learning models by having access to their competitors' data, but at the same time will likely benefit from a gap between the quality of their models and those of other firms. 

These conflicting incentives raise a concern that collaborative learning may be vulnerable to strategic updates from participants. Previous work has empirically demonstrated that irrelevant or malicious updates can negatively impact collaborative learning \citep{tolpegin2020data, kairouz2021advances}. In particular, if a subset of participants is modeled as fully-malicious (Byzantine) agents, that collude in a worst-case manner, it is known that optimal convergence rates contain a leading-order term that is based on the fraction of Byzantine agents and is irreducible as the number of players increases \citep{yin2018byzantine, alistarh2018byzantine}. This suggests that collaborative learning in the presence of strategic behavior may often not provide asymptotic benefits over learning with one's own data. 

\paragraph{Contributions} In this work, we study collaborative learning in the presence of strategic behavior by explicitly modeling players' competitive incentives. We consider a game between multiple players that exchange updates via a central server, where players' rewards increase both when they obtain a good model for themselves, and when other players' models perform poorly. 

We study two important instantiations of this collaborative learning game: mean estimation with a very general action space and strongly-convex stochastic optimization with attacks that add gradient noise. We show that players are often incentivized to strongly manipulate their estimates, rendering collaborative learning useless. To remedy this, we suggest mechanisms inspired by peer prediction \citep{miller2005eliciting}, that penalize cheating players using side payments. Our results on stochastic optimization rely on a novel recursive bound for the squared norm of differences in SGD-iterates between a clean trajectory and a strategically corrupted trajectory. Meanwhile, we show that side payments can be avoided in the mean estimation case, using a novel communication protocol in which the server sends noisy estimates back to players that are suspected of cheating. Our mechanisms are solely based on observable player behaviour, and recover near-optimal convergence rates at equilibrium. Furthermore, expected payments cancel out when all players are honest, so that players are incentivized to participate in the training, rather than use their own data only, despite the penalties. 

Finally, we conduct experiments on the FeMNIST and Twitter datasets from LEAF  \citep{caldas2018leaf} and demonstrate that our mechanisms can incentivize honesty for realistic non-convex problems.
 
%
%The first, inspired by the peer prediction literature \cite{miller2005eliciting}, assumes transferable utility, and makes players pay for large deviations from other players' estimates. The second discards the assumption of transferable utility and instead has the server provide lower-quality estimates to potentially dishonest players.
%
%Next, we extend the ideas developed in the single-round case to a multi-round SGD setup. We theoretically analyze FL mechanisms for smooth and strongly convex problems under attacks similar to those considered for mean estimation. Assuming transferable utility, we show that our mechanisms can provide arbitrarily small upper bounds on the damage due to dishonest updates by rational players. 
% We also define natural action spaces for the players, which intuitively reflect the act of hiding data from others as means to attack and balancing local versus server information as means to defend.

\section{Related work}
\label{sec:related_work}
\paragraph{Game theory and collaborative learning} 
Many works that study connections between FL and game theory focus on clients' incentives to participate in the training process at all (see \citet{tu2021incentive} for a recent survey, and \citet{gradwohl2022coopetition} for an analysis of how this relates to competition). Similarly, \cite{karimireddy2022mechanisms} study incentives for free-riding, i.e. joining collaborative learning without spending resources to contribute data. In contrast, our setting covers clients that strategically manipulate their updates in order to damage other participants' models.

Another line of work studies FL as a coalitional game theory problem, in which the players need to deal with potential issues of between-client heterogeneity by deciding with whom to collaborate \cite{donahue2020model, donahue2021optimality}. Optimal behavior under data heterogeneity is also studied by \cite{chayti2021linear}, while \cite{gupta2022fl} studies invariant risk minimization games in which FL participants learn predictors invariant across client distributions. In contrast, we study FL as a non-cooperative game and seek to address strategic manipulation, rather than data heterogeneity.

\paragraph{Robustness in federated learning}
The robustness of FL to noisy or corrupted updates from clients has received substantial recent interest, see \citet{shejwalkar2022back} for a recent overview. One line of work studies federated learning in the context of various data corruption models, e.g. noise or bias towards protected groups \citep{fang2022robust, abay2020mitigating}. In contrast, we study strategic manipulation by clients as the source of data corruption.

Clients attacking the training are typically modeled as Byzantine \citep{blanchard2017machine, yin2018byzantine, alistarh2018byzantine}, adversarially seeking to sabotage training by deviating from the FL protocol in a worst-case manner. The goal of Byzantine-robust learning is to achieve guarantees in that setting. Similar models have been studied in a statistical context, where data is stored at a single location and so communication is not a concern  \citep{qiao2017learning, konstantinov2020sample}. In contrast to these works, we model manipulation as a consequence of competitive incentives rather than maliciousness and analyze client behaviour using game theory \citep{osborne1994course}. 
Rather than focusing on robustness to manipulations, we aim to \textit{prevent manipulation} altogether.

\paragraph{Peer prediction mechanisms}
Our mechanisms for inducing honesty are closely related to \textit{peer prediction} that aims to incentivize honest ratings on online platforms. In their seminal paper, \hbox{\citet{miller2005eliciting}} suggest paying raters based on how much their rating helps to predict other raters' ratings. While they require a common prior shared by all raters and the mechanism designer, \citet{witkowski2012peer} extend their results by relaxing that assumption. Closely related to our work, \citet{cai2015optimum} suggest to incentivize crowdworkers to produce accurate labels by paying them more, the better their label gets predicted by a model estimate from other workers' data. Meanwhile, \citet{waggoner2014output} prove that peer prediction elicits common knowledge, rather than truth from participants. Lastly, while \citet{karger2021reciprocal} find that peer prediction can elicit subjective forecasts of similar accuracy as scoring based on the ground truth, \citet{gao2014trick} demonstrate that human raters can end up with dishonest strategies despite the existence of honest equilibria.

\section{Competitive federated learning}
\label{sec:framework}
In this section, we present our framework on a high level and explain how it models competitive behaviour in FL. In this generality, however, it is impossible to analyze the problem quantitatively. We therefore define specific instantiations of the framework, which we study for the rest of the paper.

\subsection{General framework}
\label{sec:general_framework}
\paragraph{Overview}
Throughout the paper, we assume that there are $N \geq 2$ players, who each have a private dataset. The players participate in an FL-like procedure, which takes place over multiple rounds and requires them to send messages with information relevant to update a centrally trained model at every step. In our setup, players act strategically and competitive pressures might incentivize them to try to corrupt other players' models. This is done by manipulating updates sent to the central model, while simultaneously keeping track of an unmanipulated private (presumably more accurate) model. 

To model the participants' strategic interactions, we need to define a game by specifying their action spaces and rewards. To this end, one needs to specify: their \textit{attack strategies}, that describe whether and how they will corrupt their messages to the server; their \textit{defense strategies}, which describe how they postprocess the server's updates to defend themselves against others' manipulations; and their \textit{rewards}, in a way that reflects the quality of learning and the competition between them. Given these components, we are interested in the Nash equilibria of the corresponding game, in order to understand strategic behavior in FL and how it affects the quality of the players' models.

%For example, firms offering machine translation services might all benefit from sharing data to improve their models. However, each firm will benefit financially even more from obtaining a model better than that of other firms, as this advantage will allow the company to attract more customers.

\paragraph{Formal setup} We denote the samples of each player $i$ by $x^i = \{x^i_1, \ldots, x^i_{n}\}$, where $x\in \mathcal{X}$, and assume that $\{x^i_j: i\in [N], j\in [n]\}$ are \textit{not necessarily independent} samples from an unknown distribution $\mathcal{D}\in\mathcal{P}(\mathcal{X})$. For simplicity, we assume that all players have an equal number of samples $n$. Players communicate via an FL-like protocol, through which a central server model $\theta^s \in \mathbb{R}^d$ is updated. The intended goal of this procedure is to find a value for $\theta^s$ that minimizes a loss function $f_{\mathcal{D}}(\theta)$. Note that because $\mathcal{D}$ is unknown to the players, they can benefit from honest collaboration.

The protocol consists of $T$ rounds and the central model is initialized at some $\theta^s_1 \in \mathbb{R}^d$. At time $t = 1, 2, \ldots, T$ the server sends the model $\theta^s_{t}$ to all participants. Each agent $i$ is then \textit{meant to send an update} $g_t(\theta^s_{t}, x^i)$ to the server, for some function $g_t: \mathbb{R}^d \times \mathcal{X}^n \to \mathbb{R}^d$. For example, in the standard FedSGD setting, $f_{\mathcal{D}}(\theta)=\mathbb{E}_{x\in \mathcal{D}}[f_x(\theta)]$ for functions $f_x: \mathbb{R}^d \to \mathbb{R}$ and $g = \frac{1}{n}\sum_{j} \nabla f_{x^i_j}(\theta^s_{t})$ serves as an estimate of the gradient of $f$ evaluated at the data of player $i$. Each player $i$ then sends a message $m^i_t \in \mathbb{R}^d$ to the server (which may or may not be equal to $g_t(\theta^s_{t}, x^i)$). Finally, the server computes a new model $\theta^s_{t+1} = \texttt{Agg}(\theta^i_{t}, m^1_t, m^2_t, \ldots, m^N_t)$, via an aggregating function $\texttt{Agg}:\mathbb{R}^d \times \mathbb{R}^{N \times d} \to \mathbb{R}^d$ (for example, in FedSGD this will be a gradient update computed as $\bar{m}_t = \frac{1}{N}\sum_{i=1}^N m^i_t$).

\paragraph{Players' strategies}
At every step, the players take two decisions: how to \textit{attack} by sending a manipulated estimate to the server, and how to \textit{defend} themselves from unreliable estimates when updating their locally tracked model based on information received from the server. Formally, we assume that each player $i\in [N]$ chooses (potentially randomized) functions $a^i_1, a^i_2, \ldots, a^i_T$ and $d^i_1, \ldots, d^i_T$, that describe their behavior for the attack and defense stages at every time step. These functions are chosen from two respective \textit{sets of possible actions} $\mathscr{A}$ and $\mathscr{D}$. The tuple of chosen actions $p^i = (a^i_1, \ldots, a^i_T, d^i_1, \ldots, d^i_T) \in \mathscr{A}^T\times\mathscr{D}^T$ represents the player's global strategy. 

In the most general case, the attacks and defenses of each player $i$ may take into account all information available to the player, throughout the history of the optimization process. At time $t$ this includes the models $\theta^s_1, \ldots, \theta^s_t$ received by the server; the local models $\theta^i_1, \ldots, \theta^i_t$ the player kept at previous iterations; the attack strategies $a^i_1, \ldots, a^i_t$ used up to time $t$ (e.g. to correct for one's own faulty estimate at time $t$); as well as additional randomness $\xi^i_1, \ldots, \xi^i_t$ sampled at each round.

\paragraph{Players' rewards}
Each player aims to obtain a final model $\theta^i_{T+1}$ that approximately minimizes $f_{\mathcal{D}}(\theta^i_{T+1})$. Crucially, their reward also depends on other players' models. Specifically, we assume that each player $i$ has a reward function $\rewardf^i: \mathbb{R}^{ N \times d} \times \mathcal{P}(\mathcal{X}) \to \mathbb{R}$ and  receives the reward
\begin{equation}
\reward^i = \rewardf^i \left(\theta^1_{T+1}, \theta^2_{T+1}, \ldots, \theta^N_{T+1}, \mathcal{D} \right).
\end{equation}
Note that the messages $m^i$ sent by players and thus $\theta^i_{T+1}$ and each player's reward depend not only on players' strategies, but also on the particular realization of their samples $x^i$. We thus
focus on \textit{expected rewards}, averaging out the effects of particular realizations of players' samples and randomness in their strategies. We study the \textit{vector of expected rewards} $\mathbb{E}(\reward^1, \ldots, \reward^N)$ and its dependence on the \textit{strategy profile}, that is on the distribution of strategies $p = (p^1, p^2, \ldots, p^N)$ chosen by each player.

\paragraph{Assumptions on players' behaviour} To analyze players' behaviour, we make two assumptions, giving rise to a classic game-theoretic setup. The first is that players seek to maximize their expected reward, as defined above, \ie players are \textit{rational}. In addition, since the reward of each player depends on the actions of the other players, players account for the actions of the others, which means that their behavior is \textit{strategic}. A natural solution concept in this context is the Nash equilibrium \citep{nash1951non}. This describes a strategy profile in which no player can improve their reward by unilaterally changing their strategy. In our case, this classic notion translates to the following definition.
\begin{definition}
\label{defn:nash_eq}
Let $p = (p^1, p^2, \ldots, p^N) \in \left(\mathcal{P}(\mathscr{A}^T\times\mathscr{D}^T)\right)^N$ be a strategy profile. Then $p$ is a (mixed) Nash equilibrium if:
\begin{align*}
\forall p^* \in \mathcal{P}(\mathscr{A}^T\times\mathscr{D}^T) \text{ and } \forall i\in [N]: \mathbb{E}\left(\reward^i(p^1, \ldots, p^i, \ldots, p^N)\right) \geq \mathbb{E}\left(\reward^i(p^1, \ldots, p^*, \ldots, p^N)\right),
\end{align*}
where the expectation is taken with respect to the randomness of the data and the players’ strategies.
\end{definition}

\subsection{Specific instantiations: mean estimation and stochastic gradient descent} 
\label{sec:solving}

We study two specific cases of the game, each modeling a fundamental learning problem. The first is single-round mean estimation (Sections \ref{sec:special_case} and \ref{sec:incentives}), which correspond to the general setup with $T=1$, $f_{\mathcal{D}}(\theta) = \|\theta - \mu\|^2$, where $\mu = \mathbb{E}_{X\sim \mathcal{D}}(X)$, and the updates $g$ being the sample means of the players. The second is multi-round stochastic optimization of strongly-convex functions $f_{\mathcal{D}}(\theta)$ via SGD (Section \ref{sec:sgd}), in which case the updates $g$ are stochastic gradient estimates based on players' data. In the corresponding sections, we define natural rewards that model the competing incentives.

\paragraph{Strategy spaces} To describe the \textit{attack strategies}, in both cases we model \textit{attacks that send a noisy update} $g_t(\theta^s_{t}, x^i) + \alpha_t^i \xi^i_t$, for normalized zero-mean noise $\xi^i_t$ and an attack parameter $\alpha^i_t \in \mathbb{R}$, to the server. Up to a certain magnitude of $\alpha_t^i$, these attacks have a natural interpretation as the act of \textit{hiding a random subset of a player's data}. In addition, the $\alpha_t^i$ parameters have a natural interpretation as the \textit{aggressiveness of the player}. In the case of mean estimation, we are also able to analyze much more general attack strategies that can adjust $\alpha^i_t$ based on the players' samples $x^i$ and additionally allow for adding a directed bias to the communicated messages.

%Beyond that, as any change to $g_t(\theta^s_{t-1}, x^i)$ can be written as a sum of its (predictable) expectation, and zero mean noise, any manipulation in a specific predictable, \ie partially non-random direction could easily be defended by other players or the server via a shift in the opposite direction, 

For the \textit{defense strategies} in the mean estimation case, we consider a defense strategy that corrects the mean estimate received from the server for the player's own manipulation. The player then computes a weighted average of the result and their local mean. The weighting parameter $\beta^i$ used then has a natural interpretation as the \textit{cautiousness of the player}. In the SGD case we directly provide mechanisms that incentivize honesty at the attack stage, making potential defenses redundant.

\section{A single-round version of the game: competitive mean estimation}
\label{sec:special_case}
In this section we analyze a single-round version of the game, in which players aim to estimate the mean of a random variable $X\in \mathcal{P}(\mathbb{R}^d)$. Specifically, we consider the game defined in Section \ref{sec:framework} for $T = 1$ rounds. Players sample from a distribution $\mathcal{D}\in \mathcal{P}(\mathbb{R}^d)$ by first independently sampling a random mean $\mu_i\sim D_\mu$ and "variance" $\sigma_i^2:=\mathbb{E} \|X^i-\mu_i\|^2 \sim D_\sigma$ (this models potential \textit{heterogeneity} between clients), and then receiving (conditionally) independent samples from a random variable $X^i$ with mean $\mu_i$ and "variance" $\sigma_i^2$. We call $\mu=\mathbb{E} \mu_i$, $\sigma^2 = \mathbb{E}\sigma_i^2$ and $\sigma_\star^2 = \mathbb{E} \|\mu-\mu_i\|^2$. We assume that players do not know the distributions $D_\mu$ and $D_\sigma$. 

Each player wants to estimate the global mean $\mu$ as well as possible. During the single communication round, the players are meant to communicate the mean of their samples: $g_1(\theta^s_1, x^i) = \frac{1}{n}\sum_{j=1}^n x^i_j$. Instead, they send messages $m^i_1$. The server aggregates the received messages by averaging them, so that $\theta^s_2 = \frac{1}{N}\sum_{i=1}^N m^i_1$. The players then receive the value of $\theta^s_2$ from the server and use their defense strategy to arrive at a final estimate $\theta^i_2 = d(\theta^s_2, x^i)$ of the mean. For simplicity of notation, we ignore the dependence of all values on the time $t=1$ in the rest of this and the next section. 
\paragraph{Reward functions}
To model competitive incentives, the reward of each player needs to increase as their own estimate of the mean becomes better, and as the estimates of other players become worse. Therefore, a natural reward function is:
\begin{equation}
\label{eqn:specific_loss}
\rewardf^i(\theta^1, \ldots, \theta^N, \mu) = \frac{\sum_{j \neq i}\|\theta^j - \mu\|^2}{N-1} - \lambda_i \|\theta^i - \mu\|^2,
\end{equation}
for some $\lambda_i\geq 0$. The value of $\lambda_i$ quantifies to what extent player $i$ prioritizes the quality of their own estimate over damaging the estimates of the other players.
\paragraph{Attack strategies} We assume that players choose what estimates to communicate by deciding how to perturb the empirical mean of their data. Specifically, each player $i$ selects parameters $\alpha^i(x^i) \geq 0, b^i(x^i) \in \mathbb{R}^d$ based on their sample, in a potentially randomized manner, and communicates: 
\begin{equation}
\label{eqn:attack_strategy}
m^i = \bar{x}^i + \alpha^i(x^i) \xi^i + b^i(x^i),
\end{equation}
where $\bar{x}^i = \frac{1}{n}\sum_{j=1}^n x^i_j$, $\mathbb{E}[\xi^i]=0$ and $\mathbb{E} \|\xi^i\|^2=1$. Here $\bar{x}^i$ is the standard empirical mean of $x^i$, while  $\alpha^i$ represents the magnitude of the noise player $i$ adds to the estimate and $b^i(x^i)$ is an additional bias term. Note that the case of $b^i(x^i) = 0$ recovers the data-hiding attack discussed in Section \ref{sec:solving}. 

In order to prevent "non-general" strategies, such as simply setting $m^i=\mu$, that cannot be analyzed properly as their success depends on the true parameter $\mu$, we assume that players do not base their strategies on guesses about $\mu$ beyond the information they obtained from $\bar{x}^i$. Formally, we assume \begin{equation} \mathbb{E}<\bar{x}^i-\mu,b^i(x^i)>=0. \label{eqn:assumption}\end{equation} This prevents $b^i$ from linearly encoding additional knowledge about $\mu$ and for example holds whenever $b^i(x^i)$ is independent of the residuals $\bar{x}^i-\mu$. We also assume that the noise variables $\xi^i$ are independent of each other and all $x^k_j$ and $\alpha^k(x^k)$, but make no further distributional assumptions about $\xi^i$. Indeed, all of our theorems will hold regardless of any additional assumptions about $\xi^i$.  

We denote this set of attack strategies as $\mathscr{A}$.  Each element in $\mathscr{A}$ can be uniquely identified via the distribution of the noise $\xi^i$ and the functions $\alpha^i(x^i)$ and $b^i(x^i)$. As $\alpha = b = 0$ can be interpreted as covering the fully collaborative case, while $\alpha, b \to \infty$ covers the fully malicious case, the (adaptive) parameters $\alpha,b$ have a natural interpretation as measures of the \textit{aggressiveness} of a player. 

We also note that these attacks are \textit{very general}: $m^i(x^i)-\bar{x}^i$ can always be written as the sum of a deterministic component $\hat{b}(x^i)$ and zero mean noise $\hat {\xi}(x^i)$, such that\textit{ (\ref{eqn:assumption}) and the fixed distribution of $\xi^i$ are the only assumptions separating us from the most general possible set of attacks strategies.}

\paragraph{Defense strategies} In the defense stage each player uses the received estimate $\theta^s = \bar{m} = \frac{1}{N}\sum_{i=1}^N m^i$ and their local data $x^i$ to compute a final estimate of the unknown mean. Two extreme approaches for player $i$ are being fully cautious and using their local mean $\bar{x}^i$ only, or fully trusting other players and computing the average of all sent updates, corrected for their own manipulation, that is $\bar{m}^i = \frac{1}{N}\left(N\theta^s - m^i + \bar{x}^i\right)$. We consider defense strategies that take a weighted average of these two extremes: Each player $i$ chooses a parameter $\beta^i \in [0,1]$ and constructs a final estimate 
\begin{equation}
\label{eqn:defense_non_adaptive}
\theta^i = (1-\beta^i) \bar{m}^i + \beta^i \bar{x}^i.
\end{equation}
Denote the described set of defenses, as $\mathscr{D}$. Each element in $\mathscr{D}$ is uniquely identified via the corresponding parameter $\beta$ with $\beta = 0$ and $\beta = 1$ covering the extreme cases from above. Since $\beta$ can be used to interpolate between these two extremes, it can be seen as a measure of \textit{cautiousness}.

We do not cover more complicated defense strategies for two reasons: First, our proposed mechanisms will incentivize players to be honest \textit{even without any defenses} such that more advanced defense mechanisms are not necessary. Second, as defenses can be seen as a method for mean estimation, analyzing the optimality for general classes of defenses would be fundamentally challenging for $d\geq 3$ due to Stein's Paradox \citep{stein1956inadmissibility}, even for a single player version of our game.

\subsection{Expected rewards and Nash equilibria}
\label{sec:results}

We now analyze the game with strategy set $(\mathscr{A}\times \mathscr{D})$. As the specific distribution of $\xi^i$ does not affect the players' rewards, the attack and defense strategies are for all relevant purposes uniquely determined by the functions $\alpha$, $b$, and $\beta$ respectively. We abuse notation and consider $(\alpha^i, b^i, \beta^i)$ as the strategy of player $i$. First we derive a formula for the MSE of a player, for a fixed strategy profile.
\begin{theorem}
\label{thm:main}
Let $\mathcal{D}$ be as described above. Then the expected mean squared error (MSE) of player $i\in [N]$ for any strategy profile $((\alpha^1, b^1, \beta^1), \ldots, (\alpha^N, b^N,\beta^N)) \in  \left(\mathscr{A}\times\mathscr{D}\right)^N$ is:
\begin{align*}
	 \mathbb{E}\left(\|\theta^i - \mu\|^2\right)   &= \left(1-\beta^i\right)^2 \left(\frac{\sigma^2}{Nn} +\frac{\sigma_\star^2}{N}+ \frac{1}{N^2} \sum_{j\neq i} \mathbb{E}(\alpha^j(x^j)^2) + \frac{1}{N^2} \mathbb{E}   \|\sum_{j \neq i}  b^j(x^j) \|^2\right) \\&
	+ (\beta^i)^2(\frac{\sigma^2}{n}+\sigma_\star^2)+2\left(1-\beta^i\right)\beta^i (\frac{\sigma^2}{Nn} + \frac{\sigma_\star^2}{N})
\end{align*}

\end{theorem}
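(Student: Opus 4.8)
The plan is to expand the final estimate $\theta^i = (1-\beta^i)\bar{m}^i + \beta^i\bar{x}^i$ around $\mu$ and evaluate the three resulting pieces of the mean squared error separately. Writing $A := \bar{m}^i - \mu$ and $B := \bar{x}^i - \mu$, we have
\[
\|\theta^i-\mu\|^2 = (1-\beta^i)^2\|A\|^2 + (\beta^i)^2\|B\|^2 + 2(1-\beta^i)\beta^i\langle A,B\rangle,
\]
so it suffices to compute $\mathbb{E}\|A\|^2$, $\mathbb{E}\|B\|^2$ and $\mathbb{E}\langle A,B\rangle$ and match them against the three coefficients in the claim. The first substantive step is to rewrite the defense term: since $\bar{m}^i = \frac{1}{N}(N\theta^s - m^i + \bar{x}^i)$ replaces player $i$'s own message by its honest mean, substituting $m^k = \bar{x}^k + \alpha^k(x^k)\xi^k + b^k(x^k)$ and using $\sum_{k\neq i}\bar{x}^k + \bar{x}^i = \sum_k\bar{x}^k$ yields the clean decomposition $A = A_1 + A_2 + A_3$ with
\[
A_1 = \frac{1}{N}\sum_{k}(\bar{x}^k - \mu),\quad A_2 = \frac{1}{N}\sum_{k\neq i}\alpha^k(x^k)\xi^k,\quad A_3 = \frac{1}{N}\sum_{k\neq i}b^k(x^k).
\]
Crucially, player $i$'s own noise and bias do not enter $A$, which is exactly what makes the self-correction meaningful.

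The core computation is a two-level (within/between client) variance decomposition. For a single client I would split $\bar{x}^k - \mu = (\bar{x}^k-\mu_k) + (\mu_k-\mu)$; conditioning on $(\mu_k,\sigma_k^2)$ gives $\mathbb{E}\|\bar{x}^k-\mu_k\|^2 = \sigma_k^2/n$, the cross term vanishes because $\mathbb{E}[\bar{x}^k-\mu_k\mid \mu_k,\sigma_k^2]=0$, and averaging over $D_\mu, D_\sigma$ yields $\mathbb{E}\|\bar{x}^k-\mu\|^2 = \sigma^2/n + \sigma_\star^2$. This immediately gives $\mathbb{E}\|B\|^2 = \sigma^2/n+\sigma_\star^2$, matching the $(\beta^i)^2$ coefficient. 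For $\mathbb{E}\|A_1\|^2$, the same identity together with independence across clients (each $\mathbb{E}\langle\bar{x}^k-\mu,\bar{x}^l-\mu\rangle$ vanishes for $k\neq l$ since every factor is mean zero) collapses the double sum to its diagonal, giving $\frac{1}{N}(\sigma^2/n+\sigma_\star^2) = \sigma^2/(Nn)+\sigma_\star^2/N$. For $\mathbb{E}\|A_2\|^2$ I would use that the $\xi^k$ are mean-zero, unit-second-moment, mutually independent and independent of all data: the off-diagonal terms vanish and each diagonal term factorizes into $\mathbb{E}[\alpha^k(x^k)^2]\,\mathbb{E}\|\xi^k\|^2 = \mathbb{E}[\alpha^k(x^k)^2]$, producing $\frac{1}{N^2}\sum_{k\neq i}\mathbb{E}[\alpha^k(x^k)^2]$. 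The term $\mathbb{E}\|A_3\|^2 = \frac{1}{N^2}\mathbb{E}\|\sum_{k\neq i}b^k(x^k)\|^2$ is kept as is.

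The remaining work, and the only part that is not a mechanical moment computation, is showing that every cross term vanishes. Any inner product involving $A_2$ dies because $\xi^k$ is independent of the data with $\mathbb{E}[\xi^k]=0$, so conditioning on the data zeroes it out; this disposes of $\langle A_1,A_2\rangle$, $\langle A_2,A_3\rangle$ and $\langle A_2,B\rangle$. The terms $\langle A_1,A_3\rangle$ and $\langle A_3,B\rangle$ each split into a same-client and a different-client part: the different-client parts vanish by independence and mean-zero-ness of the sample-mean deviation, while the same-client part of $\langle A_1,A_3\rangle$, namely $\mathbb{E}\langle\bar{x}^k-\mu,b^k(x^k)\rangle$, is precisely what assumption~(\ref{eqn:assumption}) sets to zero. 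Finally $\mathbb{E}\langle A,B\rangle$ reduces to $\mathbb{E}\langle A_1,B\rangle$, whose only surviving ($l=i$) summand is $\frac{1}{N}\mathbb{E}\|\bar{x}^i-\mu\|^2 = \sigma^2/(Nn)+\sigma_\star^2/N$, matching the cross coefficient.

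The main obstacle is exactly the careful bookkeeping of these cross terms and the identification of assumption~(\ref{eqn:assumption}) as the single ingredient that prevents the adversarial bias $b^i$ from correlating with the estimation error; once that is in place, everything else follows from linearity of expectation and the hierarchical independence structure. Assembling the three evaluated quantities into the displayed expansion then gives the claimed formula.
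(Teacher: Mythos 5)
Your proposal is correct and follows essentially the same route as the paper's proof: expand $\|\theta^i-\mu\|^2$ as a quadratic in the two pieces of the weighted average, evaluate the three second moments via the within/between-client variance decomposition, and kill every cross term using the independence and zero-mean properties of $\xi^k$, independence across clients, and assumption~(\ref{eqn:assumption}) for the same-client $b$-terms. The only difference is organizational, and in your favor: your fully centered, type-based split $A = A_1+A_2+A_3$ makes all cross terms vanish directly, whereas the paper splits $Y$ into the uncentered pieces $Y_1=\frac{1}{N}\sum_{j\neq i}m^j$ and $Y_2=\frac{1}{N}\bar{x}^i-\mu$ and must carry $\|\mu\|^2$ terms through the computation until they cancel.
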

This is proven in Appendix \ref{sec:proofs_ME} similar to the bias-variance decomposition. This result allows us to analyze the expected rewards defined by equation (\ref{eqn:specific_loss}) of the players for any strategy profile.

One can immediately see that there is no incentive for players to cooperate as long as $\beta^i<1$: other players $j$ can always increase their reward by increasing $\mathbb{E}(\alpha^j(x^j)^2)$ (unless it is already infinite). But for finite $\mathbb{E}(\alpha^j(x^j)^2)$ and $ \mathbb{E}   \|\sum_{j \neq i}  b^j(x^j) \|^2$, the optimal $\beta^i$ can be shown to never equal one, such that equilibria are only possible "at infinity":
\begin{corollary}
	\label{cor:no_nash}
	The game defined by the reward in equation (\ref{eqn:specific_loss}) and the set of strategies $\mathscr{A}\times\mathscr{D}$ does not have any  (pure or mixed)
	Nash equilibrium for which  $\mathbb{E}(\alpha^j(x^j)^2)$ and $ \mathbb{E}   \| b^j(x^j) \|^2 $ are finite for all players.
\end{corollary}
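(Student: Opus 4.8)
The plan is to argue by contradiction, exploiting the explicit error formula of Theorem~\ref{thm:main}. The crucial structural observation is that, because the defense corrects for a player's own manipulation, player $i$'s own error $\mathbb{E}\|\theta^i-\mu\|^2$ does not depend on $(\alpha^i,b^i)$ at all: the terms $\frac{1}{N^2}\sum_{j\neq i}\mathbb{E}(\alpha^j(x^j)^2)$ and $\frac{1}{N^2}\mathbb{E}\|\sum_{j\neq i}b^j(x^j)\|^2$ only involve the \emph{other} players. Plugging Theorem~\ref{thm:main} into~(\ref{eqn:specific_loss}), one sees that $\mathbb{E}(\alpha^i(x^i)^2)$ enters $\mathbb{E}(\rewardf^i)$ only through the opponents' errors, with nonnegative coefficient $\frac{1}{N-1}\frac{1}{N^2}\sum_{j\neq i}(1-\beta^j)^2$, and never through the penalty $-\lambda_i\mathbb{E}\|\theta^i-\mu\|^2$.

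First I would show that any Nash equilibrium with finite $\mathbb{E}(\alpha^j(x^j)^2)$ must have $\beta^j=1$ for every $j$. Suppose instead $\beta^{i_0}<1$ for some $i_0$, and pick any $j\neq i_0$ (possible since $N\ge 2$). By the observation above, the coefficient of $\mathbb{E}(\alpha^j(x^j)^2)$ in $\mathbb{E}(\rewardf^j)$ is at least $\frac{1}{N-1}\frac{1}{N^2}(1-\beta^{i_0})^2>0$, and $\alpha^j$ appears nowhere else. Hence player $j$ strictly raises their reward by sending larger noise (e.g.\ $\alpha^j\equiv c$, $c\to\infty$), so the profile is not a best response for $j$, contradicting the Nash property. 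Thus all $\beta^j=1$.

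Second, I would show the all-ones profile is itself not a Nash equilibrium, assuming at least one player has $\lambda_i>0$ and the problem is non-degenerate ($B:=\frac{\sigma^2}{n}+\sigma_\star^2>0$). Fix such a player $i$, with the others held at $\beta^j=1$. Then each opponent error equals the constant $B$ and is independent of player $i$'s strategy, so maximizing $\mathbb{E}(\rewardf^i)$ reduces to minimizing $\mathbb{E}\|\theta^i-\mu\|^2$ over $\beta^i\in[0,1]$. By Theorem~\ref{thm:main} this is an upward parabola $m(\beta^i)=A-2(A-C)\beta^i+(A+B-2C)(\beta^i)^2$, where $C=\frac{\sigma^2}{Nn}+\frac{\sigma_\star^2}{N}$ and $A=C+D$ with $D=\frac{1}{N^2}\sum_{j\neq i}\mathbb{E}(\alpha^j(x^j)^2)+\frac{1}{N^2}\mathbb{E}\|\sum_{j\neq i}b^j(x^j)\|^2\ge 0$. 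A short computation gives the unconstrained minimizer $\beta^\star=\frac{A-C}{A+B-2C}=\frac{D}{D+(B-C)}$; since $B-C=\frac{N-1}{N}B>0$ and $D$ is finite, $\beta^\star\in[0,1)$ lies strictly below $1$ and inside the feasible set, so $m(\beta^\star)<m(1)$. As $\lambda_i>0$, deviating to $\beta^\star$ strictly increases player $i$'s reward, so the all-ones profile is not Nash.

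Combining the two steps proves the claim: a Nash with finite $\mathbb{E}(\alpha^j(x^j)^2)$ would force all $\beta^j=1$, which the second step rules out. I expect the second step to be the main obstacle, specifically verifying that the optimal cautiousness is \emph{strictly} below one for every finite manipulation level; this hinges on the sign $A+B-2C=(B-C)+D>0$, which is exactly why non-degeneracy $B>0$ and some $\lambda_i>0$ are needed. Indeed, if all $\lambda_i=0$ the all-ones profile becomes a (degenerate) equilibrium, so these mild conditions are genuinely necessary.
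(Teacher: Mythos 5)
Your proof is correct and rests on the same core ingredients as the paper's own argument: the formula of Theorem \ref{thm:main} (a player's own error is independent of their own $(\alpha^i,b^i)$, while their reward is strictly increasing in $\mathbb{E}(\alpha^i(x^i)^2)$ whenever some opponent has $\beta<1$), together with the fact that the MSE-optimal cautiousness is strictly below $1$ for finite manipulations --- your parabola minimizer $\beta^\star = D/(D+(B-C))$ is exactly the paper's Lemma \ref{lemma:opt_beta}, equation (\ref{eqn:beta_opt}). The only structural difference is the order of the contradiction: the paper argues that at any Nash equilibrium each $\beta^i$ must equal this best response, hence lies in $[0,1)$, hence every player's reward is strictly increasing in their own noise, and stops there; you instead first force $\beta^j=1$ for all $j$ and then show that $\beta^j=1$ cannot itself be a best response. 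These two organizations are logically equivalent. What your write-up genuinely adds is the degenerate case: the reward (\ref{eqn:specific_loss}) permits $\lambda_i\ge 0$, and if all $\lambda_i=0$ (or if $\frac{\sigma^2}{n}+\sigma_\star^2=0$) then the profile $\beta^j\equiv 1$ with any finite noise is a (weak) Nash equilibrium, so the corollary as literally stated fails there. The paper's proof glosses over exactly this point --- the step ``each $\beta^i$ must be defined by equation (\ref{eqn:beta_opt})'' silently requires $\lambda_i>0$, since with $\lambda_i=0$ player $i$'s reward does not depend on $\beta^i$ at all --- so your extra hypotheses are not an artifact of your method but repair an implicit assumption the paper's argument also needs. (One shared loose end: both you, via the assertion that $D$ is finite, and the paper tacitly assume $\mathbb{E}\|\sum_{j\neq i}b^j(x^j)\|^2<\infty$, which the corollary's hypothesis on the $\alpha$'s does not literally grant.)
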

For details, see Appendix \ref{sec:proofs_ME}.  This shows that our defenses are unable to prevent maximal dishonesty by at least some players and formalizes a simple intuitive observation: as long as a player considers other players' updates at all, others are incentivized to reduce the information their updates convey about their samples. As at equilibrium at least one other player will infinitely distort the server estimate $\bar{m}$, \textit{no player can benefit from collaborative learning without modifications to the protocol.} 

\section{Mechanisms for incentivizing honesty}
\label{sec:incentives}
Given the impossibility of successful learning with rational competing agents in the simple mean estimation setting, we shift our focus to modifications of the protocol that allow for honest Nash equilibria (that is, equilibria where $\alpha^j(x_j)=b^j(x^j)=0, \forall j$). To this end, we design two mechanisms that seek to penalize dishonest players proportionally to the magnitude of their manipulations (and, thus, the damage caused to other players). Note that this is \textit{complementary} to robust estimation methods \citep{diakonikolas2019robust} that can reduce but not eliminate the impact of manipulations. 

The first relies on explicit side payments and requires transferable utility (that is, the existence of an outside resource $R$ such as money, that is valued equally and on the same scale as the reward $\rewardf$ by all players). The second is a modification of the FL protocol, in which the server adds noise to the estimates it sends to players that have sent suspicious updates. Importantly, for both mechanisms, the penalties can be computed by the server without the need for knowing $\alpha^i,b^i$ or other additional information beyond the players' updates, and without prior knowledge of the true distribution $\mathcal{D}$.

\subsection{Efficient solution for fully transferable utility}
\label{sec:mean_transferable_utility}

We first consider the case of transferable utility. In this case, we introduce a more general \textit{penalized reward} for player $i$, which is given by \[\rewardf^i_p =\rewardf^i(\theta^1, \ldots, \theta^N, \mu)  -  p^i(m^1, \ldots, m^N).\]
Here $p^i(m^1, \ldots, m^N)$ denotes a penalty paid by player $i$ to the server, measured in terms of the resource $R$ and depending on the messages that the clients sent. As players value the reward and resource equally, they optimize for $\rewardf^i_p$ instead of $\rewardf^i$.

Inspired by peer prediction \citep{miller2005eliciting} we consider a penalty for player $i$ proportional to the squared difference between that player's update and the average update sent by all players:
\begin{align}
	\label{eqn:penalty}
	p^i(m^1, \ldots, m^N)= C \|m^i - \bar{m}\|^2 ,
\end{align}
for some constant $C \geq 0$. This is a natural measure of the ``suspiciousness'' of the client's update. In order to prevent excessive payments for honest players, we redistribute the penalties as
\begin{align*}
	\label{eqn:penalty_re}
	p'^i(m^1, \ldots, m^N)=C \|m^i-\bar{m}\|^2   -  \sum_{j\neq i} \frac {C \|m^j-\bar{m}\|^2}{N-1}
\end{align*}
This redistribution also makes it possible to implement our mechanism decentrally with messages sent publically, if players are able to credibly commit to the implied payments to other players.

Theorem \ref{thm:penalty_re} establishes that this penalty can incentivize full honesty for the right choice of $C$:
\begin{theorem}
	\label{thm:penalty_re}
	In the setting of Theorem \ref{thm:main} for the penalized game with rewards \begin{align*}
		\rewardf^i_{p'} &= \frac{\sum_{j \neq i}\|\theta^j - \mu\|^2}{N-1} - \lambda_i \|\theta^i - \mu\|^2  - p'^i(m^1, \ldots, m^N)
	\end{align*}   the strategy profile $\alpha­­­­­^j = b^j =  \beta^j = 0$ for all $j$ is a Nash equilibrium, whenever $C > \frac{1}{(N-1)^2-1}$ and maximizes the sum of all players' rewards among equilibria whenever $\lambda_i \geq  1$ for all players. 
	
At this equilibrium, the expected penalty $p^i(m^1, \ldots, m^N)$ paid by each player $i$ is equal to $0$. Each player is incentivized to participate in the penalized game rather than relying on their own estimate, whenever $N>2$, the other $N-1$ players participate at the honest equilibrium, and $\lambda_i >  \frac{N}{(N-1)^2}$.
\end{theorem}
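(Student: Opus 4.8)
The plan is to fix an arbitrary player $i$, assume the other $N-1$ players play the honest profile $\alpha^j=b^j=\beta^j=0$, and show that honesty maximizes player $i$'s expected penalized reward $\mathbb{E}[\rewardf^i_{p'}]$ over all deviations $(\alpha^i,b^i,\beta^i)$. Since the rewards depend on the attack only through the scalar summaries $A^i:=\mathbb{E}[(\alpha^i)^2]$ and $B^i:=\mathbb{E}\|b^i\|^2$, it suffices to optimize over $\beta^i\in[0,1]$ and $S:=A^i+B^i\geq 0$. I would decompose $\mathbb{E}[\rewardf^i_{p'}]$ into three pieces: the competitive term $\frac{1}{N-1}\sum_{j\neq i}\mathbb{E}\|\theta^j-\mu\|^2$, the own-accuracy term $-\lambda_i\mathbb{E}\|\theta^i-\mu\|^2$, and the redistributed penalty $-\mathbb{E}[p'^i]$, evaluating each via Theorem~\ref{thm:main} and direct second-moment computations.

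For the first two pieces Theorem~\ref{thm:main} gives closed forms. Writing $V:=\frac{\sigma^2}{n}+\sigma_\star^2$ and $W:=\frac{\sigma^2}{Nn}+\frac{\sigma_\star^2}{N}=V/N$, the own MSE collapses (because all $j\neq i$ are honest, so the sums over $j\neq i$ in Theorem~\ref{thm:main} vanish, and the defence exactly subtracts player $i$'s own manipulation) to $W+(\beta^i)^2(V-W)$, which is independent of $S$ and minimized at $\beta^i=0$; hence the own-accuracy term is maximized at $\beta^i=0$ for any $\lambda_i\geq 0$. Each honest peer's MSE equals $W+\frac{S}{N^2}$, so the competitive term increases linearly in $S$ with slope $\frac{1}{N^2}$.

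The crux is the penalty. Setting $\epsilon^k:=\bar x^k-\mu$ and using that the $\epsilon^k$ are independent, zero-mean with $\mathbb{E}\|\epsilon^k\|^2=V$, that $\xi^i$ is zero-mean and independent of $(\epsilon^i,\alpha^i,b^i)$, and that assumption~(\ref{eqn:assumption}) kills $\mathbb{E}\langle\epsilon^i,b^i\rangle$, all cross terms drop and I expect to obtain $\mathbb{E}\|m^i-\bar m\|^2=\frac{(N-1)^2}{N^2}(V+S)+\frac{(N-1)V}{N^2}$ and, for each $j\neq i$, $\mathbb{E}\|m^j-\bar m\|^2=\frac{(N-1)V}{N}+\frac{S}{N^2}$. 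Substituting into the redistributed penalty $p'^i$, the $V$-dependent constants cancel and, combining with the $\frac{1}{N^2}$ slope from the competitive term (the own-accuracy term being independent of $S$), the total slope of the reward in $S$ becomes $\frac{1}{N^2}\big(1-C((N-1)^2-1)\big)$. This is nonpositive exactly when $C\geq\frac{1}{(N-1)^2-1}$, so for $C>\frac{1}{(N-1)^2-1}$ any manipulation $S>0$ strictly lowers the reward; combined with the optimality of $\beta^i=0$, the honest profile is a Nash equilibrium. The main obstacle is precisely this second-moment bookkeeping: correctly tracking how player $i$'s injected variance $S$ enters both its own penalty (with weight $\tfrac{(N-1)^2}{N^2}$) and, through $\bar m$, every peer's penalty (with weight $\tfrac{1}{N^2}$), since the sign of the net coefficient — and hence the threshold on $C$ — hinges on this cancellation.

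The zero-payment claim then follows by symmetry: at the all-honest profile the computation above gives $\mathbb{E}\|m^k-\bar m\|^2=\frac{N-1}{N}V$ for every $k$, so all peer penalties equal player $i$'s and the redistribution in $p'^i$ cancels, yielding $\mathbb{E}[p'^i]=0$. For the participation claim, I would compare player $i$'s equilibrium reward, which by the zero-payment fact and the honest MSE equals $(1-\lambda_i)W=(1-\lambda_i)V/N$, against the reward from abstaining and using only $\bar x^i$: then player $i$'s own MSE is $V$ while the remaining $N-1$ honest players re-collaborate, each attaining MSE $V/(N-1)$, so the competitive term becomes $\frac{V}{N-1}$ and the abstention reward is $\frac{V}{N-1}-\lambda_i V$. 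Requiring the equilibrium reward to dominate and rearranging yields a lower bound on $\lambda_i$ of the stated form (the exact constant depending on how the abstention baseline is modelled); the restriction $N>2$ enters because for $N=2$ the factor $(N-1)^2-1$ vanishes, so no finite $C$ deters manipulation and the honest equilibrium cannot be supported at all.
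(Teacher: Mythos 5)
Your proposal is correct and takes essentially the same route as the paper's own proof: both evaluate all terms via Theorem \ref{thm:main}, derive the identical second moments $\mathbb{E}\|m^k-\bar{m}\|^2$ (your formulas $\frac{(N-1)^2}{N^2}(V+S)+\frac{(N-1)V}{N^2}$ and $\frac{(N-1)V}{N}+\frac{S}{N^2}$ agree with the paper's equation (\ref{eqn:penalty_size})), and conclude from the affine dependence of the penalized reward on the injected second moment with net slope $\frac{1}{N^2}\left(1-C\left((N-1)^2-1\right)\right)$, together with the optimality of $\beta^i=0$, symmetry for the zero expected payment, and the same abstention comparison for participation. The only cosmetic differences are that you merge $\mathbb{E}[(\alpha^i)^2]$ and $\mathbb{E}\|b^i\|^2$ into a single parameter $S$ where the paper differentiates with respect to each separately (and proves the unredistributed variant first), and that your exact participation computation yields the threshold $\lambda_i>\frac{1}{(N-1)^2}$, which is weaker than, and hence establishes the claim under, the stated condition $\lambda_i>\frac{N}{(N-1)^2}$; the paper obtains its larger constant by discarding a positive $\frac{1}{N}$ term in the same comparison.
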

Intuitively, our incentive mechanism is effective because $\alpha^i$ and $b^i$ only affect the first term of the original reward of player $i$ (equation \ref{eqn:specific_loss}), as well as the penalty, to which they contribute at most as $\frac{1}{N^2}$ and $-\frac{(N-1)^2-1}{N^2}C$ respectively. At the honest equilibrium every player's MSE is in $O(\frac{1}{N})$, such that for large $N$ a player can strongly improve their own error by joining the collaboration, while barely affecting others' errors.  For a complete proof consider Appendix \ref{thm:penalty_re_appendix}. 

Theorem \ref{thm:penalty_re} shows that our mechanism fulfills two desirable properties: \textit{(budget) balance}  and \textit{(ex-ante) individual rationality/voluntary participation} \citep{jackson2014mechanism}. The first property means that the server neither makes a profit nor a loss. The second holds as long as $\lambda_i >  \frac{N}{(N-1)^2}$ and the other players take part in the optimization, and means that a player will receive better reward at the game's equilibrium, than when learning with their own data, despite the penalties assigned by the server. While non-honest equilibria exist, these are difficult to coordinate on (as they lack the natural symmetric Schelling point of honesty), while also yielding less total reward summed across players than honesty, such that there is no strong incentive for such coordination.

\subsection{Non-transferable utility}
We now discuss a way to achieve similar results in the case of non-transferable utility, where players' rewards are not translatable to monetary terms. Instead of modifying players' reward function, we modify the FL protocol, altering the server's messages to players. This effectively results in a robust learning algorithm that the server can implement. We do so by letting the server send noisier versions of its mean estimate to players whose messages suspiciously deviate from the average of all players' updates. The penalization scheme is designed in a way such that players receive expected rewards $\mathbb{E}(\rewardf^i)$ similar to the expected penalized rewards $\mathbb{E}(\rewardf^i_{p'})$ in the previous section. This is conceptually similar to methods against free-riding (e.g. \citet{karimireddy2022mechanisms}), which often tie the accuracy of the model a client receives in an FL setting to the client's overall contribution to model training.

\begin{theorem}
	\label{thm:penalty_noise}
	Consider the modified game with reward $\rewardf^i = \frac{\sum_{j \neq i}\|\theta^j - \mu\|^2}{N-1} - \lambda_i \|\theta^i - \mu\|^2,$ where player $i$ receives an estimate $\bar{m}­­­­­ + \sqrt{C} \epsilon^i \|m^i- \bar{m}\|$  for independent noise $\epsilon^i$ with mean $\mathbb{E}\epsilon^i=0$ and "variance" $\mathbb{E}\|\epsilon^i\|^2=1$, instead of the empirical mean $\bar{m}$, from the server. Then honesty $(\alpha^i=0,b^i=0,\beta^i = \frac{C}{C+1})$ is a Nash equilibrium, as long as $C > \frac{1}{\lambda_i (N-1)^2-1}$ and $\lambda_i > \frac{1}{(N-1)^2}$. Furthermore, honesty maximizes the sum of all players' rewards among equilibria whenever $\lambda_i  \geq  1$ for all players.

	For fixed $\lambda_i = \lambda, k>1$ and $C=\frac{k}{\lambda (N-1)^2-1}$, $\mathbb{E}\left(\|\theta^i - \mu\|^2\right) = O\left(\frac{\sigma^2}{Nn}+\frac{\sigma_\star^2}{N}\right)$ and players are incentivized to participate in the penalized game rather than relying on their own estimate, whenever $N>2$, the other $N-1$ players participate at the honest equilibrium and $\lambda \geq 1$.
\end{theorem}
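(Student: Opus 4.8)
The plan is to reduce everything to a modified version of Theorem~\ref{thm:main} and then solve the resulting single-player optimization. First I would re-derive the error of a player under the new protocol. Player $i$ still corrects for their own message and blends with $\bar{x}^i$ through $\beta^i$, so the only change relative to Theorem~\ref{thm:main} is that the corrected server estimate now carries the extra independent term $\sqrt{C}\,\epsilon^i\|m^i-\bar{m}\|$. Since $\epsilon^i$ is zero-mean and independent of all data and of $\xi^i$, every cross term involving it vanishes and its sole effect is additive:
\[
\mathbb{E}\|\theta^i-\mu\|^2 = \mathbb{E}\|\theta^i_{\mathrm{orig}}-\mu\|^2 + (1-\beta^i)^2 C\,\mathbb{E}\|m^i-\bar{m}\|^2,
\]
where $\theta^i_{\mathrm{orig}}$ is the estimate the same player would form under the original noiseless protocol, so the first summand is exactly the quantity given by Theorem~\ref{thm:main}. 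Evaluating at the profile where all $j\neq i$ are honest, a short variance computation gives $\mathbb{E}\|m^i-\bar m\|^2 = (N-1)V + \left(\tfrac{N-1}{N}\right)^2\big(\mathbb{E}(\alpha^i)^2+\mathbb{E}\|b^i\|^2\big)$ with $V:=\tfrac{\sigma^2}{Nn}+\tfrac{\sigma_\star^2}{N}$, where assumption~\eqref{eqn:assumption} and the independence of $\xi^i$ kill the bias and noise cross terms. The key identity I expect to reuse is that the deviation of a player's sample mean from the all-honest average $\tfrac1N\sum_j\bar{x}^j$ satisfies $\mathbb{E}\|\bar{x}^i-\tfrac1N\sum_j\bar{x}^j\|^2=(N-1)V$.

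Next I would write the expected reward of player $i$ as a function of their own deviation, holding the others honest. Two structural facts organize the argument: player $i$'s own $(\alpha^i,b^i)$ do not enter $\mathbb{E}\|\theta^i_{\mathrm{orig}}-\mu\|^2$ because the self-correction removes them, and $\beta^i$ does not enter any opponent's error. Hence the reward splits into a part that is linear in the attack magnitude $A:=\mathbb{E}(\alpha^i)^2+\mathbb{E}\|b^i\|^2$ (coming from the opponents' errors in the first reward term), minus $\lambda_i$ times player $i$'s own error, which depends on both $A$ and $\beta^i$. I would first set honest attack $A=0$ and minimize the own error over $\beta^i$; using $K=(N-1)V$ the minimizer is exactly $\beta^i=\tfrac{C}{C+1}$, matching the claimed equilibrium defense, and since opponents' errors are independent of $\beta^i$ this also maximizes the reward in $\beta^i$. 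I would then fix $\beta^i=\tfrac{C}{C+1}$ and read off the coefficient of $A$ in the reward; requiring it nonpositive yields $C+1\le \lambda_i C(N-1)^2$, i.e. $C\ge \tfrac{1}{\lambda_i(N-1)^2-1}$ together with $\lambda_i>\tfrac{1}{(N-1)^2}$, which are precisely the stated conditions.

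The step I expect to be the main obstacle is upgrading these two axis-aligned best-response checks to genuine global optimality over the joint strategy space. The difficulty is the coupling between caution and aggression: as a player raises their attack $A$, the self-inflicted noise $(1-\beta^i)^2 C(\tfrac{N-1}{N})^2A$ can be suppressed by simultaneously pushing $\beta^i$ toward $1$, whereas the collateral noise inflicted on honest opponents grows like $\tfrac{A}{(C+1)N^2}$ independently of $\beta^i$. Controlling this trade-off is exactly where the thresholds on $C$ and $\lambda_i$ must be shown to do their work. I would handle it by substituting the $A$-dependent minimizer $\beta^i=\tfrac{P(A)}{(N-1)V+P(A)}$, with $P(A)=C\big((N-1)V+(\tfrac{N-1}{N})^2A\big)$, back into the reward and then arguing that the resulting one-dimensional function of $A$ is maximized at $A=0$; establishing this monotonicity, in particular near the cautious limit $\beta^i\to1$, is the technical crux on which the whole equilibrium claim rests.

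Finally, for the rate and voluntary-participation claims I would evaluate the honest error $\mathbb{E}\|\theta^i-\mu\|^2 = V\big(1+(N-1)\tfrac{C}{C+1}\big)$ at $C=\tfrac{k}{\lambda(N-1)^2-1}$. Since $(N-1)\tfrac{C}{C+1}\le (N-1)C = O(1/N)$, this is $O(V)=O\big(\tfrac{\sigma^2}{Nn}+\tfrac{\sigma_\star^2}{N}\big)$, giving the stated rate. Comparing it with the own-data error $\tfrac{\sigma^2}{n}+\sigma_\star^2=NV$, the improvement in the $-\lambda_i\|\theta^i-\mu\|^2$ term is of order $\lambda(N-1)V$, which under $\lambda\ge1$ and $N\ge2$ strictly dominates, yielding individual rationality.
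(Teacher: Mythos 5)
Your first four steps coincide with the paper's own proof in Appendix~\ref{sec:proofs_MECH}: the server noise enters the MSE of Theorem~\ref{thm:main} purely additively as $(1-\beta^i)^2C\,\mathbb{E}\|m^i-\bar m\|^2$, the identity $\mathbb{E}\|m^i-\bar m\|^2=(N-1)V+\left(\tfrac{N-1}{N}\right)^2\left(\mathbb{E}(\alpha^i)^2+\mathbb{E}\|b^i\|^2\right)$ holds, the honest optimal defense is $\beta^i=\tfrac{C}{C+1}$, the coefficient of $A$ at fixed $\beta^i=\beta^j=\tfrac{C}{C+1}$ gives the threshold $C\left(\lambda_i(N-1)^2-1\right)>1$, and the rate bound is obtained exactly as you describe.

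The gap is at the step you yourself flag as the crux, and it is not a missing estimate: the monotonicity you propose to establish is false. Plug in your own formulas. With opponents held at $(\alpha^j=b^j=0,\ \beta^j=\tfrac{C}{C+1})$, player $i$'s error at the $A$-dependent minimizer $\beta^*(A)=\tfrac{P(A)}{(N-1)V+P(A)}$ equals $V+\tfrac{(N-1)V\,P(A)}{(N-1)V+P(A)}$, which is bounded above by $NV=\tfrac{\sigma^2}{n}+\sigma_\star^2$ uniformly in $A$: the noise penalty can never cost a player more than the full value of the server's signal, because $\beta^i=1$ is always available. Meanwhile each honest opponent's error grows by $(1-\beta^j)^2\tfrac{1+C}{N^2}A=\tfrac{A}{(C+1)N^2}$, linearly and without bound. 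Hence along your substitution path the reward behaves like $\mathrm{const}+\tfrac{A}{(C+1)N^2}-\lambda_i\left(V+\tfrac{(N-1)V\,P(A)}{(N-1)V+P(A)}\right)\longrightarrow+\infty$ as $A\to\infty$. Under the stated thresholds $A=0$ is a local maximum of this function, but never a global one; the cleanest witness is the joint deviation $\beta^i=1$, $\mathbb{E}(\alpha^i)^2=A\to\infty$, whose own error stays frozen at $\tfrac{\sigma^2}{n}+\sigma_\star^2$ while the first reward term diverges. So the one-dimensional function you plan to analyze is simply not maximized at $A=0$, and no refinement of the argument can make it so.

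For context, the paper's own proof does not bridge this gap either: it performs only the two axis-aligned checks you had already completed. Its negativity condition for the derivative in the attack magnitude explicitly assumes $\beta^j=\beta^i\neq 1$ (the deviator is not permitted to move $\beta^i$ together with $\alpha^i$), and optimality of $\beta^i=\tfrac{C}{C+1}$ is verified only at $\alpha^i=b^i=0$, after which the Nash conclusion is asserted. Your proposal therefore already reproduces the entire published argument, and your instinct that coordinate-wise checks do not add up to an equilibrium over the joint strategy space is correct. Contrast this with Theorem~\ref{thm:penalty_re}, where the monetary penalty is unbounded and enters the reward with a coefficient independent of $\beta^i$, so the reward is separable and linear in $A$ and the axis-aligned check genuinely covers all deviations; for the noise mechanism, the attack--defense coupling you identified is a real obstruction, not a technicality.
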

Essentially, the noise added by the server increases the expected MSE for player $i$ by $C \|m^i-\bar{m}\|^2$, producing similar incentives as in Theorem \ref{thm:penalty_re}. Theorem \ref{thm:penalty_noise} is proven in Appendix \ref{thm:penalty_noise_appendix}. We obtain voluntary participation if at least two other players participate at the honest equilibrium and $\lambda>1$. 

\paragraph{The benefits of modeling clients' rationality} Note that at the equilibrium, players' MSEs are of the same order as if all clients honestly communicated their sample means. In particular, when $\sigma_\star = 0$ (\ie homogeneous clients), this is in contrast to known negative results for worst-case poisoning attacks \citep{qiao2017learning} and single-round Byzantine robustness \citep{alistarh2018byzantine}. In this sense, our modified protocol acts as a robust and efficient collaborative learning algorithm. This is possible because our data corruption model is derived by explicitly modeling clients' incentives.

\section{Beyond mean estimation: stochastic gradient descent}
\label{sec:sgd}
We now extend the ideas from the last section to multi-round collaborative Stochastic Gradient Descent (SGD) in the FL setting. We show that under stronger assumptions, mechanisms similar to those described in Section \ref{sec:mean_transferable_utility} can still provide arbitrary bounds on manipulations by rational players. Again, this is \textit{complementary} to methods for robust federated learning such as median-based gradient aggregation that can reduce but not eliminate the impact of existing manipulations.

\subsection{The game and rewards}
We consider a $T$-round version of the game described in Section \ref{sec:framework}. The FL protocol is designed to minimize a loss function $f_{\mathcal{D}}(\theta)$ over a closed and convex set of model parameters $\theta\in W  \subset \mathbb{R}^d$ that contains the global minimizer of $f_{\mathcal{D}}(\theta)$ . At every time step $t$, each player is meant to communicate an update $g_t(\theta^s_{t}, x^i)$ with expectation $ \nabla f(\theta^s_{t})$.  Intuitively, $f_{\mathcal{D}}(\theta)$ can be thought of as an expected loss $\mathbb{E}_{x\sim\mathcal{D}}f_x(\theta)$, for which player $i$ computes approximate stochastic gradients as $\nabla f_{x^i_t}(\theta^s_{t-1})$ using their $t$-th sample $x^i_t$.  We denote by $e^i_t(\theta_{t}) = g_t(\theta^s_{t}, x^i) - \nabla f(\theta_{t})$ the difference between the gradient and the estimate, which is a deterministic function of $\theta­­­­­_{t}$ for fixed data $x^i$. Unlike in FedSGD, we imagine that each player uses a fresh sample to compute stochastic gradients at every time step, such that $\mathbb{E}_{x^i} e_t^i(\theta)=0$ for all $\theta$, and the  $e^i_t(\theta_{t})$ \textit{are independent across time and players}. The message sent by player $i$ is termed $m^i_t$. The server averages the received messages to compute a gradient estimate $\bar{m}_t=\frac{1}{N}\sum_i m_t^i$ and updates the parameter $\theta$ via $\theta^s_{t+1} = \Pi_W(\theta^s_{t} - \gamma_t \bar{m}_t)$, using a fixed learning rate schedule $\gamma_t$ and the projection $\Pi_W$ onto $W$. Finally, the server sends the updated $\theta^s_{t+1}$ to all players. 

\paragraph{Strategies and rewards}
We consider attacks of the form $m_t^i = g_t(\theta^s_{t}, x^i)  + \alpha^i_t \xi^i_t$, that is, the true gradient estimate plus random noise of the form $\alpha^i_t \xi^i_t$, where $\mathbb{E}\xi^i_t = 0$, $\mathbb{E}\|\xi^i_t\|^2 = 1$ and $\xi^i_t$ is sampled independent from the other $\xi^j_{t'}$ and the algorithm's trajectory. The set of attack strategies $\mathcal{A}^g$ is then described by the sequence of aggressiveness parameters $\alpha^i_t>0$, which we assume to be selected in advance, independent of the optimization trajectory. Since defenses were already ineffective for mean estimation, we directly focus on mechanisms and only consider  adjustments to the server's final estimate for the noise player $i$ added themselves in the final step $T$: $\theta_{T+1}^i = \theta_T - \frac{\alpha^i_{T}}{N}\xi^i_T$.  The assumption of non-adaptive strategies is needed to avoid complicated dependencies between successive SGD rounds and more sophisticated attack strategies are beyond the scope of our analysis. 

We do not consider a bias term for these attacks. Our unbiased attacks have natural interpretations, both in terms of \textit{hiding randomly selected data points} and \textit{differential privacy defenses}, and unlike in the mean estimation case, the effects of a fixed-direction attack on the loss $f_{\mathcal{D}}(\theta)$ can strongly depend on the current estimate $\theta^s_t$ and the attack's precise direction, making it substantially harder to analyze such attacks. That said, it is easy to see that if the server aims to optimize  $f_{\mathcal{D}}(\theta_{T+1})$ and is allowed to shift its estimate $\theta^s_t$ to defend against fixed-direction attacks at every step $t$, the fixed direction attacks would be neutralized by the server at any equilibrium, unless they inadvertently improved $f_{\mathcal{D}}(\theta_{T+1})$. 

Given these strategies and a Lipschitz function $U_i: \mathbb{R}^N \to \mathbb{R}$, player $i$ aims to maximize the reward 
\begin{equation}
\label{eqn:rew_sgd}
\rewardf^i_{U}= U^i(f(\theta^1_{T+1}),...,f(\theta^N_{T+1})).
\end{equation}

It is easy to see that this game does not always have an equilibrium and players are often incentivized to lie aggressively. In particular, we recover the mean estimation setting with $\beta^i=0$ when setting $U^i=\sum_{j\neq i} \|\theta - \theta^j\|^2 - \lambda_i \|\theta - \theta^i\|^2$, $T=1$, $\theta_1=0$ and $\gamma_1=0.5$, as the gradient $\frac{d}{d\theta} \|\theta - x^i_j\|^2$ equals $2(\theta -x^i_j)$ in that case.

Our next result establishes that players can be incentivized to be arbitrarily honest, using a penalty scheme similar to the one in Section \ref{sec:mean_transferable_utility}. Again, penalties are redistributed such that players' penalties have zero expectation whenever $\alpha^i_t = \alpha^j_t$ for all $i,j,t$. We set $\bar{m}_t = \frac{1}{N}\sum_j m_t^j$ and
\begin{align*}
\rewardf^i_{U_p}= U^i(f(\theta^1_{T+1}),...,f(\theta^N_{T+1}))  - \sum_{t=1}^{T} C_t \|m^i_t-\bar{m}_t\|^2 + \frac{1}{N-1} \sum_{k\neq i}\sum_{t=1}^{T} C_t \|m^k_t-\bar{m}_t\|^2 
\end{align*} 
for constants $C_t$. With this penalized reward we prove:
\begin{theorem}
\label{thm:sgd_opt}
	Assume $f$ is $B$-smooth and $L$-Lipschitz on $W$ and $m$-strongly convex on $\mathbb{R}^d$ (See Appendix \ref{sec:proofs_SGD} for definitions of these properties). Also assume that for all $i$ and $t$ the gradient noise $e_t^i$ is $B'$-Lipschitz with probability one  and that there exist scalars $M \geq 0$ and $M_V \geq 0$, such that for all $t$:
	\begin{equation}
		\label{eqn:variance_assumption}
		\mathbb{E}_{s_i} (\| e^i_t(\theta^s_{t})\|^2)    \leq M + M_V \|\nabla f(\theta^s_{t})\|_2^2.
	\end{equation}
	Set the learning rate $\gamma_t = \frac{4}{\eta m + t m}$  for an integer constant   $\eta> \max\{{ \frac{32(B^2 +B'^2)}{13 m^2}, \frac{4B(M_V/N + 1)}{m}-1, 1\}} $. 
	
	Then if $U^i$ is $l_1$-Lipschitz with constant $L_U$ for all players $i$, all players' best response strategies fulfill $\alpha_i^t \leq \frac{8LL_U N  } { C^t (N-2)m  \sqrt{T+\eta}}$ independent of other players' strategies.  If $\alpha_t^i = \alpha_t^j, \forall i,j,t$, each player's expected penalty is $0$. For $\epsilon>0$, $C^t \geq \frac{8LL_U N  }{ \epsilon (N-2) m \sqrt{T+\eta}} $ yields $\alpha_i^t\leq \epsilon$ for rational players. In that case, if in addition $W$ is bounded and we have that $P(\exists t\leq T:   \Pi_W(\theta^s_{t} - \gamma_t \bar{m}_t)\neq \theta^s_{t} - \gamma_t \bar{m}_t) \in O(\frac{1}{NT})$, we get $\mathbb{E}\left(f(\theta_{T+1}) - f(\theta^*)\right) \in O(\frac{1+M+\epsilon^2}{NT}) + O (\frac{1}{T^2})$.
\end{theorem}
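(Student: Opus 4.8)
The plan is to split the statement into a game-theoretic part, which bounds every player's best-response aggressiveness $\alpha^i_t$, and an optimization part, which feeds the resulting near-honest behaviour into a standard strongly-convex SGD analysis. Throughout I treat player $i$'s expected penalized reward as a function of $\alpha^i_t$ alone, with all other quantities held fixed, and characterise the best response by comparing the marginal benefit of increasing $\alpha^i_t$ against the marginal penalty it incurs.

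First I would compute the $\alpha^i_t$-dependence of the expected penalty. Writing $m^i_t = g_t(\theta^s_t,x^i)+\alpha^i_t\xi^i_t$ and using that $\xi^i_t$ is zero-mean, unit-variance and independent of everything else, the injected noise enters $m^i_t-\bar m_t$ with weight $\tfrac{N-1}{N}$ and enters each $m^k_t-\bar m_t$ ($k\neq i$) with weight $-\tfrac1N$. Hence increasing $\alpha^i_t$ raises $\mathbb{E}\|m^i_t-\bar m_t\|^2$ by $(\tfrac{N-1}{N})^2(\alpha^i_t)^2$ and each redistributed term by $\tfrac1{N^2}(\alpha^i_t)^2$, so the net penalty attributable to $\alpha^i_t$ is $C_t\frac{(N-1)^2-1}{N^2}(\alpha^i_t)^2=C_t\frac{N-2}{N}(\alpha^i_t)^2$, independent of the other players' strategies. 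The same symmetric bookkeeping shows that when $\alpha^i_t=\alpha^j_t$ for all $i,j$ the penalty terms are identically distributed across players, so the expected net penalty is $0$, giving the zero-payment claim.

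The crux is bounding the marginal benefit $\frac{\partial}{\partial\alpha^i_t}\mathbb{E}[U^i]$. Since $U^i$ is $\ell_1$-Lipschitz with constant $L_U$ and $f$ is $L$-Lipschitz, this is at most $LL_U\sum_j\mathbb{E}\big\|\partial\theta^j_{T+1}/\partial\alpha^i_t\big\|$, and player $i$'s noise reaches every final model only through $\bar m_t$ with weight $\tfrac1N$, scaled by $\gamma_t$ and then propagated through the remaining maps $\theta\mapsto\Pi_W(\theta-\gamma_s\bar m_s(\theta))$. The main obstacle is controlling this propagation: I would use that $f$ is $m$-strongly convex and $B$-smooth, that the step-size condition forces $\gamma_s B\leq1$, and that the gradient noise $e^i_s$ is $B'$-Lipschitz, so that with probability one each stochastic-gradient step is a contraction with factor at most $1-\gamma_s m$, while the $O(1/(NT))$ bound on the projection ever being active lets me discard $\Pi_W$ up to a lower-order event. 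Telescoping these factors for $\gamma_s m=\tfrac{4}{\eta+s}$ and multiplying by $\gamma_t/N$ bounds $\mathbb{E}\|\partial\theta^j_{T+1}/\partial\alpha^i_t\|$ by a quantity of order $\frac{1}{m\sqrt{T+\eta}}$ uniformly in $t$; summing over the $N$ players cancels the $1/N$. Equating this marginal benefit with the marginal penalty $2C_t\frac{N-2}{N}\alpha^i_t$ yields the stated best-response bound, and substituting $C_t\geq\frac{8LL_UN}{\epsilon(N-2)m\sqrt{T+\eta}}$ forces $\alpha^i_t\leq\epsilon$.

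Finally I would feed $\alpha^i_t\leq\epsilon$ into the classical strongly-convex recursion $\mathbb{E}\|\theta^s_{t+1}-\theta^*\|^2\leq(1-\gamma_t m)\mathbb{E}\|\theta^s_t-\theta^*\|^2+\gamma_t^2V_t$, where averaging $N$ players reduces the per-step variance to $V_t\in O\big(\frac{M+1+\epsilon^2}{N}\big)$: the attack noise contributes $\tfrac1{N^2}\sum_i(\alpha^i_t)^2\leq\tfrac{\epsilon^2}{N}$, while the assumption $\mathbb{E}\|e^i_t\|^2\leq M+M_V\|\nabla f\|^2$ controls the honest part with the $M_V/N$ term absorbed by the step-size condition. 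Unrolling with $\gamma_t=\frac{4}{m(\eta+t)}$ gives $\mathbb{E}\|\theta^s_{T+1}-\theta^*\|^2\in O\big(\frac{1+M+\epsilon^2}{NT}\big)+O\big(\frac1{T^2}\big)$, where the second term collects the fast-decaying initialization bias and the rare-projection contribution; converting to function value through $B$-smoothness via $f(\theta)-f(\theta^*)\leq\tfrac B2\|\theta-\theta^*\|^2$, together with the final-step self-correction $\theta^i_{T+1}=\theta^s_T-\tfrac{\alpha^i_T}{N}\xi^i_T$ removing player $i$'s own last-round noise, yields the claimed rate. I expect the trajectory-sensitivity estimate of the previous paragraph to be the main difficulty, as it must simultaneously handle the stochasticity of the updates, the projection, and the precise telescoping of contraction factors under the chosen learning rate.
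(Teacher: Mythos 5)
Your overall architecture matches the paper's: the quadratic penalty bookkeeping with net coefficient $C_t\frac{N-2}{N}$ after redistribution, the zero-expected-penalty-by-symmetry claim, a trajectory-sensitivity bound capping the marginal benefit of cheating, the comparison of that linear benefit against the quadratic penalty to bound best responses, and a Chung-type strongly-convex SGD rate with the projection event treated as a rare, bounded-loss exception. The gap is in the one step you yourself flag as the crux: the propagation bound. You claim that, with probability one, each step $\theta \mapsto \Pi_W\bigl(\theta - \gamma_s(\nabla f(\theta) + \bar e_s(\theta))\bigr)$ is a contraction with factor $1-\gamma_s m$. That is false in general: the gradient noise is only assumed $B'$-Lipschitz, so the pathwise Lipschitz constant of the step map is bounded only by $1-\gamma_s m + \gamma_s B'$, which exceeds $1$ whenever $B' \geq m$ --- and nothing in the hypotheses gives $B' < m$. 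The paper avoids this by comparing the two trajectories in expectation on \emph{squared} distances: conditioning on $(\theta_t,\theta_t')$, the cross term involving $\bar e_t(\theta_t)-\bar e_t(\theta_t')$ vanishes because the noise has zero conditional mean, so $B'$ enters only at second order as $\gamma_t^2 B'^2$, which is then absorbed by the step-size/$\eta$ condition to give the per-step factor $1+\gamma_t^2(B^2+B'^2)-2\gamma_t m \leq 1-\tfrac{1.5}{t+\eta}$ (see the proofs of Theorem \ref{thm:sgd_appendix} and Theorem \ref{thm:sgd_optimal_appendix}).

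A second symptom of the same problem: the $\frac{1}{m\sqrt{T+\eta}}$ sensitivity you assert does not follow from your own telescoping. A deterministic product of factors $(1-\gamma_s m)$ with $\gamma_s m = \frac{4}{\eta+s}$, multiplied by the injection weight $\gamma_t/N$, would give a bound of order $\frac{1}{Nm(T+\eta)}$; the $\sqrt{T+\eta}$ in the theorem arises precisely because the correct argument controls $\mathbb{E}\|\theta_{T+1}-\theta_{T+1}'\|^2$ via the recursion $x_{k+1}\leq(1-\frac{1.5}{k+1})x_k + \frac{c}{(k+1)^2}$ and the non-asymptotic Chung-type Lemma \ref{lemma:recurse}, and only then takes a square root before applying the $L$-Lipschitzness of $f$. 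So the rate you state is the right one for the theorem, but it is not what your pathwise-contraction argument would produce; repairing the proof requires switching to the expected-squared-distance recursion, which is exactly the paper's route. The remaining components of your plan (penalty accounting, the $L_U N$ scaling for general $\ell_1$-Lipschitz utilities, and the final convergence step) are consistent with Theorems \ref{thm:sgd_re_appendix}, \ref{thm:utility_sgd_appendix} and Lemma \ref{lemma:clean_sgd}.
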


Intuitively, small perturbations of gradient estimates sent to the server should only have a small effect on the final learnt model. Formally, our assumptions allow us to inductively prove bounds on the difference between the values of  $\mathbb{E}f(\theta^j_{T+1})$ in a clean $(\alpha_t^i=0)$ scenario and a scenario with other values of $\alpha_t^i$. By setting $C^t$ large enough, we can then ensure that the penalties paid by a dishonest player always outweigh their effect on the final model. The condition on $ \Pi_W$ ensures that SGD is not slowed down by projections and holds for "sufficiently large" distances between the boundary of $W$ and $\theta^1$. See Appendix \ref{thm:utility_sgd_appendix} for more details on this and the theorem proof. Theorem \ref{thm:sgd_opt} implies that for sufficiently large $C^t$, despite \textit{all players acting strategically}, the model converges at speed comparable to when all clients share clean updates (where the convergence rate is $O(\frac{1+M}{NT}) + O (\frac{1}{T^2})$), thereby ensuring full learning benefits from the collaboration. Moreover, as long as all players are equally honest, this is achieved with zero expected penalties for players and thus with budget balance.

\section{Experimental results}
To verify that our mechanisms can work for SGD in the non-convex case we simulate FedSGD \citep{mcmahan2017communication} with clients corrupting their messages to different degrees, and record how players' rewards and penalties are affected by their aggressiveness $\alpha$ for different penalty constants $C$. The $\alpha_t^i$ that empirically maximizes a player's reward is an approximate best response for a given $C$ and fixed $\alpha_t^j$ for $j\neq i$, and should thus be small for a successful mechanism. 

First, we simulate FedSGD, treating each writer as a client, to train a CNN-classifier using the architecture provided by \citet{caldas2018leaf} for the FeMNIST dataset that consists of characters and numbers written by different writers. Second, we train a two-layer linear classifier with $384$ hidden neurons on top of frozen "bert-base-uncased" embeddings on the Twitter Sentiment Analysis dataset from \citet{caldas2018leaf}. In both cases, we randomly select $m=3$ clients and compute a gradient estimate $g_t^i$ for the cross entropy loss $f$ using a single batch containing $90\%$ of the data provided by the corresponding writer at time step $t<T=10650$ \footnote{$T$ was selected as $3$ times the number of writers in the FeMNIST dataset, as reported by  \citet{caldas2018leaf}.}. We test on the remaining $10\%$ of the data. 

\begin{figure}[h!]
	\centering
	\begin{minipage}[b]{0.49\textwidth}
		\includegraphics[width=\textwidth]{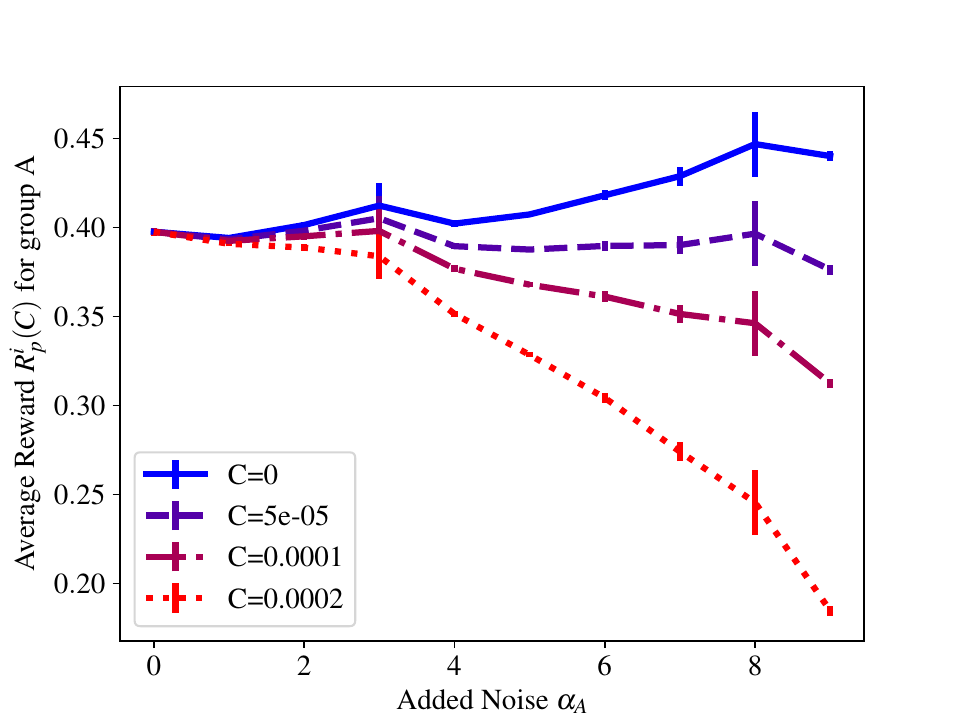}
		\caption{FeMNIST Dataset}
		\label{fig:basic}
	\end{minipage}
	\hfill
		\begin{minipage}[b]{0.49\textwidth}
		\includegraphics[width=\textwidth]{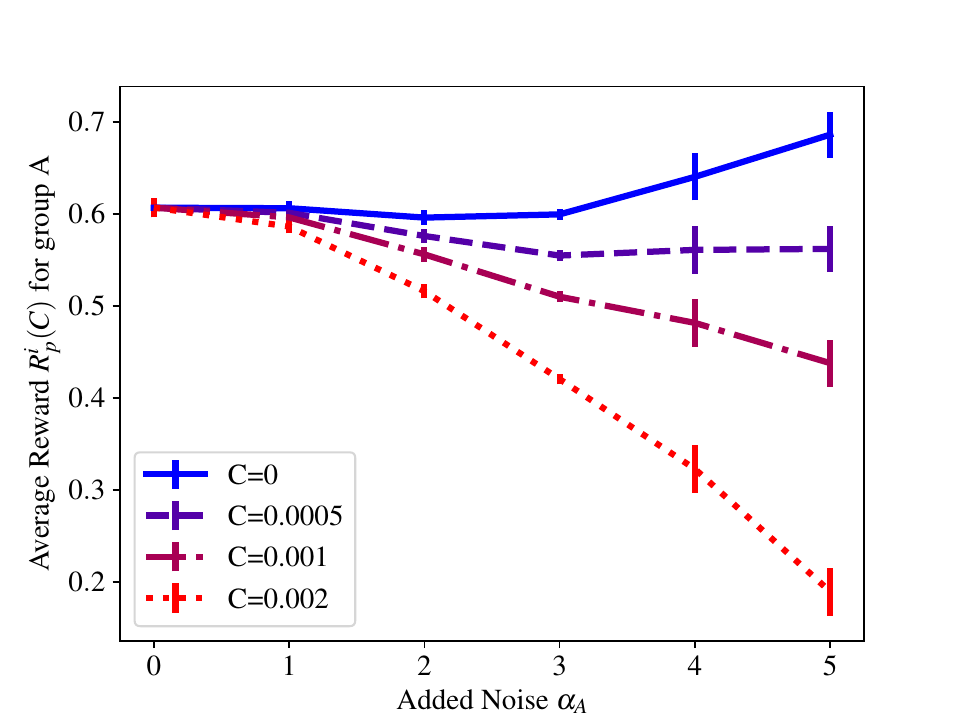}
		\caption{Twitter Dataset}
		\label{fig:basic}
	\end{minipage}
	\hfill
		\caption{Average reward $\rewardf^i_p(C)$ received by players in group $A$ for $\alpha_B=0$ and varying $\alpha_A$. Different colors represent different penalty weights $C$. Results are averaged over 10 runs and error bars show the standard error.}
	\label{fig:main}
\end{figure}

For both experiments, we randomly split writers into two groups $A$ and $B$ containing one and two thirds of the writers respectively, and corrupt the gradient estimates $(m_t^i)_l =(g_t^i)_l+\alpha_A (\xi^i_t)_l$ sent by players in group $A$ for each weight tensor and bias vector $l$ separately, by adding isotropic normal noise $(\xi_t^i)_l$ with "variance" $E\|(\xi_t^i)_l\|^2=1$. We do the same with $\alpha_B$ for group $B$. At each step, the three corrupted gradients are then averaged (weighted by the corresponding writers' datasets' sizes) to $\bar{m}_t$, which is used to update our neural network's parameters $\theta_t$ with learning rate $0.06$, i.e. $\theta_{t+1}=\theta_t-0.06\bar{m}_t.$ Unlike in our theorems, writers reuse the same data points whenever they calculate gradients. In the clean case $(\alpha_A=\alpha_B=0)$, our final models achieve a test accuracy of $86\%$ on FeMNIST \footnote{For comparison, \citet{caldas2018leaf} aim for an accuracy threshold of $75\%$ using $5\%$ of the training data.} and $68\%$ on Twitter. We record both the cross-entropy loss $f$ achieved by the final model $\theta_T$ on the test set, as well as the sum of the squared deviations $\|m_t^i - \bar{m}_t\|^2$ incurred by each individual client $i$ across all steps. This allows us to calculate the estimated reward $\rewardf^i_p(C) = f(\theta_{T}) - \sum_{t=0}^{T-1} \mathbb{I}_t(i) C \|m^i_t-\bar{m}_t\|^2 + \frac{1}{2} \sum_{k\neq i}\sum_{t=0}^{T-1} \mathbb{I}_t(i)  \mathbb{I}_t(k)  C \|m_t^k-\bar{m}_t\|^2 $ received by every player for penalty weights $C$ held constant over time, where $\mathbb{I}_t(j)$ is a binary indicator equal to $1$ whenever player $j$ provided an update at time $t$.

Figure \ref{fig:main} shows the average reward $\rewardf^i_p(C)$ received by players in group $A$ for $\alpha_B=0$ and $\alpha_A$ varying on the x-axis for different penalty weights $C$. It clearly shows that penalization decreases players' gains from adding noise even in the non-convex case, and that near-zero noise is optimal for players given sufficiently large penalty weights $C$. At the same time, despite client heterogeneity, the penalties paid by honest players are small: In the FeMNIST experiments, if all players are honest, a large majority $(90\%)$ of players end up paying less than $0.006$ on average (over $10$ rounds), even at $C=0.0002$. This is almost an order of magnitude under the increase in loss from moving from $\alpha_A=0$ to $\alpha_A=9$, which is already disincentivized for the substantially smaller penalty constant $C=5e-5$. On the Twitter dataset, noise has a larger effect on the loss (the loss is degraded by $0.079$ already at noise level $\alpha_A=5$ compared to $0.042$ at noise level $\alpha_A=9$ for FeMNIST). This means that larger penalty weights are necessary to incentivize honesty. Correspondingly, the $90$th percentile of penalties paid at the largest considered penalty level $C=0.002$ is also larger ($0.034$) than on FeMNIST. Additional experimental results can be found in Appendix \ref{sec:add_ex}.
\section{Conclusion}
\label{sec:conclusion}
In this paper, we studied a framework for FL with strategic agents, where players compete and aim to not only improve their model but also damage others' models. We analyzed both the single- and multi-round version of the problem for a natural class of strategies and goals and showed how to design mechanisms that incentivize rational players to behave honestly. In addition, we empirically demonstrated that our approach works on realistic data outside the bounds of our theoretical results.

\section*{Acknowledgments}
This research was partially funded by the Ministry of Education and Science of Bulgaria (support for INSAIT, part of the Bulgarian National Roadmap for Research Infrastructure). Florian Dorner is grateful for financial support from the Max Planck
ETH Center for Learning Systems (CLS). The authors would like to thank Mark Vero and Nikita Tsoy for their helpful feedback on earlier versions of this manuscript. The authors would also like to thank Dimitar Chakarov for pointing out a mistake in the code of a previous version of this work.

\bibliography{ms}
\bibliographystyle{icml2023}

\appendix
\clearpage
\onecolumn

\section{Ethical considerations}
Our theoretical results establish that penalties average out to zero in expectation, but substantial client heterogeneity can still cause large payments for individual honest participants. Despite our experimental results indicating that variance can be manageable even for real-world data, this is problematic for two reasons: The first is about fairness: Paying some honest participants large amounts, while demanding large amounts from others, based on what essentially amounts to luck is problematic, especially when the participants are individuals rather than firms, and when the nature of the problem might make it difficult for participants to gauge the order of magnitude of payments in advance. The second is about diversity: Clients that expect their data to deviate strongly from the mean of the overall data distribution might opt to not participate in FL with our mechanisms, even though underrepresented types of clients can provide data that is crucial to a model's broad performance and generalization. In our formalism, this problem is obscured by the assumption that all participants sample their data independently from the same distribution, and are unable to predict whether or not their data represents outliers. 

Correspondingly, it is important to keep penalty weights $C$ as low as possible to reduce the likelihood of overwhelmingly large penalties, only apply our framework in the context of firms rather than individuals (for whom competitive incentives might often play less of a direct role either way), as well as ensure that our assumptions about a common data distributions are plausible for the problem at hand. The former can be particularly challenging for non-convex problems, or convex problems with unknown problem parameters, for which no strong candidate for $C$ can be established theoretically.  

Furthermore, our results do not establish collusion-proofness of our mechanism. While we expect our mechanism to be collusion-proof against small coalitions, there is a problem once the colluding coalition is large enough to significantly affect the mean estimate, as the shifted mean would reduce the penalty paid by each member of the coalition.
\section{Additional results}
\subsection{Nash equilibria  in the mean estimation game under bounded attacks} The conclusion that no Nash equilibria exist in the mean estimation game described in \ref{sec:special_case} is rather intuitive, since all participants have an incentive to send as modified an update as possible, therefore damaging the other players' estimates. In practice, however, attacking in an unbounded manner, that is, sending updates very far from the true mean, may not be plausible. Indeed, if most players send their true mean, one expects the variance of the estimates that the server receives to be of order $\mathcal{O}\left(\frac{\sigma^2}{n}+\sigma_\star^2\right)$. Therefore, players might in practice be reluctant to send estimates that are further than $\mathrm{A}\sqrt{\frac{\sigma^2}{n}+\sigma_\star^2}$ away from their true local mean, for some constant $\mathrm{A}$.\\
\\
We therefore consider the same game as before, in the case when the parameters $\alpha^i$ are bounded by $A$. Denote the resulting set of attack strategies by $\mathscr{A}_{\mathrm{A}}$. Since $\mathscr{A}_{\mathrm{A}} \subset \mathscr{A}$, Theorem \ref{thm:main} holds for the joint set of strategies $(\mathscr{A}_{\mathrm{A}}\times \mathscr{D})^N$. Then we have the following 
\begin{corollary}
	\label{cor:bounded_nash_appendix}
	In the setup of Theorem \ref{thm:main}, if the set of available strategies is $\mathscr{A}_{\mathrm{A}} \times \mathscr{D}$ for some constant $\mathrm{A} > 0$, the only Nash equilibria of the game with $b^i(x^i)=0$ fixed for all players $i$ are the strategy profiles for which:
	\begin{equation}
		\alpha_i = \mathrm{A} \quad \text{and} \quad  \beta^i = \frac{\mathrm{A}^2}{(\frac{\sigma^2}{n}+\sigma_\star^2)N + \mathrm{A}^2} \quad \forall i \in [N].
	\end{equation}
	Furthermore, at each of these equilibria the value of mean squared error of the estimate of each player $i$ is \[\mathbb{E}\left(\|\theta^i - \mu\|^2\right)= (\frac{\sigma^2}{n}+\sigma^2_\star) \frac{ (1+\frac{1}{\frac{\sigma^2}{n}+\sigma^2_\star}A^2)  }{(N+\frac{1}{\frac{\sigma^2}{n}+\sigma^2_\star}A^2)}\]
\end{corollary}

As a result, whenever $\mathrm{A} = \mathcal{O}(1)$, each player's estimate at any Nash equilibrium achieves a mean squared error of $\mathcal{O}\left(\frac{\sigma^2}{Nn} + \frac{\sigma^2_\star}{N} \right)$, which is of the same order as the MSE of the estimates that would have been obtained in a fully collaborative setting.

\subsection{Additional details on experiments}
\label{sec:add_ex}
The network we train is based on the network used in the LEAF repository \footnote{https://github.com/TalwalkarLab/leaf/blob/master/models/femnist/cnn.py} but implemented in pytorch \cite{paszke2019pytorch}. It consists of two convolutional layers with relu activations, kernel size $5$, $(2,2)$ padding and $32$ and $64$ filters, respectively, each followed by max pooling with kernel size and stride $2$. After the convolutional layers, there is a single hidden dense layer with $2048$ neurons and a relu activation, and a dense output layer. All experiments were conducted using a single GPU each\footnote{We used assigned GPUs from a cluster that employs mostly, but not exclusively Nvidia V100 GPUs} per run.

\begin{figure}[h!]
	\centering
	\begin{minipage}[b]{0.3\textwidth}
		\centering
		\includegraphics[width=\textwidth]{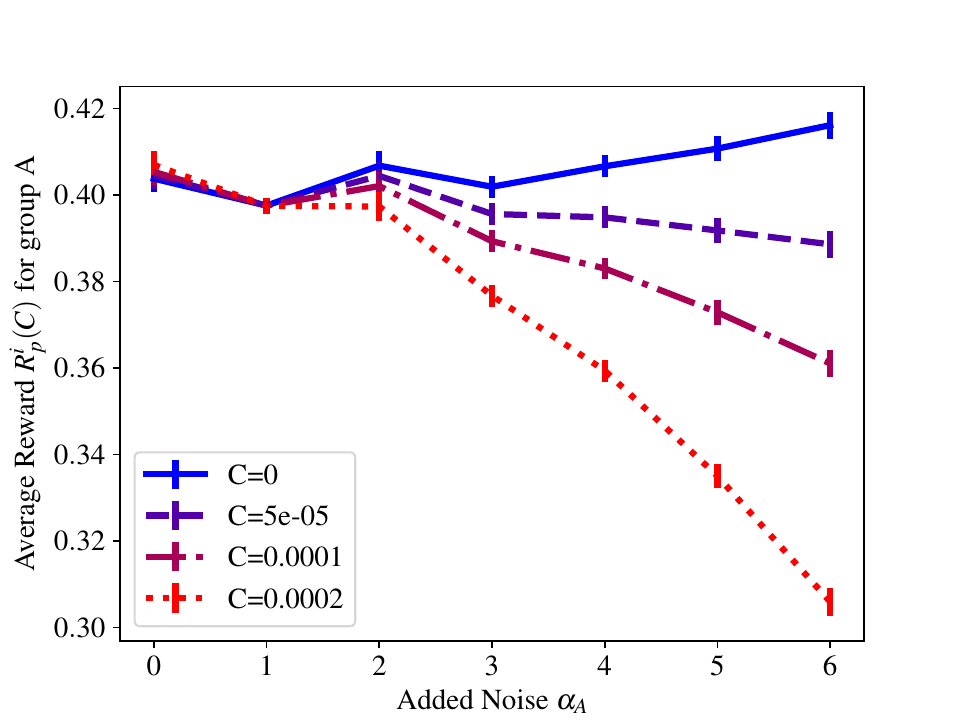}
		\caption{$\alpha_B=1$, $\alpha_A$ varying.}
			\label{fig:add1}
	\end{minipage}
	\hfill
	\begin{minipage}[b]{0.3\textwidth}
		\centering
		\includegraphics[width=\textwidth]{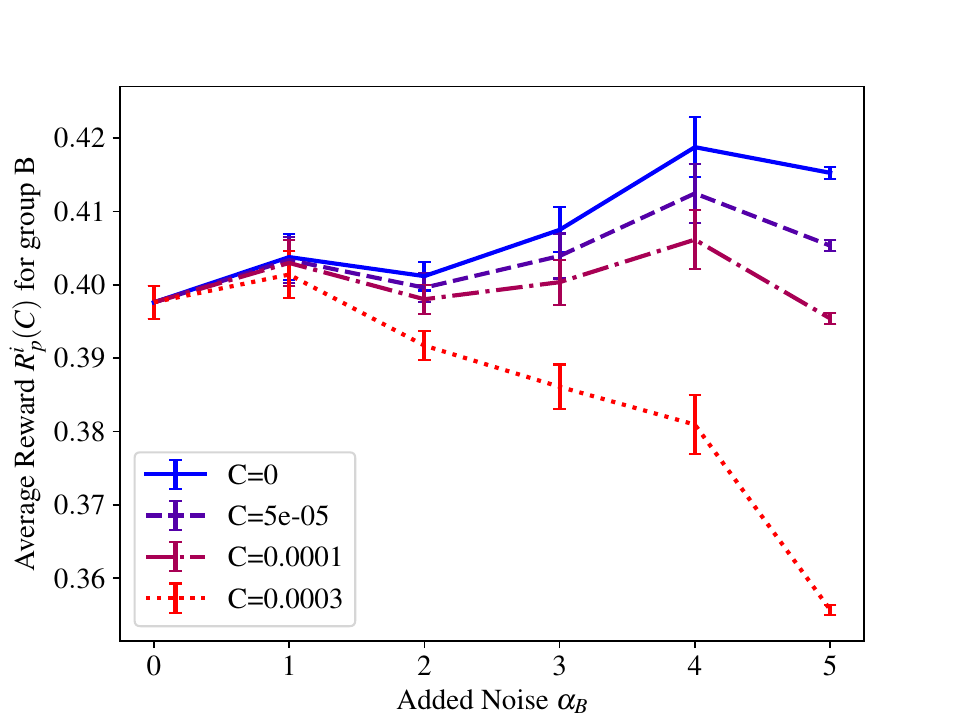}
		\caption{$\alpha_A=0$, $\alpha_B$ varying.}
			\label{fig:add2}
	\end{minipage}
	\hfill
	\begin{minipage}[b]{0.3\textwidth}
		\centering
		\includegraphics[width=\textwidth]{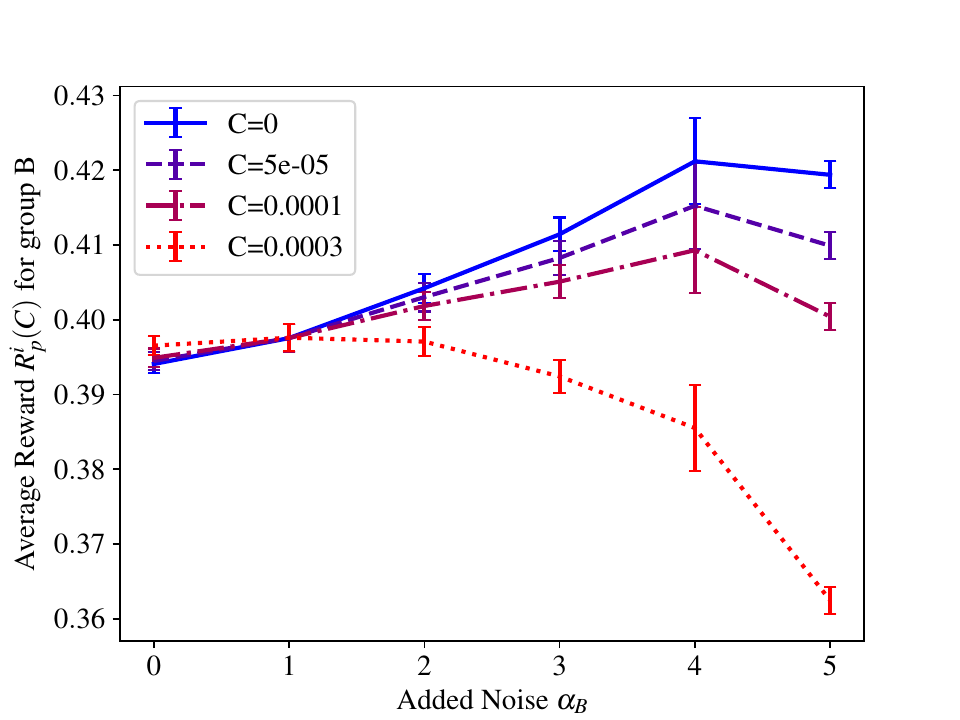}
		\caption{$\alpha_A=1$, $\alpha_B$ varying.}
			\label{fig:add3}
	\end{minipage}
	\caption{Average reward $\rewardf^i_p(C)$ received by players in a group for different values of $\alpha_A$ and $\alpha_B$. Different colors represent different penalty weights $C$. Results are averaged over 10 runs and error bars show the standard error.}
	\label{fig:three graphs}
\end{figure}

We downloaded the FeMNIST and Twitter datasets using the code provided at https://github.com/TalwalkarLab/leaf/tree/master/data/femnist, opting not to filter writers that only have produced a small amount of samples. Correspondingly, our FeMNIST dataset contains 817851 examples of handwritten digits and characters written by a total of 3597 writers rather than the 805263 samples from 3550 writers reported in \citet{caldas2018leaf}.

Figures \ref{fig:add1}, \ref{fig:add2}, \ref{fig:add3} show results similar to Figure \ref{fig:basic} for $\alpha_B$ fixed to $1$ instead of $0$ (Figure \ref{fig:add1}), or $\alpha_B$ varying while $\alpha_A$ is fixed to $0$ (Figure \ref{fig:add2}), or $1$ (Figure \ref{fig:add3}), respectively. As payments are redistributed, the average payments for players in group $B$ increase slower with $\alpha_B$, as each individual's increase in payment is partially balanced out by an increase in received payments from encountering other members of group  $B$ (Figures \ref{fig:add2}, \ref{fig:add3}). Meanwhile, figures \ref{fig:add1} and \ref{fig:add3} hint at honest players receiving money when others are adding noise: In both cases, players of one group receive slightly higher reward for larger penalties when they are honest ($\alpha=0$) while the other group slightly cheats ($\alpha=1$). 

It is worth noting, that we do not perform a projection step in our experiments, such that numerical instabilities become an issue for large values of $\alpha$. In particular, for $\alpha_A>6$ or $\alpha_B>6$ we regularly observed NaN gradients on FeMNIST for one or more of our 10 runs.

Figure \ref{fig:hist} show histograms over the total penalty for $C=0.0002$ (FeMNIST) and $C=0.002$ (Twitter) paid by each individual client over the whole 10650 steps for the honest case of $\alpha_A=\alpha_B=0$, averaged over 10 runs. Clearly, most penalties are on the order of $0.01$, and smaller than the negative effects of larger noise values (which approach the order of $0.1$), which are strongly disincentivized by the considered penalty values. 

\begin{figure}[h!]
	\centering
	\begin{minipage}[b]{0.49\textwidth}
		\includegraphics[width=\textwidth]{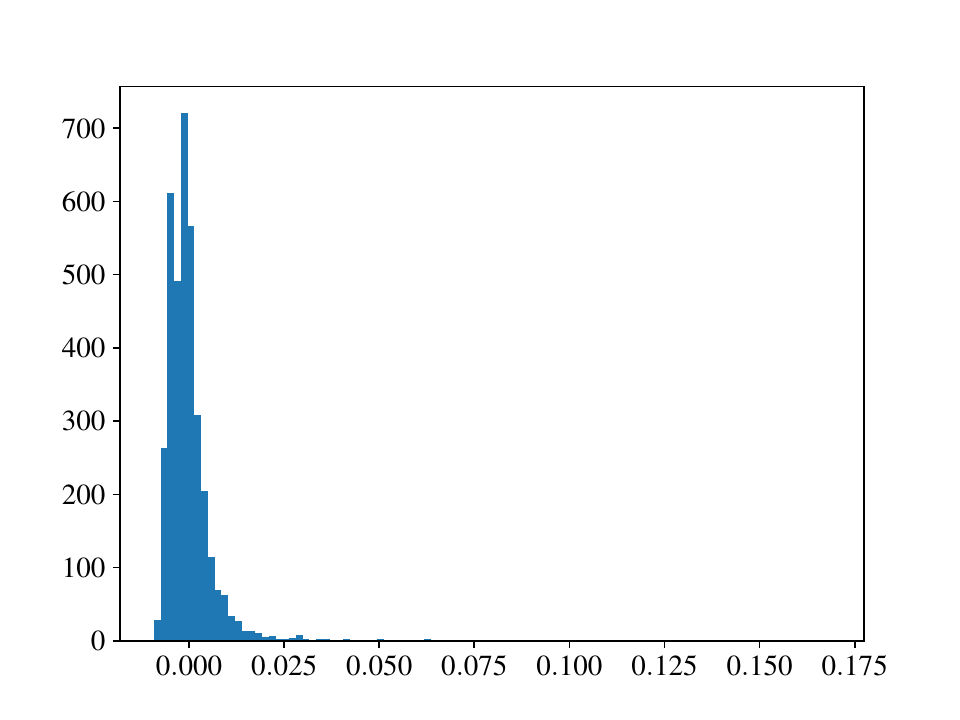}
		\caption{FeMNIST with penalty  $C=0.0002$}
	\end{minipage}
	\hfill
	\begin{minipage}[b]{0.49\textwidth}
		\includegraphics[width=\textwidth]{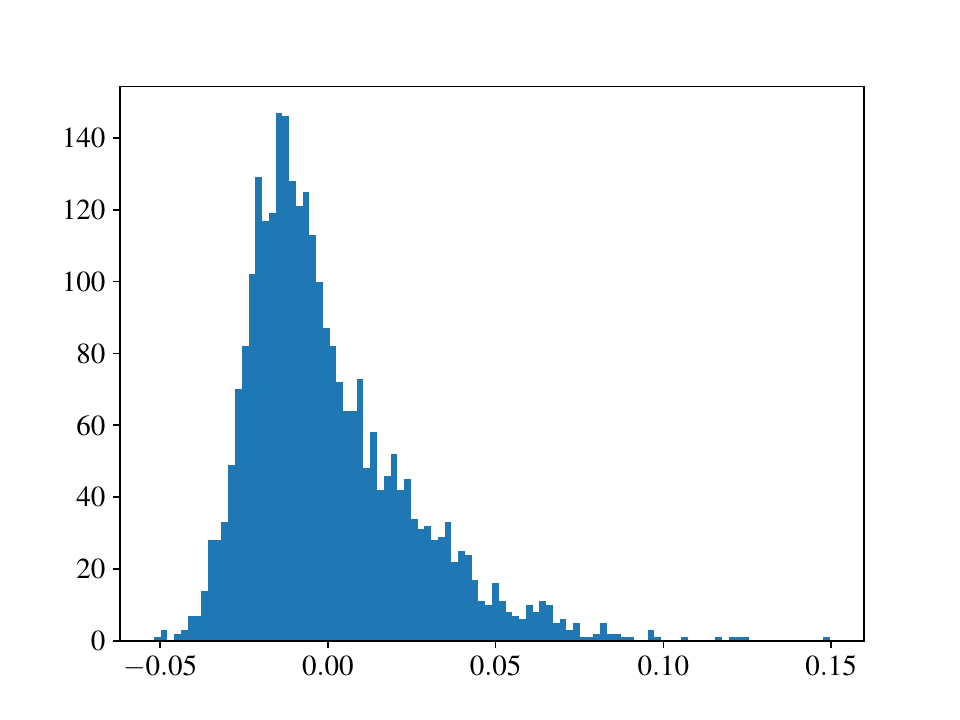}
		\caption{Twitter with penalty $C=0.002$}
	\end{minipage}
	\hfill
    \caption{Histogram of average (over 10 runs) total penalties paid by players for $\alpha_A=\alpha_B=0$}
	\label{fig:hist}
\end{figure}

Figure \ref{fig:median} shows additional results using SGD with Median-based rather than Mean-based aggregation as a baseline that is more robust to noisy gradients. We can see that while using Median-based aggregation helps, players can affect the loss as much as before by adding even more noise (see Figure \ref{fig:median1}), such that adding noise is still incentivized. At the same time, our mechanism still works: As can be seen in Figure \ref{fig:median2}, rewards still decrease with increased noise for sufficiently large $C$. Correspondingly, players remain disincentivized to add noise. This empirically suggests that our mechanism can also be applied to more advanced federated learning protocols that go beyond simple SGD with Mean-based aggregation.

\begin{figure}[h!]
	\centering

	\begin{minipage}[t]{0.45\textwidth}
		\centering
		\includegraphics[width=\textwidth]{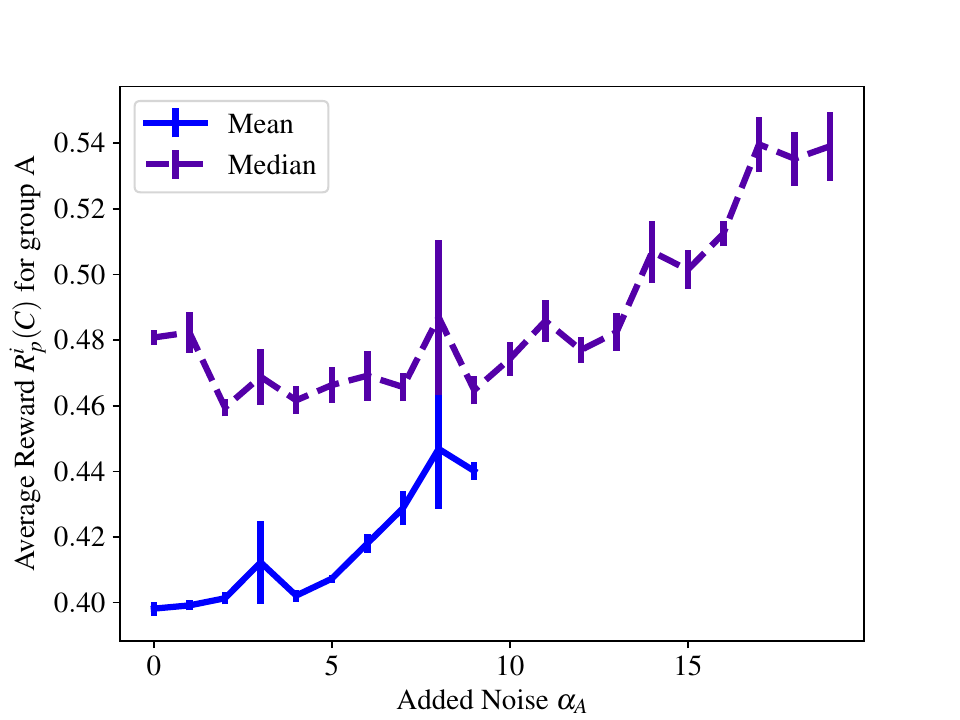}
		\caption{Mean vs Median-based aggregation in the unpenalized setting ($C=0$).}
		\label{fig:median1}
	\end{minipage}
		\hfill
	\begin{minipage}[t]{0.45\textwidth}
		\centering
		\includegraphics[width=\textwidth]{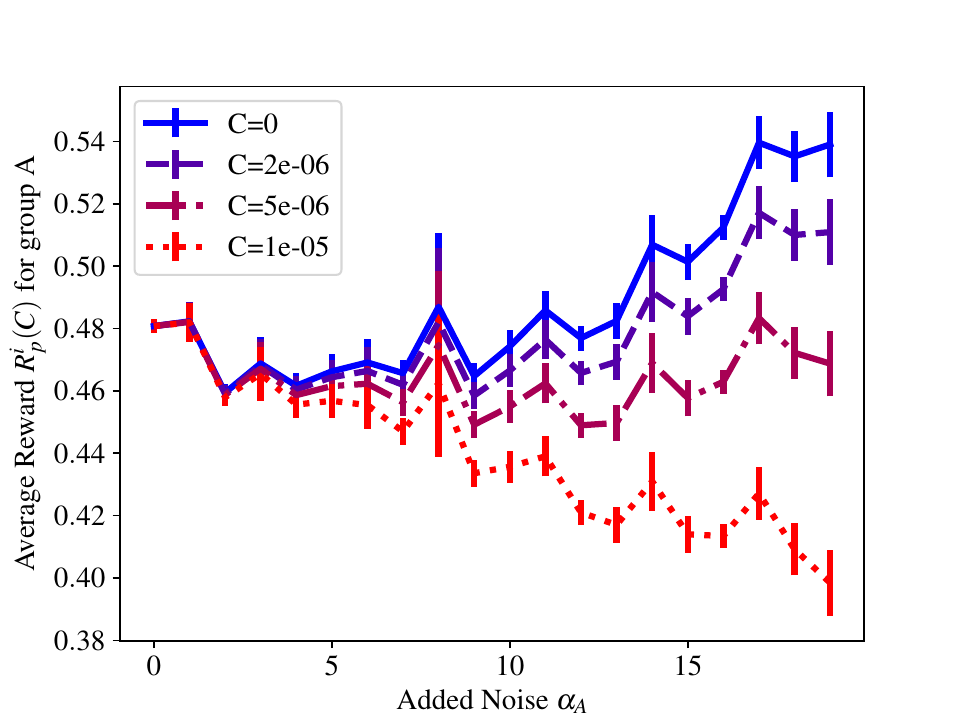}
		\caption{Median-based aggregation with different penalty weights $C$.}
			\label{fig:median2}
	\end{minipage}

		\caption{Results on FeMNIST with median-based aggregation: Average reward $\rewardf^i_p(C)$ (with penalties calculated in terms of the distance to the median rather than the mean for Median-based aggregation) received by players in group $A$ for $\alpha_B=0$ and varying $\alpha_A$.  Results are averaged over 10 runs and error bars show the standard error. }
	\label{fig:median}
\end{figure}

\addtocounter{theorem}{-1}
\addtocounter{corollary}{-2}
\section{Proofs on mean estimation}
\label{sec:proofs_ME}
First we prove Theorem \ref{thm:main}.
\begin{theorem} 
\label{thm:main_appendix} 
Let $\mathcal{D}$ be as described in \ref{sec:special_case}. Then, for any strategy profile $((\alpha^1, b^1, \beta^1), \ldots, (\alpha^N, b^N, \beta^N)) \in  \left(\mathscr{A}\times\mathscr{D}\right)^N$ and for any player $i \in [N]$, the expected mean squared error of player $i$ is:
\begin{align*}
& \mathbb{E}\left(\|\theta^i - \mu\|^2\right)  \\& = \left(1-\beta^i\right)^2 \left(\frac{\sigma^2}{Nn} +\frac{\sigma_\star^2}{N}+ \frac{1}{N^2} \sum_{j\neq i} \mathbb{E}(\alpha^j(x^j)^2) + \frac{1}{N^2} \mathbb{E}   \|\sum_{j \neq i}  b^j(x^j) \|^2\right) \\&
+ (\beta^i)^2(\frac{\sigma^2}{n}+\sigma_\star^2)+2\left(1-\beta^i\right)\beta^i (\frac{\sigma^2}{Nn} + \frac{\sigma_\star^2}{N})
\end{align*}

\end{theorem}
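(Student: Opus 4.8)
The plan is to exploit the convex-combination structure of the defense directly, rather than expanding everything at once. Writing the final estimate as $\theta^i - \mu = (1-\beta^i)(\bar m^i - \mu) + \beta^i(\bar x^i - \mu)$ and squaring gives
\begin{align*}
\mathbb{E}\|\theta^i-\mu\|^2 = (1-\beta^i)^2\,\mathbb{E}\|\bar m^i-\mu\|^2 + (\beta^i)^2\,\mathbb{E}\|\bar x^i-\mu\|^2 + 2(1-\beta^i)\beta^i\,\mathbb{E}\langle \bar m^i-\mu,\ \bar x^i-\mu\rangle,
\end{align*}
which already mirrors the three groups of terms in the claimed formula. First I would record the single-player identity $\mathbb{E}\|\bar x^k-\mu\|^2 = \frac{\sigma^2}{n}+\sigma_\star^2$, obtained by the split $\bar x^k-\mu = (\bar x^k-\mu_k)+(\mu_k-\mu)$: the cross term vanishes after conditioning on $\mu_k$ (the sampling error is mean-zero given $\mu_k$), the first piece contributes $\mathbb{E}[\sigma_k^2/n]=\sigma^2/n$, and the second contributes $\sigma_\star^2$ by definition. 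This immediately settles the middle term.

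Next I would simplify the self-corrected average. Since $\theta^s=\bar m=\frac1N\sum_k m^k$, the definition $\bar m^i=\frac1N(N\theta^s-m^i+\bar x^i)$ collapses to $\bar m^i=\frac1N\big(\sum_{k\neq i}m^k+\bar x^i\big)$, i.e. player $i$'s own message is replaced by their clean mean. Substituting the attack model $m^k=\bar x^k+\alpha^k\xi^k+b^k$ and writing $\delta^k:=\bar x^k-\mu$ yields $\bar m^i-\mu = \frac1N\sum_k \delta^k + \frac1N\sum_{k\neq i}(\alpha^k\xi^k+b^k)$. Expanding $\mathbb{E}\|\bar m^i-\mu\|^2$ then produces a residual block, a noise block, a bias block, and several cross terms. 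The residual block gives $\frac1N(\frac{\sigma^2}{n}+\sigma_\star^2)=\frac{\sigma^2}{Nn}+\frac{\sigma_\star^2}{N}$, since the $\delta^k$ are independent across players and mean-zero, so the off-diagonal terms drop. The noise block gives $\frac1{N^2}\sum_{j\neq i}\mathbb{E}[(\alpha^j)^2]$, using that the $\xi^k$ are mutually independent, mean-zero, independent of the data (hence of $\alpha^k$), and normalized to $\mathbb{E}\|\xi^k\|^2=1$. The bias block is the stated $\frac1{N^2}\mathbb{E}\|\sum_{j\neq i}b^j\|^2$. For the remaining cross term $2(1-\beta^i)\beta^i\,\mathbb{E}\langle\bar m^i-\mu,\delta^i\rangle$, only the $k=i$ residual--residual pairing survives, giving exactly $\frac{\sigma^2}{Nn}+\frac{\sigma_\star^2}{N}$.

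The step I expect to be delicate is verifying that every cross term between the biases and the residuals cancels, since this is where the structural assumptions enter asymmetrically. For the mixed noise--residual and noise--bias pairings, the mean-zero independent noise makes the conditional expectation vanish trivially. For a bias--residual pairing $\mathbb{E}\langle \delta^k, b^{k'}\rangle$ with $k\neq k'$, independence of the two players' data together with $\mathbb{E}\delta^k=0$ forces it to zero; but for the diagonal $k=k'$ independence fails, and here one must invoke assumption (\ref{eqn:assumption}), $\mathbb{E}\langle \bar x^j-\mu,\ b^j(x^j)\rangle=0$, precisely to kill $\mathbb{E}\langle\delta^j,b^j\rangle$. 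Once these cancellations are in hand, collecting the surviving pieces reproduces the three bracketed terms in the statement; the whole argument is a bias--variance-style bookkeeping exercise with assumption (\ref{eqn:assumption}) as its only non-routine ingredient.
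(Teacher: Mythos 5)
Your proposal is correct and follows essentially the same route as the paper's proof: the same split $\theta^i-\mu=(1-\beta^i)(\bar m^i-\mu)+\beta^i(\bar x^i-\mu)$, the same conditioning argument giving $\mathbb{E}\|\bar x^i-\mu\|^2=\frac{\sigma^2}{n}+\sigma_\star^2$, and the same use of assumption (\ref{eqn:assumption}) to kill exactly the diagonal bias--residual cross terms. The only real difference is bookkeeping: by centering every message at $\mu$ before expanding, you avoid the paper's detour through the uncentered pieces $Y_1=\frac{1}{N}\sum_{j\neq i}m^j$ and $Y_2=\frac{1}{N}\bar x^i-\mu$, whose $\|\mu\|^2$ and $\mu^t b^j(x^j)$ terms must be tracked and cancelled by hand.
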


\begin{proof} Recall that $\theta^i =(1-\beta^i) \bar{m}^i + \beta^i \bar{x}^i = (1-\beta^i)  \frac{1}{N}\left(N\theta^s - m^i + \bar{x}^i\right)+ \beta^i \bar{x}^i $. Therefore, $\theta^i - \mu = (1-\beta^i)\left(\frac{1}{N}\left(N \theta^s -m^i+\bar{x}^i\right) - \mu\right) + \beta^i \left(\bar{x}^i - \mu\right)$.\\
\\
Denote $$Y = \frac{1}{N}\left(N \theta^s -m^i+\bar{x}^i\right) - \mu,  \quad Z = \bar{x}^i - \mu$$
Then we have:
\begin{align}
\label{eqn:proof_main_mse_decomposition}
\mathbb{E}\left(\|\theta^i-\mu\|^2\right) & = \mathbb{E}\left(\|\left(1-\beta^i\right)Y + \beta^i Z \|^2\right) \\ 
& = \left(1-\beta^i\right)^2\mathbb{E}\|Y\|^2 + (\beta^i)^2\mathbb{E}\|Z\|^2 + 2\left(1-\beta^i\right)\beta^i\mathbb{E}(Y^t Z) \nonumber
\end{align}
The second term is the mean squared error of the (clean) estimate based on the local data of player $i$ and so:
\begin{align}
\label{eqn:proof_main_Z}
\mathbb{E}(\|Z\|^2) & = \mathbb{E}\left(\|\bar{x}^i - \mu\|^2\right) \nonumber\\
& = \mathbb{E}\left(\|\frac{1}{n}\sum_j x_j^i - \mu\|^2\right) \nonumber\\
& = \mathbb{E}\left(\|\frac{1}{n}\sum_j (x_j^i - \mu)\|^2\right) \nonumber\\
& = \frac{1}{n^2} \mathbb{E}\left(\|\sum_j (x_j^i - \mu)\|^2\right) \nonumber\\
& = \frac{1}{n^2} \mathbb{E}\left(\|\sum_j (x_j^i - \mu^i)\|^2 + \|\sum_j (\mu^i-\mu )\|^2 \right) \nonumber\\
& = \frac{1}{n^2}\left( \sum_j  \mathbb{E}\left(\|(x_j^i - \mu^i)\|^2\right) +\mathbb{E}\left( \|n (\mu^i-\mu )\|^2\right)\right) \nonumber\\
& = \frac{n}{n^2} \mathbb{E}\left(\|(X^i - \mu^i)\|^2\right) +\mathbb{E} \| (\mu^i-\mu )\|^2  \nonumber\\
& = \mathbb{E} \left(\frac{  \sigma_i^2}{n}\right) + \sigma_\star^2  \nonumber\\
& = \frac{\sigma^2}{n} + \sigma_\star^2  \nonumber\\
\end{align}
Where we can split $\mathbb{E}\left(\|\sum_j (x_j^i - \mu)\|^2\right)$ because of the tower law of expectation and using that $\mathbb{E}(x^i_j) = \mu^i$ conditional on the value of $\mu^i$; and the squared norm of the last sum factors, as all $x_j^i - \mu^i$ terms are independent and have zero mean such that 

\begin{align*}
&\mathbb{E} \|\sum_j (x_j^i - \mu)\|^2 \\
& = \mathbb{E} \|\sum_{j>1} (x_j^i - \mu)\|^2 + \mathbb{E} \|x^i_1 - \mu \|^2 + 2 \mathbb{E} \left((x^i_1-\mu)^t  \left(\sum_{j>1} (x_j^i - \mu)\right)\right) \\
&= \mathbb{E} \|\sum_{j>1} (x_j^i - \mu)\|^2 + \mathbb{E} \|x^i_1 - \mu \|^2 + 2 \mathbb{E} \left( \sum_d (x^i_1-\mu)_d  \left(\sum_{j>1} (x_j^i - \mu)_d\right)\right) \\
&= \mathbb{E} \|\sum_{j>1} (x_j^i - \mu)\|^2 + \mathbb{E} \|x^i_1 - \mu \|^2 + 2 \sum_d  \mathbb{E} \left((x^i_1-\mu)_d   \left(\sum_{j>1} (x_j^i - \mu)_d\right)\right) \\
&= \mathbb{E} \|\sum_{j>1} (x_j^i - \mu)\|^2 + \mathbb{E} \|x^i_1 - \mu \|^2 + 2 \sum_d  \mathbb{E} (x^i_1-\mu)_d  \mathbb{E}  \left(\sum_{j>1} (x_j^i - \mu)_d\right) \\
&= \mathbb{E} \|\sum_{j>1} (x_j^i - \mu)\|^2 + \mathbb{E} \|x^i_1 - \mu \|^2 + 2 \sum_d  0  \mathbb{E}  \left(\sum_{j>1} (x_j^i - \mu)_d\right) \\
&= \mathbb{E} \|\sum_{j>1} (x_j^i - \mu)\|^2 + \mathbb{E} \|x^i_1 - \mu \|^2 
\end{align*} and so on, allowing us to inductively factor the sum. 

Now, to compute the terms that depend on $Y$, note that by definition $\theta^s=\frac{1}{N}\sum_{i=1}^Nm^i$ and therefore:
\begin{align*}
Y & =\frac{1}{N} \left(N \theta^s - m^i + \bar{x}^i\right) - \mu \\
& = \frac{1}{N} \left(\sum_{j=1}^N m^j - m^i + \bar{x}^i\right) - \mu \\
& = \frac{1}{N}\sum_{j\neq i} m^j + \frac{1}{N}\bar{x}^i -\mu
\end{align*}
Define random variables:
$$Y_1 = \frac{1}{N}\sum_{j\neq i}^N m^j, \quad Y_2 = \frac{1}{N}\bar{x}^i - \mu$$
so that $Y = Y_1 + Y_2$. This implies that:
\begin{align*}
\mathbb{E}(Y^tZ)=\mathbb{E}\left(Z^t \left(Y_1+Y_2\right)\right)=\mathbb{E}(Z^tY_1)+\mathbb{E}(Z^tY_2).
\end{align*}
Since the value of $Y_1$ only depends on the data of all players $j \in \{1,2,\ldots, N\}/\{i\}$  and the value of $Z$ depends only on the data of player $i$, it follows that $Z$ and $Y_1$ are independent and therefore $\mathbb{E}(Z^t Y_1)=\mathbb{E}(Z)^t\mathbb{E}(Y_1)$. 
\newline
\newline Therefore,
\begin{align}
\label{eqn:proof_main_yz}
\mathbb{E}(Z^t Y) & =\mathbb{E}\left(Z^t \left(Y_1+Y_2\right)\right) \nonumber\\
& = \mathbb{E}(Z)^t\mathbb{E}(Y_1)+\mathbb{E}(Z^tY_2)\nonumber \\ 
& =\mathbb{E}\left(\bar{x}^i - \mu\right)^t\mathbb{E}\left(\frac{1}{N}\sum_{j \neq i} m^j \right)+\mathbb{E}\left(\left(\bar{x}^i - \mu \right)^t\left(\frac{1}{N}\bar{x}^i-\mu\right)\right)\nonumber \\ 
& = 0 + \frac{1}{N} \mathbb{E}\left(\|\bar{x}^i\|^2\right)-\frac{1}{N}\mu^t\mathbb{E}\left(\bar{x}^i\right) - \mu^t\mathbb{E}\left(\bar{x}^i\right) + \|\mu\|^2 \nonumber\\ 
& = \frac{1}{N} \mathbb{E}\left(\|\bar{x}^i\|^2\right)-\frac{1}{N}\mu^t\mu - \mu^t\mu + \|\mu\|^2\nonumber\\ 
& = \frac{1}{N} \mathbb{E}\left(\|\bar{x}^i\|^2\right)-\frac{1}{N}\|\mu\|^2\nonumber\\ 
& = \frac{1}{N} \mathbb{E}\left(\|\frac{1}{n}\sum_j x^i_j\|^2\right)-\frac{1}{N}\|\mu\|^2\nonumber\\ 
& = \frac{1}{N} \mathbb{E}\left(\|(\frac{1}{n}\sum_j x^i_j-\mu) +\mu \|^2\right)-\frac{1}{N}\|\mu\|^2 \nonumber\\ 
& = \frac{1}{N} \mathbb{E}\left(\|(\frac{1}{n}\sum_j x^i_j-\mu)\|^2 +\|\mu \|^2\right)-\frac{1}{N}\|\mu\|^2 \nonumber\\ 
& = \frac{1}{N} \mathbb{E}\left(\|(\frac{1}{n}\sum_j x^i_j-\mu)\|^2\right) \nonumber\\ 
& = \frac{\sigma^2}{Nn} + \frac{\sigma_\star^2}{N}
\end{align}
Where the squared norm of the sum factors as all $x_i^j - \mu$ terms have zero mean, while $\mu$ is a constant and the last equality follows from the computations in (\ref{eqn:proof_main_Z}). 

Now, by definition $m^j=\bar{x}^j + \alpha^j(x^j) \xi^j+b^j(x^j)$ for every $j\in [N]$, where the $\xi^j$ are  random variables with zero mean and unit "variance" $\mathbb{E}(\|\xi^j\|^2)$.  Note that because $Y_1$ and $Y_2$ depend only on the data of all players $j \in \{1,2,\ldots, N\}/\{i\}$  and on the data of player $i$ respectively, they are independent and we get:
\begin{align}
\label{eqn:proof_main_known_sigma_y_decomp}
\mathbb{E}\left(\|Y\|^2\right) & =\mathbb{E}\left(\|Y_1 + Y_2\|^2\right) \nonumber\\ 
& = \mathbb{E}\left(\|Y_1\|^2\right)+\mathbb{E}\left(\|Y_2\|^2\right)+2\mathbb{E}\left(Y_1\right)^t\mathbb{E}\left(Y_2\right) 
\end{align}
In order to calculate the value of $\mathbb{E}\left(\|Y\|^2\right)$ we have to compute $\mathbb{E}(Y_1), \mathbb{E}(Y_2),\mathbb{E}\left(\|Y_1\|^2\right)$ and $\mathbb{E}\left(\|Y_2\|^2\right)$. We have:
\begin{align*}
\mathbb{E}(Y_1) & = \mathbb{E}\left(\frac{1}{N}\sum_{j \neq i} \left(\bar{x}^j + \alpha^j(x^j) \xi^j +b^j(x^j) \right)\right) \\& = \frac{1}{N}\sum_{j \neq i} \mathbb{E}\left(\bar{x}^j\right) + \frac{1}{N} \sum_{j \neq i} \mathbb{E}(\alpha^j(x^j)) \mathbb{E}\left(\xi_j\right) + \frac{1}{N} \sum_{j \neq i} \mathbb{E}(b^j(x^j)) \\& = \frac{N-1}{N}(\mu) + \frac{1}{N} \sum_{j \neq i} \mathbb{E} b^j(x^j),
\end{align*}
since $\mathbb{E}(\xi^j) = 0$ and $\xi^j$ is independent from all other variables. In addition,
\begin{align*}
\mathbb{E}(Y_2)=\mathbb{E}\left(\frac{1}{N}\bar{x}^i- \mu\right) = \frac{1}{N}\mathbb{E}(\bar{x}^i)-\mu = \frac{(1-N)}{N}\mu
\end{align*}
Next,  

\begin{align*}
	&\mathbb{E}\left(\|Y_1\|^2\right) 
	 = \mathbb{E}\left(\|\frac{1}{N}\sum_{j \neq i} \left(\bar{x}^j + \alpha^j(x^j) \xi^j +b^j(x^j) \right)\|^2\right) \\
	& = \mathbb{E}\left(\|\frac{1}{N}\sum_{j \neq i} \bar{x}^j  +b^j(x^j) \|^2 + \| \frac{1}{N}\sum_{j \neq i}  \alpha^j(x^j) \xi^j \|^2\right) \\
	& = \frac{1}{N^2} \mathbb{E}\left(\|\sum_{j \neq i} \bar{x}^j  + \sum_{j \neq i} b^j(x^j) \|^2 \right) +  \frac{1}{N^2} \sum_{j \neq i}   \mathbb{E}(\alpha^j(x^j)^2)\\
	& = \frac{1}{N^2} \mathbb{E}\left(\|\sum_{j \neq i} (\bar{x}^j -\mu  +b^j(x^j)) + (N-1) \mu \|^2 \right) +  \frac{1}{N^2} \sum_{j \neq i}   \mathbb{E}(\alpha^j(x^j)^2)\\
	& = \frac{1}{N^2} \mathbb{E}\left(\|\sum_{j \neq i} \bar{x}^j -\mu \|^2  + \|\sum_{j \neq i}  b^j(x^j) + (N-1) \mu \|^2 \right) +  \frac{1}{N^2} \sum_{j \neq i}   \mathbb{E}(\alpha^j(x^j)^2)\\
	& = \frac{1}{N^2} \mathbb{E}\left(\|\sum_{j \neq i}\sum_k (\frac{1}{n}{x^j_k} -\mu) \|^2\right) + \frac{1}{N^2} \mathbb{E}\|(N-1) \mu + \sum_{j \neq i} b^j(x^j)  \|^2  +  \frac{1}{N^2} \sum_{j \neq i}  \mathbb{E}(\alpha^j(x^j)^2) \\
	& = \frac{1}{N^2} \mathbb{E}\left(\| \frac{1}{n}(\sum_{j \neq i}\sum_k x^j_k-\mu) \|^2  \right)  + \frac{1}{N^2} \mathbb{E}\|(N-1) \mu + \sum_{j \neq i} b^j(x^j)  \|^2 +  \frac{1}{N^2} \sum_{j \neq i}  \mathbb{E}(\alpha^j(x^j)^2)\\
	& = \frac{1}{N^2} \mathbb{E}\left(  \sum_{j \neq i} \| \frac{1}{n}(\sum_k x^j_k-\mu)\|^2  \right) + \frac{1}{N^2} \mathbb{E}\|(N-1) \mu + \sum_{j \neq i} b^j(x^j)  \|^2 +  \frac{1}{N^2} \sum_{j \neq i}  \mathbb{E}(\alpha^j(x^j)^2)\\ 
	& = \frac{(N-1)}{N^2} \left( \frac{\sigma^2}{n} + \sigma_\star^2  \right) + \frac{1}{N^2} \mathbb{E}\|(N-1) \mu + \sum_{j \neq i} b^j(x^j)  \|^2 +  \frac{1}{N^2} \sum_{j \neq i}  \mathbb{E}(\alpha^j(x^j)^2)
\end{align*}
Where the first squared norms factors because \[\mathbb{E}[<\frac{1}{N}\sum_{j \neq i} \bar{x}^j +b^j(x^j) ,  \alpha^j(x^j) \xi^j>]=\mathbb{E}[<\alpha^j(x^j)\frac{1}{N}\sum_{j \neq i} (\bar{x}^j +b^j(x^j)), \xi^j>] = 0\] as the $\xi_j$ are independent of all other variables and have zero mean, the $\mathbb{E}(\|\sum_{j \neq i}  \alpha^j(x^j) \xi^j \|^2)$ term factors because all $ \alpha^j(x^j)$ and $\xi_j$ are independent (and the $\xi_j$ have zero mean), the $b^j(x^j)$ terms factor because $\mathbb{E}<\bar{x}^j-\mu,b^j(x^j)>=0$ by assumption and because the $\bar{x}^j-\mu$ have zero mean and are independent of $b^i(x^i)$ for $j\neq i$. The last squared norm factors because all $\sum_k x_j^k - \mu$ terms are independent and have zero mean.

Finally,
\begin{align*}
\mathbb{E}\left(\|Y_2\|^2\right)
&= \mathbb{E}\left(\|\frac{1}{N}\bar{x}^i- \mu\|^2\right) \\
& = \mathbb{E}\left(\|\frac{1}{Nn}\sum_j x_j^i- \mu\|^2\right) \\ 
& = \mathbb{E}\left(\|\frac{1}{Nn}\sum_j (x_j^i- \mu) + \frac{1-N}{N} \mu\|^2\right)\\
& = \mathbb{E}\left(\|\frac{1}{Nn}\sum_j (x_j^i- \mu)\|^2 + \|\frac{1-N}{N} \mu\|^2\right) \\
& = \frac{1}{N^2}  \mathbb{E}\left(  \| \frac{1}{n}\sum_j (x_j^i- \mu)\|^2\right)  + \frac{(1-N)^2}{N^2}\| \mu\|^2\\
& = \frac{1}{N^2} (\frac{\sigma^2}{n}+\sigma_\star^2)  + \frac{(1-N)^2}{N^2}\| \mu\|^2\\
& = \frac{1}{N^2 n} \sigma^2  +\frac{1}{N^2} \sigma_\star^2 + \frac{(1-N)^2}{N^2}\| \mu\|^2\\
\end{align*} with the sums factoring for the same reasons as above.
Substituting in (\ref{eqn:proof_main_known_sigma_y_decomp}):
\begin{align}
	\label{eqn:proof_main_known_sigma_y}
	\mathbb{E}\left(\|Y\|^2\right) & =  \mathbb{E}\left(\|Y_1\|^2\right)+\mathbb{E}\left(\|Y_2\|^2\right)+2\mathbb{E}\left(Y_1\right)^t\mathbb{E}\left(Y_2\right) \nonumber\\
	& =   \frac{(N-1)}{N^2} (\frac{\sigma^2}{n}+\sigma_\star^2) + \frac{1}{N^2} \mathbb{E}\|(N-1) \mu + \sum_{j \neq i} b^j(x^j)  \|^2 + \frac{1}{N^2}\sum_{j \neq i}  \mathbb{E}(\alpha^j(x^j)^2) \nonumber \\&  +  \frac{1}{N^2 n} \sigma^2 +\frac{1}{N^2} \sigma_\star^2  + \frac{(1-N)^2}{N^2}\| \mu\|^2 + 2 (\frac{N-1}{N}\mu +\frac{1}{N} \sum_{j \neq i} \mathbb{E}  b^j(x^j))^t  \frac{(1-N)}{N}\mu \nonumber \\ 
	& = \frac{\sigma^2}{Nn}  + \frac{\sigma_\star^2}{N} + \frac{1}{N^2} \sum_{j\neq i} \mathbb{E}(\alpha^j(x^j)^2)  \nonumber
	\\& + \frac{(N-1)^2}{N^2} \|\mu \|^2 + \frac{1}{N^2} \mathbb{E} \|\sum_{j \neq i}   b^j(x^j) \|^2 + \frac{2 (N-1)}{N^2} \sum_{j \neq i} \mathbb{E}  (b^j(x^j)^t \mu) \nonumber \\& +
	\frac{(N-1)^2}{N^2} \|\mu \|^2  + 2  \frac{(N-1)(1-N)}{N^2} \| \mu\|^2 + \frac{2(1-N)}{N^2} \sum_{j \neq i} \mathbb{E}  b^j(x^j)^t \mu  
	\nonumber\\ 
	& = \frac{\sigma^2}{Nn} + \frac{\sigma_\star^2}{N} + \frac{1}{N^2} \sum_{j\neq i} \mathbb{E}(\alpha^j(x^j)^2)  + \frac{1}{N^2} \mathbb{E}  \|\sum_{j \neq i}  b^j(x^j) \|^2
\end{align}
Substituting (\ref{eqn:proof_main_Z}), (\ref{eqn:proof_main_yz}) and (\ref{eqn:proof_main_known_sigma_y}) into (\ref{eqn:proof_main_mse_decomposition}), we have:
\begin{align*}
\mathbb{E}\left(\|\theta^i - \mu\|^2\right) & = \left(1-\beta^i\right)^2\mathbb{E}\left(\|Y\|^2\right) + (\beta^i)^2\mathbb{E}\left(\|Z\|^2\right)+2\left(1-\beta^i\right)\beta^i\mathbb{E}\left(Z^t Y\right) \\
& = \left(1-\beta^i\right)^2 \left(\frac{\sigma^2}{Nn} + \frac{\sigma_\star^2}{N} + \frac{1}{N^2} \sum_{j\neq i}  \mathbb{E}(\alpha^j(x^j)^2) + \frac{1}{N^2} \mathbb{E}  \|\sum_{j \neq i}  b^j(x^j) \|^2  \right) \\ &
 + (\beta^i)^2(\frac{\sigma^2}{n} + \sigma_\star^2)+2\left(1-\beta^i\right)\beta^i(\frac{\sigma^2}{Nn}+\frac{\sigma_\star^2}{N})
\end{align*}

\end{proof}

Next we prove a lemma that gives the optimal value of the defense parameter $\beta$ of a player, assuming that the attack parameters $\alpha$ of all players are fixed. 
\begin{lemma}
\label{lemma:opt_beta}
For a fixed set of values $\alpha_1, \ldots, \alpha_N \in [0,\infty)$ and $b^1, \ldots , b^N \in \mathbb{R}^d$, the value of $\beta^i$ that minimizes the mean squared error of the estimate of player $i$ is given by:
\begin{equation}
\label{eqn:beta_opt}
(\beta^i)^* = \frac{\frac{1}{N^2}\sum_{j\neq i} \mathbb{E}(\alpha^j(x^j)^2)  + \frac{1}{N^2} \mathbb{E}  \|\sum_{j \neq i}  b^j(x^j) \|^2 }{ \left(\frac{\sigma^2}{n} +\sigma_\star^2 - \frac{\sigma^2}{Nn} - \frac{\sigma_\star^2}{N} +  \frac{1}{N^2} \sum_{j\neq i}  \mathbb{E}(\alpha^j(x^j)^2) + \frac{1}{N^2} \mathbb{E}  \|\sum_{j \neq i}  b^j(x^j) \|^2 \right) } 
\end{equation}
\end{lemma}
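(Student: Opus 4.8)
The plan is to treat the MSE formula from Theorem~\ref{thm:main} as a scalar quadratic in the single free variable $\beta^i$, holding every attack parameter $\alpha^j, b^j$ fixed, and to minimize it by elementary calculus. To make the dependence on $\beta^i$ transparent, I would introduce shorthand for the three coefficients that appear in Theorem~\ref{thm:main}:
\[
P := \frac{\sigma^2}{Nn} +\frac{\sigma_\star^2}{N}+ \frac{1}{N^2} \sum_{j\neq i} \mathbb{E}(\alpha^j(x^j)^2) + \frac{1}{N^2} \mathbb{E}\|\sum_{j \neq i} b^j(x^j)\|^2, \qquad Q := \frac{\sigma^2}{n}+\sigma_\star^2, \qquad S := \frac{\sigma^2}{Nn} + \frac{\sigma_\star^2}{N},
\]
so that Theorem~\ref{thm:main} reads $\mathbb{E}(\|\theta^i-\mu\|^2) = (1-\beta^i)^2 P + (\beta^i)^2 Q + 2(1-\beta^i)\beta^i S$. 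With the objective in this form the minimization is purely one-dimensional.

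Next I would differentiate this quadratic in $\beta^i$ and set the derivative to zero. The first-order condition is $-2(1-\beta^i)P + 2\beta^i Q + 2(1-2\beta^i)S = 0$, which rearranges to $\beta^i(P+Q-2S) = P-S$ and hence yields the candidate minimizer $(\beta^i)^* = \frac{P-S}{P+Q-2S}$. It then remains only to match this against the claimed expression: $P - S$ is exactly the stated numerator, since the $\frac{\sigma^2}{Nn}$ and $\frac{\sigma_\star^2}{N}$ terms cancel and only the $\alpha$- and $b$-contributions survive; and $P + Q - 2S = (P-S) + (Q-S)$ reproduces the stated denominator once one substitutes $Q - S = \frac{\sigma^2}{n}+\sigma_\star^2 - \frac{\sigma^2}{Nn} - \frac{\sigma_\star^2}{N}$.

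Two verifications close the argument. First, the second derivative of the objective equals $2(P+Q-2S)$; since $P - S \geq 0$ (a sum of nonnegative variances and expected squared norms) and $Q - S = (\frac{\sigma^2}{n}+\sigma_\star^2)(1-\tfrac{1}{N}) > 0$ for $N \geq 2$, the coefficient $P+Q-2S$ is strictly positive, so the stationary point is the unique global minimizer and the denominator in the claimed formula is guaranteed nonzero. Second, because the defense parameter is constrained to $[0,1]$, I would check feasibility: writing $(\beta^i)^* = \frac{P-S}{(P-S)+(Q-S)}$ with $P-S \geq 0$ and $Q-S > 0$ shows immediately that $(\beta^i)^* \in [0,1)$, so the box constraint is never active and the unconstrained minimizer coincides with the constrained one. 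I expect no genuine obstacle here; the only point requiring care is precisely this pair of sign checks — confirming $P+Q-2S>0$ so that the critical point is a minimizer rather than a maximizer, and confirming automatic feasibility — which makes the lemma's closed form well posed.
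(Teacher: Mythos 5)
Your proposal is correct and takes essentially the same approach as the paper: both treat the MSE expression from Theorem~\ref{thm:main} as a one-dimensional quadratic in $\beta^i$ with strictly positive leading coefficient and identify its unique stationary point as the minimizer, which simplifies to the claimed closed form. Your additional explicit checks --- that the leading coefficient $P+Q-2S$ is positive because $P-S\geq 0$ and $Q-S>0$, and that the unconstrained minimizer automatically lies in $[0,1)$ so the box constraint is inactive --- are sound and merely make explicit what the paper states without elaboration (and uses later in the proof of Corollary~\ref{cor:no_nash_appendix}).
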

\begin{proof}
Re-writing the statement of Theorem \ref{thm:main_appendix}, 

\begin{align*}
	\mathbb{E}\left(\|\theta^i - \mu\|^2\right) &= (\beta^i)^2 \left(\frac{\sigma^2}{n} +\sigma_\star^2 - \frac{\sigma^2}{Nn} -\frac{\sigma_\star^2}{N} +  \frac{1}{N^2} \sum_{j\neq i}  \mathbb{E}(\alpha^j(x^j)^2) +  \frac{1}{N^2} \mathbb{E}  \|\sum_{j \neq i}  b^j(x^j) \|^2 \right) 
	\\&  - 2   \frac{\beta^i  }{N^2} \left(\sum_{j\neq i} \mathbb{E}(\alpha^j(x^j)^2) + \mathbb{E}  \|\sum_{j \neq i}  b^j(x^j) \|^2\right) \\ &
	+ \left(  \frac{\sigma^2}{Nn} +  \frac{\sigma_\star^2}{N} + \frac{1}{N^2} \sum_{j\neq i}  \mathbb{E}(\alpha^j(x^j)^2) + \frac{1}{N^2} \mathbb{E}  \|\sum_{j \neq i}  b^j(x^j) \|^2  \right)
\end{align*}
This is a quadratic function of $\beta^i$ with a positive coefficient in front of the square term. Therefore, the function is minimized over $(-\infty, \infty)$ at the point:
\begin{equation*}
(\beta^i)^* = \frac{\frac{1}{N^2}\sum_{j\neq i} \mathbb{E}(\alpha^j(x^j)^2)  + \frac{1}{N^2} \mathbb{E}  \|\sum_{j \neq i}  b^j(x^j) \|^2  }{ \left(\frac{\sigma^2}{n} +\sigma_\star^2 - \frac{\sigma^2}{Nn} - \frac{\sigma_\star^2}{N} +  \frac{1}{N^2} \sum_{j\neq i}  \mathbb{E}(\alpha^j(x^j)^2) + \frac{1}{N^2} \mathbb{E}  \|\sum_{j \neq i}  b^j(x^j) \|^2 \right) } 
\end{equation*}
\end{proof}
This result is closely related to the work of \citet{grimberg2021optimal}, which studies the optimal way of averaging two sample sets from two different distributions, with the goal of minimizing the mean squared error of the estimate on one of these distributions. However, in our case the estimate from the server is an average of manipulated samples coming from \textit{multiple distributions}, rather than i.i.d. samples from a single one. Now, \ref{lemma:opt_beta} allows us to prove Corollary \ref{cor:no_nash}:
\begin{corollary}
\label{cor:no_nash_appendix}
The game defined by the expected reward
\begin{align*}
\mathbb{E}\left(\rewardf^i(\theta^1, \ldots, \theta^N, \mu)\right) = \frac{\sum_{j \neq i}\mathbb{E}\left(\|\theta^j - \mu\|^2\right)}{N-1} - \lambda_i \mathbb{E}\left(\|\theta^i - \mu\|^2\right)
\end{align*}
and the set of strategies $\mathscr{A}\times\mathscr{D}$ does not have a (pure or mixed) Nash equilibrium at which $\mathbb{E}((\alpha^j)^2)$ and  $ \mathbb{E}   \| b^j(x^j) \|^2 $ are finite for all players.
\end{corollary}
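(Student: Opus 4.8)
The plan is to argue by contradiction: assume a Nash equilibrium $((\alpha^1,b^1,\beta^1),\dots,(\alpha^N,b^N,\beta^N))$ exists in which every $\mathbb{E}((\alpha^j)^2)$ is finite, and then exhibit a profitable unilateral deviation for some player. The key structural fact, read directly off Theorem \ref{thm:main_appendix}, is that player $i$'s own MSE depends on the attack magnitudes only through $\frac{1}{N^2}\sum_{j\neq i}\mathbb{E}((\alpha^j)^2)$ (and the analogous bias term), so it is \emph{independent} of player $i$'s own attack strength $\alpha^i$; this is exactly what the self-correcting defense $\bar{m}^i=\frac{1}{N}(N\theta^s-m^i+\bar{x}^i)$ achieves. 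Consequently $\beta^i$ enters player $i$'s reward only through the $-\lambda_i\mathbb{E}\|\theta^i-\mu\|^2$ term, so at equilibrium $\beta^i$ must minimize player $i$'s MSE while the rest of the profile is held fixed.

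First I would invoke Lemma \ref{lemma:opt_beta} to pin down the equilibrium value of each $\beta^i$. The lemma gives the unconstrained minimizer $(\beta^i)^*$, whose numerator collects the (finite) attack and bias contributions while its denominator carries the \emph{additional} strictly positive term $\frac{\sigma^2}{n}+\sigma_\star^2-\frac{\sigma^2}{Nn}-\frac{\sigma_\star^2}{N}$, positive for $N\geq 2$ under nondegenerate variance. Hence the numerator is strictly smaller than the denominator, so $(\beta^i)^*\in[0,1)$; in particular it is feasible for the constraint $\beta^i\in[0,1]$, so the constrained optimum coincides with it and every equilibrium satisfies $\beta^i<1$.

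Next I would construct the deviation. Fix any player $j$ and consider increasing $\mathbb{E}((\alpha^j)^2)$ while keeping all other strategies (including every $\beta^k$) fixed at their equilibrium values. For each $k\neq j$, Theorem \ref{thm:main_appendix} shows $\mathbb{E}\|\theta^k-\mu\|^2$ contains the term $(1-\beta^k)^2\frac{1}{N^2}\mathbb{E}((\alpha^j)^2)$ with coefficient $(1-\beta^k)^2>0$ by the previous step, so each such MSE is strictly increasing in $\mathbb{E}((\alpha^j)^2)$; meanwhile $\mathbb{E}\|\theta^j-\mu\|^2$ does not depend on $\alpha^j$ at all. Therefore the reward $\frac{1}{N-1}\sum_{k\neq j}\mathbb{E}\|\theta^k-\mu\|^2-\lambda_j\mathbb{E}\|\theta^j-\mu\|^2$ is strictly increasing in $\mathbb{E}((\alpha^j)^2)$, so player $j$ strictly benefits from enlarging $\alpha^j$. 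This contradicts the equilibrium hypothesis and, equivalently, shows no finite best response exists.

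The main obstacle is conceptual rather than computational: one must respect the unilateral nature of Nash deviations, so that when player $j$ raises $\alpha^j$ the coefficients $(1-\beta^k)^2$ stay frozen and there is no circular dependence between the optimal-$\beta$ computation (Lemma \ref{lemma:opt_beta}) and the attack deviation (Theorem \ref{thm:main_appendix}); the two are analyzed as separate single-player best-response checks against a fixed profile. A minor point worth flagging is the nondegeneracy assumption $\sigma^2+\sigma_\star^2>0$, without which local means are already exact, $\beta^i=1$ becomes optimal, and the impossibility degenerates.
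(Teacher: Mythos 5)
Your proposal is correct and follows essentially the same route as the paper's proof: assume an equilibrium with finite attack magnitudes, use Lemma \ref{lemma:opt_beta} to conclude that each $\beta^i$ equals the unconstrained minimizer and hence lies in $[0,1)$, and then read off from Theorem \ref{thm:main_appendix} that player $j$'s reward is strictly increasing in $\mathbb{E}((\alpha^j)^2)$ (since others' MSEs carry the coefficient $(1-\beta^k)^2>0$ while $j$'s own MSE is unaffected), contradicting the equilibrium assumption. Your explicit flagging of the nondegeneracy condition $\frac{\sigma^2}{n}+\sigma_\star^2>0$, which is what makes the denominator in $(\beta^i)^*$ strictly exceed the numerator, is a small point of added rigor that the paper passes over with ``it is easy to see.''
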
 
\begin{proof}
Assume that a strategy profile $((\alpha^1, b^1, \beta^1), \ldots, (\alpha^N, b^N,\beta^N))\in \mathcal{P}\left(\mathscr{A}\times\mathscr{D}\right)^N$ is a (potentially mixed) Nash equilibrium for which $\mathbb{E}((\alpha^j)^2)$ and $ \mathbb{E}   \| b^j(x^j) \|^2 $ are finite for player $j\neq i$ (with the expectations taken both over the randomness of the strategy profile and $x$). Note that in the definition of the (expected) reward function, the value of the defense parameters of the $i$-th player $\beta^i$ only affects the MSE of the estimate $\theta_i$ of that player. Therefore, it follows from Lemma \ref{lemma:opt_beta} that each $\beta^i$ must be non-random and defined by equation (\ref{eqn:beta_opt}). In particular, it is easy to see that each $\beta^i \in [0,1)$. In that case, it follows from Theorem \ref{thm:main_appendix} that the expected reward of each player $j$ is strictly monotonically increasing in $ \mathbb{E}(\alpha^j(x^j)^2)$, so that player $j$ can increase their reward by increasing $\mathbb{E}((\alpha^j)^2)$. This contradicts the assumption that the strategy profile is a Nash equilibrium.
\end{proof}

Finally, we prove corollary \ref{cor:bounded_nash_appendix}. For this, we consider the same game as before, in the case when an upper bound $\mathrm{A}$ on the parameters $\alpha^i$ is given. Recall that we denote the resulting set of attack strategies by $\mathscr{A}_{\mathrm{A}}$. Since $\mathscr{A}_{\mathrm{A}} \subset \mathscr{A}$, Theorem \ref{thm:main} holds for the joint set of strategies $(\mathscr{A}_{\mathrm{A}}\times \mathscr{D})$. Then we have the following 
\begin{corollary}
	\label{cor:bounded_nash_appendix_restated}
	In the setup of Theorem \ref{thm:main}, if the set of available strategies is $\mathscr{A}_{\mathrm{A}} \times \mathscr{D}$ for some constant $\mathrm{A} > 0$, the only Nash equilibria of the game with $b^i(x^i)=0$ fixed for all players $i$ are the strategy profiles for which:
	\begin{equation}
		|\alpha^i(x^i)| = \mathrm{A} \quad \text{and} \quad  \beta^i = \frac{\mathrm{A}^2}{(\frac{\sigma^2}{n}+\sigma_\star^2)N + \mathrm{A}^2} \quad \forall i \in [N].
	\end{equation}
	Furthermore, at each of these equilibria the value of mean squared error of the estimate of each player $i$ is \[\mathbb{E}\left(\|\theta^i - \mu\|^2\right)= (\frac{\sigma^2}{n}+\sigma^2_\star) \frac{ (1+\frac{1}{\frac{\sigma^2}{n}+\sigma^2_\star}A^2)  }{(N+\frac{1}{\frac{\sigma^2}{n}+\sigma^2_\star}A^2)}\]
\end{corollary}

\begin{proof}
	Assume that a strategy profile $((\alpha^1, b^1, \beta^1), \ldots, (\alpha^N,b^N, \beta^N))$ is a Nash equilibrium. As in Corollary \ref{cor:no_nash_appendix}, it follows that each $\beta^i$ is given by equation (\ref{eqn:beta_opt}) and is therefore in the interval $[0,1)$. Therefore, the reward of each player $i$ is increasing with $\mathbb{E}(\alpha^i(x^i)^2)$. It follows that $\alpha^i(x^i)= \mathrm{A}$ for all $i \in [N]$. Substituting for the value of $(\beta^i)^*$ we get that for every $i\in[N]$:
	\begin{align*}
		\beta^i = (\beta^i)^* = \frac{\frac{N-1}{N^2}\mathrm{A}^2}{ \frac{\sigma^2}{n} + \sigma­_\star^2 -\frac{\sigma^2}{Nn} -\frac{\sigma^2_\star}{N} + \frac{N-1}{N^2}\mathrm{A}^2} = \frac{\mathrm{A}^2}{(\frac{\sigma^2}{n}+\sigma_\star^2)N+ \mathrm{A}^2}.
	\end{align*}
	Substituting into Theorem \ref{thm:main_appendix} and setting $\hat{\sigma}^2 = \sigma^2 + n\sigma_\star^2$ we get:
	\begin{align*}
		\mathbb{E}\left(\|\theta^i - \mu\|^2\right) & =
		\left(1-\beta^i\right)^2 \left(\frac{\hat{\sigma}^2}{Nn} + \frac{1}{N^2} \sum_{j\neq i} (\alpha^j)^2\right)
		+ (\beta^i)^2\frac{\hat{\sigma}^2}{n}+2\left(1-\beta^i\right)\beta^i\frac{\hat{\sigma}^2}{Nn} \\
		& =
		\frac{\frac{N^2\hat{\sigma}^4}{n^2}}{(\frac{N\hat{\sigma}^2}{n}+A^2)^2}\left(\frac{\hat{\sigma}^2}{Nn} + \frac{N-1}{N^2} A^2\right)
		+ \frac{A^4}{(\frac{N\hat{\sigma}^2}{n}+A^2)^2}\frac{\hat{\sigma}^2}{n}+2\frac{\frac{A^2 N\hat{\sigma}^2}{n}}{(\frac{N\hat{\sigma}^2}{n}+A^2)^2}\frac{\hat{\sigma}^2}{Nn} \\ 
		&=  \frac{\frac{N\hat{\sigma}^6}{n^3} + \frac{\hat{\sigma}^4}{n^2} (N-1) A^2 + \frac{A^4 \hat{\sigma}^2}{n} + 2 \frac{A^2 \hat{\sigma}^4 }{n^2}}{(\frac{N\hat{\sigma}^2}{n}+A^2)^2} \\
		&=  \frac{\hat{\sigma}^2}{n} \frac{\frac{N\hat{\sigma}^4}{n^2} + \frac{\hat{\sigma}^2}{n} (N-1) A^2 + A^4 + 2 \frac{\hat{\sigma}^2 }{n}A^2 }{(\frac{N\hat{\sigma}^2}{n}+A^2)^2}\\
		&=  \frac{\hat{\sigma}^2}{n} \frac{\frac{\hat{\sigma}^4}{n^2} N + \frac{\hat{\sigma}^2}{n} N A^2 + A^4 + \frac{\hat{\sigma}^2}{n}A^2 }{(\frac{\hat{\sigma}^2}{n}N+A^2)^2} \\
		&=  \frac{\hat{\sigma}^2}{n} \frac{N + \frac{n}{\hat{\sigma}^2} N A^2 + \frac{n^2}{\hat{\sigma}^4} A^4 + \frac{n}{\hat{\sigma}^2} A^2 }{(N+\frac{n}{\hat{\sigma}^2}A^2)^2} \\
		&=  \frac{\hat{\sigma}^2}{n} \frac{ (1+\frac{n}{\hat{\sigma}^2}A^2) (N+\frac{n}{\hat{\sigma}^2}A^2) }{(N+\frac{n}{\hat{\sigma}^2}A^2)^2} \\
		&=  \frac{\hat{\sigma}^2}{n} \frac{ (1+\frac{n}{\hat{\sigma}^2}A^2)  }{(N+\frac{n}{\hat{\sigma}^2}A^2)} \\
		&=   (\frac{\sigma^2}{n}+\sigma^2_\star) \frac{ (1+\frac{1}{\frac{\sigma^2}{n}+\sigma^2_\star}A^2)  }{(N+\frac{1}{\frac{\sigma^2}{n}+\sigma^2_\star}A^2)}
	\end{align*}
\end{proof}

\section{Proofs on mechanisms for mean estimation}
\label{sec:proofs_MECH}
Next, we prove a version of Theorem \ref{thm:penalty_re} without redistribution. 
\begin{proposition}
	\label{thm:penalty_appendix}
	In the setting of \ref{thm:main}, the penalized game with rewards \begin{align*}
		\rewardf^i_p &= \frac{\sum_{j \neq i}\|\theta^j - \mu\|^2}{N-1} - \lambda_i \|\theta^i - \mu\|^2  -  C \|m^i-\theta^s\|^2
	\end{align*}   has a Nash equilibrium consisting of the strategies $\alpha­­­­­^j = b^j =  \beta^j = 0$ for all $j$ whenever $C>\frac{1}{(N-1)^2}$. 

At this equilibrium, the expected penalty $p^i(m^1, \ldots, m^N)$ paid by each player $i$ is equal to $C \frac{ (N-1)}{N}(\frac{\sigma^2}{n} + \sigma_\star)$, thus a player is incentivized to participate in the penalized game rather than relying on their own estimate, whenever $N>2$, the other $N-1$ players participate at the honest equilibrium and $\lambda_i > C + \frac{N}{(N-1)^2}$ or $N=2$ and $\lambda_i>C+1$.
\end{proposition}

\begin{proof}
We begin by inserting the equality  $\mathbb{E}\left(\|\theta^i - \mu\|^2\right)  = \left(1-\beta^i\right)^2 \left(\frac{\sigma^2}{Nn} + \frac{\sigma_\star^2}{N} + \frac{1}{N^2} \sum_{j\neq i} \mathbb{E}(\alpha^j(x^j)^2) +  \frac{1}{N^2} \mathbb{E}   \|\sum_{j \neq i}  b^j(x^j) \|^2 \right)
+ (\beta^i)^2(\frac{\sigma^2}{n} + \sigma_\star^2)+2\left(1-\beta^i\right)\beta^i(\frac{\sigma^2}{Nn}+\frac{\sigma_\star^2}{N})$ from \ref{thm:main} in the first two terms to obtain
\begin{align*}
	\mathbb{E}\rewardf^i_p
	&= \mathbb{E}\left(  \frac{\sum_{j \neq i}\|\theta^j - \mu\|^2}{N-1} - \lambda_i \|\theta^i - \mu\|^2  -  C \|m^i-\frac{1}{N-1}\sum_{j\neq i} m^j \|^2 \right)  \\ 
	&=   \mathbb{E}  \frac{\sum_{j \neq i} \left(1-\beta^j\right)^2 \left(\frac{\sigma^2}{Nn} + \frac{\sigma_\star^2}{N} + \frac{1}{N^2} \sum_{k\neq j} \mathbb{E}(\alpha^k(x^k)^2)  + \frac{1}{N^2} \mathbb{E}   \|\sum_{k \neq j}  b^k(x^k) \|^2 )\right)}{N-1}
	\\& + 
		\mathbb{E} \frac{ \sum_{j \neq i}  \left( (\beta^j)^2(\frac{\sigma^2}{n} + \sigma_\star^2)+2\left(1-\beta^j\right)\beta^j(\frac{\sigma^2}{Nn}+\frac{\sigma_\star^2}{N}\right)}{N-1}  	\\ 
	& -  \lambda_i \mathbb{E}  \left(1-\beta^i\right)^2 \left(\frac{\sigma^2}{Nn} + \frac{\sigma_\star^2}{N} + \frac{1}{N^2} \sum_{j\neq i} \mathbb{E}(\alpha^j(x^j)^2) +  \frac{1}{N^2} \mathbb{E}   \|\sum_{j \neq i}  b^j(x^j) \|^2 \right)
	\\& -  \lambda_i \left( (\beta^i)^2(\frac{\sigma^2}{n} + \sigma_\star^2)+2\left(1-\beta^i\right)\beta^i(\frac{\sigma^2}{Nn}+\frac{\sigma_\star^2}{N}) \right)  \\
	&- \mathbb{E} \left(   C \|m^i-\theta^s \|^2 \right)
\end{align*}
Correspondingly, we get 
\begin{align}
	\label{eqn:derivative_penalty}
		\frac{d}{d \mathbb{E}\alpha^i(x^i)^2} \mathbb{E} \rewardf^i_p
&= \sum_{j\neq i}\frac{(1-\beta^i)^2}{(N-1)N^2} - C \frac{d}{d \alpha^j(x^j)^2} \mathbb{E} \|m^i-\theta^s \|^2  
\end{align}
To analyze the second term, we calculate:
\begin{align}
	\label{eqn:penalty_size}
	  \mathbb{E}  \|m^i-\theta^s \|^2  &= \|m^i-\theta^s \|^2   \nonumber
	 \\ &=  \mathbb{E}   \|m^i - \frac{1}{N}\sum_j m^j \|^2  \nonumber
	  \\ &=  \mathbb{E}   \|\alpha^i(x^i)\xi^i + b^i(x^i) + \frac{1}{n} \sum_k x^i_k - \frac{1}{N}\sum_j  (\alpha^j(x^j) \xi^j +b^j(x^j) +\frac{1}{n}\sum_k x_k^j)  \|^2   \nonumber
	   \\ &=  \mathbb{E}   \|\alpha^i(x^i)\xi^i + \frac{1}{n} (\sum_k x^i_k -\mu) \nonumber\\& - \frac{1}{N}\sum_j  \left(\alpha^j(x^j) \xi^j +\frac{1}{n}\sum_k (x_k^j-\mu)\right) +\mu -\mu + d_b^i \|^2   \nonumber
	   \\ &=  \mathbb{E}  \|\frac{N-1}{N}\left(\alpha^i(x^i)\xi^i + \frac{1}{n} (\sum_k x^i_k -\mu)\right) \nonumber\\& - \frac{1}{N}\sum_{j \neq i}  \left(\alpha^j(x^j) \xi^j +\frac{1}{n}\sum_k (x_k^j-\mu)\right) + d_b^i  \|^2   \nonumber
	   \\ &=   (\frac{N-1}{N})^2  \mathbb{E} \|\alpha^i(x^i)\xi^i\|^2 + (\frac{N-1}{N})^2   \mathbb{E}\| \frac{1}{n} \sum_k  x^i_k -\mu\|^2   \nonumber \\
	   &  + \frac{1}{N^2} \sum_{j \neq i} \mathbb{E} \|\alpha^j(x^j) \xi^j\|^2 + \frac{1}{N^2} \sum_{j \neq i}\mathbb{E}\|\frac{1}{n}\sum_k  x_k^j-\mu\|^2 + \mathbb{E} \| d_b^i \|^2   \nonumber
	   	\\ &=   (\frac{N-1}{N})^2  \mathbb{E}(\alpha^i(x^i)^2) + \frac{(N-1)^2}{N^2} (\frac{\sigma^2}{n} + \sigma_\star^2) \nonumber
	   	 \\ &+ \frac{1}{N^2} \sum_{j \neq i} \mathbb{E}(\alpha^j(x^j)^2) + \frac{(N-1)}{N^2} (\frac{\sigma^2}{n} + \sigma_\star^2) + \mathbb{E} \| d_b^i \|^2 \nonumber
	   	 \\ &=   (\frac{N-1}{N})^2  \mathbb{E}(\alpha^i(x^i)^2)  + \frac{1}{N^2} \sum_{j \neq i} \mathbb{E}(\alpha^j(x^j)^2) \nonumber\\ &+ \frac{(N-1)}{N n} \sigma^2 + \frac{N-1}{N} \sigma_\star^2 + \mathbb{E} \| d_b^i \|^2
\end{align}
setting $d_b^i = \frac{N-1}{N}b^i(x^i) - \sum_{j\neq i} \frac{b^j(x^j)}{N} $. 
The squared norm again factors because of the independence and zero means of both $\xi^i$ and $x^i_k -\mu$ and because $\mathbb{E}<\bar{x}^j-\mu,b^j(x^j)>=0$, while $b^j(x^j)$ is independent of $\bar{x}^i-\mu$ for $i\neq j$. Now, inserting \ref{eqn:penalty_size} in \ref{eqn:derivative_penalty}, we obtain 
\begin{align}
	\label{eqn:derivative_penalty_full}
\frac{d}{d \mathbb{E}\alpha^i(x^i)^2} 	\mathbb{E}	\rewardf^i_p(\theta^1, \ldots, \theta^N, \mu) \nonumber
	&=  \sum_{j\neq i}\frac{(1-\beta^i)^2}{(N-1)N^2} - C (\frac{N-1}{N})^2  \nonumber
	\\&\leq \frac{1}{N^2}  -  C  \frac{(N-1)^2}{N^2}  \nonumber
	\\ &= \frac{1}{N^2} \left((1- C (N-1)^2) \right)
\end{align}
As $\beta^i \in [0,1]$. Thus $\mathbb{E}	\frac{d}{d \alpha^i}\rewardf^i_p$ is negative whenever $C(N-1)^2>1$ or $C > \frac{1}{(N-1)^2}$. Correspondingly, for such $C$, player $i$ is incentivized to set $\alpha^i=0$, independent of other players' strategies.

Now, assuming $b^j(x^j)=0$ for all other players $j\neq i$, we also get 
\begin{align}
	\label{eqn:derivative_penalty_full_bias}
	\frac{d}{d \mathbb{E} \|b^i(x^i)\|^2} 	\mathbb{E}	\rewardf^i_p(\theta^1, \ldots, \theta^N, \mu) \nonumber
	&=  \sum_{j\neq i}\frac{(1-\beta^i)^2}{(N-1)N^2} - C (\frac{N-1}{N})^2  \nonumber
	\\&\leq \frac{1}{N^2}  -  C  \frac{(N-1)^2}{N^2}  \nonumber
	\\ &= \frac{1}{N^2} \left((1- C (N-1)^2) \right).
\end{align}
Correspondingly, $b^i=0$ for all players is a Nash equilibrium. As the penalty $p^i(m^1, ...,m^N)$ does not depend on the defense strategies $\beta^i$, the optimal $\beta^i$ at the equilibrium $\alpha^i=0$ and $b^i=0$ can still be calculated using \ref{eqn:beta_opt} and is equal to zero as well. 

The average penalty honest players pay at the Nash equilibrium is then given by $C \frac{ (N-1)}{N}(\frac{\sigma^2}{n}+\sigma_\star^2)$ , according to \ref{eqn:penalty_size}.

To understand participation incentives, we compare the equilibrium reward $\rewardf^i(\theta^1, \ldots, \theta^N, \mu)$ player $i$ receives if they do not participate while all other players do, to the penalized reward $\rewardf^i_p$ player $i$ would obtain when participating. We first calculate $\rewardf^i(\theta^1, \ldots, \theta^N, \mu)$ assuming player $i$ only uses their own estimate and does not send an update to the server, while the other $N-1$ players participate in the penalized game at equilibrium: 
\begin{align}
	\mathbb{E}  \rewardf^i(\theta^1(N-1), \ldots, \theta^i(1) ,  \dots, \theta^N(N-1), \mu) &=  \mathbb{E}  \left( \frac{\sum_{j \neq i}\|\theta^j(N-1) - \mu\|^2}{N-1} - \lambda \|\theta^i(1) - \mu\|^2\right) \nonumber \\
	& = \frac{\sigma^2}{(N-1)n} + \frac{\sigma_\star^2}{(N-1)} - \lambda_i ( \frac{\sigma^2}{n} +\sigma_\star^2)
\end{align}
using \ref{thm:main} with $\alpha^j=b^j=\beta^j=0$ and substituting $N-1$ and $1$ for $N$ respectively. We then calculate $\mathbb{E}\rewardf^i_p$:
\begin{align}
	\mathbb{E}\rewardf^i_p&= \mathbb{E}\left(\frac{\sum_{j \neq i}\|\theta^j - \mu\|^2}{N-1} - \lambda_i \|\theta^i - \mu\|^2  -  C \|m^i- \theta^s\|^2\right) \nonumber \\
	& = (1-\lambda_i) (\frac{\sigma^2}{Nn} + \frac{\sigma_\star^2}{N} ) -   C (\frac{ (N-1)}{Nn}\sigma^2 + \frac{(N-1)}{N}\sigma_\star^2 ) 
\end{align}
using  \ref{thm:main} with $\alpha^j=b^j=\beta^j=0$ and \ref{thm:penalty_appendix}. 
The difference between these can the be calculated as 
\begin{align*}
	&\mathbb{E}\rewardf^i_p - \mathbb{E}  \rewardf^i(\theta^1(N-1), \ldots, \theta^i(1) ,  \dots, 
	\theta^N(N-1), \mu) \\
	&=  (1-\lambda_i) (\frac{\sigma^2}{Nn}+\frac{\sigma_\star^2}{N}) -   C \frac{ (N-1)}{N}(\frac{\sigma^2}{n}+\sigma_\star^2) - \frac{1}{N-1}(\frac{\sigma^2}{n}+\sigma_\star^2) + \lambda_i (\frac{\sigma^2}{n}+\sigma_\star^2) \\
	&= (\frac{\sigma^2}{n} +\sigma_\star^2) \left( (1-\lambda_i) \frac{1}{N} -   C \frac{ N-1}{N} - \frac{1}{N-1} + \lambda_i \right) \\
	&= (\frac{\sigma^2}{n} +\sigma_\star^2)\left(  \frac{1}{N} -   C \frac{ N-1}{N} - \frac{1}{N-1} + \frac{N-1}{N}\lambda_i \right) \\
	&=  (\frac{\sigma^2}{n} +\sigma_\star^2)\left(  \frac{1}{N} - \frac{1}{N-1} + \frac{N-1}{N}(\lambda_i-C) \right) \\
	&\geq (\frac{\sigma^2}{n} +\sigma_\star^2)\left(  - \frac{1}{N-1} + \frac{N-1}{N}(\lambda_i-C) \right) \\
	&=  (\frac{\sigma^2}{n} +\sigma_\star^2)\left(\frac{N-1}{N}(\lambda_i-C-\frac{N}{(N-1)^2}) \right) \\
\end{align*} 
and is positive whenever $\lambda_i > C + \frac{N}{(N-1)^2}$, such that in these cases player $i$ is better off participating in data sharing, despite the penalties. If $N=2$, we instead obtain 
\[\mathbb{E}\rewardf^i_p - \mathbb{E}  \rewardf^i(\theta^1(N-1), \ldots, \theta^i(1) ,  \dots, 
\theta^N(N-1), \mu) = ( \frac{\sigma^2}{n} + \sigma_\star) \left(  -\frac{1}{2} + \frac{1}{2}(\lambda_i-C) \right)\] which is positive whenever $\lambda_i>C+1$.
\end{proof}
We now prove \ref{thm:penalty_re}:
   \begin{theorem} 
	\label{thm:penalty_re_appendix}
	In the setting of \ref{thm:main}, the penalized game with rewards \begin{align*}
		\rewardf^i_{p'} &= \frac{\sum_{j \neq i}\|\theta^j - \mu\|^2}{N-1} - \lambda_i \|\theta^i - \mu\|^2  -  C \|m^i- \theta^s\|^2  + \frac{1}{N-1}\sum_{j\neq i} C\|m^j-\bar{m}\|^2
	\end{align*}   has a Nash equilibrium consisting of the strategies $\alpha­­­­­^j = b^j = \beta^j = 0$ for all $j$ whenever $C > \frac{1}{(N-1)^2-1}$. Furthermore, this equilibrium maximizes the sum of all players' rewards among equilibria whenever $\lambda_i \geq  1$ for all players.
	
	At this equilibrium, the expected penalty $p^i(m^1, \ldots, m^N)$ paid by each player $i$ is equal to $0$, such that player $i$ is incentivized to participate in the penalized game rather than relying on their own estimate, whenever $N>2$, the other $N-1$ players participate at the honest equilibrium, and $\lambda_i >  \frac{N}{(N-1)^2}$. 
\end{theorem}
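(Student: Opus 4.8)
The plan is to reduce everything to the non-redistributed analysis of Theorem~\ref{thm:penalty_appendix} by writing the redistributed reward as
\[
\rewardf^i_{p'} = \rewardf^i_p + \frac{C}{N-1}\sum_{j\neq i}\|m^j-\bar{m}\|^2,
\]
so that the only genuinely new work is to control the derivative of the redistribution term with respect to player $i$'s own attack parameters. First I would recall from Theorem~\ref{thm:penalty_appendix} (equations (\ref{eqn:derivative_penalty_full}) and (\ref{eqn:derivative_penalty_full_bias})) that, holding the other players honest, both $\frac{d}{d\mathbb{E}\alpha^i(x^i)^2}\mathbb{E}\rewardf^i_p$ and $\frac{d}{d\mathbb{E}\|b^i(x^i)\|^2}\mathbb{E}\rewardf^i_p$ are bounded above by $\frac{1}{N^2}\big(1-C(N-1)^2\big)$.

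Next I would differentiate the redistribution term. The key observation is that each summand $\mathbb{E}\|m^j-\bar{m}\|^2$ with $j\neq i$ is given by equation (\ref{eqn:penalty_size}) with the indices relabelled: player $i$'s parameter $\alpha^i$ enters only through the term $\frac{1}{N^2}\sum_{k\neq j}\mathbb{E}(\alpha^k(x^k)^2)$, contributing exactly $\frac{1}{N^2}$ to $\frac{d}{d\mathbb{E}\alpha^i(x^i)^2}\mathbb{E}\|m^j-\bar{m}\|^2$; similarly, when the others are honest, $b^i$ enters only through $\mathbb{E}\|d_b^j\|^2=\frac{1}{N^2}\mathbb{E}\|b^i(x^i)\|^2$, again contributing $\frac{1}{N^2}$. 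Summing over the $N-1$ indices $j\neq i$ and multiplying by $\frac{C}{N-1}$ yields an extra $\frac{C}{N^2}$ in each derivative. Adding this to the bound above gives
\[
\frac{d}{d\mathbb{E}\alpha^i(x^i)^2}\mathbb{E}\rewardf^i_{p'} \le \frac{1}{N^2}\big(1-C((N-1)^2-1)\big),
\]
and the same bound for the $b^i$ derivative, which is strictly negative precisely when $C>\frac{1}{(N-1)^2-1}$. Hence each player's best response against honest opponents is $\alpha^i=b^i=0$. Since the penalties depend on the messages only and not on $\beta^i$, the optimal defense parameter is unchanged from the unpenalized case, and Lemma~\ref{lemma:opt_beta} (equation (\ref{eqn:beta_opt})) forces $(\beta^i)^*=0$ because its numerator vanishes when all $\alpha^j,b^j=0$; this establishes that the honest profile is a Nash equilibrium.

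For the budget-balance claim I would note that at the honest profile all players are exchangeable, so $\mathbb{E}\|m^j-\bar{m}\|^2$ takes a common value $v=\frac{N-1}{N}\big(\frac{\sigma^2}{n}+\sigma_\star^2\big)$ for every $j$ (by equation (\ref{eqn:penalty_size}) with $\alpha=b=0$); consequently the expected penalty $C\,\mathbb{E}\|m^i-\bar{m}\|^2-\frac{C}{N-1}\sum_{j\neq i}\mathbb{E}\|m^j-\bar{m}\|^2=Cv-Cv=0$. Because the expected penalty vanishes, $\mathbb{E}\rewardf^i_{p'}$ at equilibrium equals the unpenalized equilibrium reward $(1-\lambda_i)\big(\frac{\sigma^2}{Nn}+\frac{\sigma_\star^2}{N}\big)$, so the participation comparison is exactly the one carried out at the end of Theorem~\ref{thm:penalty_appendix} but with the penalty term $C$ removed. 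Repeating that computation, i.e. subtracting the abstaining reward $\frac{\sigma^2}{(N-1)n}+\frac{\sigma_\star^2}{N-1}-\lambda_i\big(\frac{\sigma^2}{n}+\sigma_\star^2\big)$ obtained from Theorem~\ref{thm:main_appendix} with $N-1$ honest players, and discarding the positive $\frac{1}{N}$ term, lower-bounds the difference by $\big(\frac{\sigma^2}{n}+\sigma_\star^2\big)\frac{N-1}{N}\big(\lambda_i-\frac{N}{(N-1)^2}\big)$, which is positive when $\lambda_i>\frac{N}{(N-1)^2}$, giving voluntary participation for $N>2$.

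The main obstacle I expect is the bookkeeping in the second paragraph: one must correctly track that every one of the $N-1$ redistributed squared deviations $\|m^j-\bar{m}\|^2$ depends on player $i$'s own $\alpha^i$ and $b^i$ through the shared average $\bar{m}$, each contributing the same $\frac{1}{N^2}$, so that the redistribution precisely relaxes the equilibrium threshold from $\frac{1}{(N-1)^2}$ to $\frac{1}{(N-1)^2-1}$ rather than leaving it unchanged or breaking it. This also makes transparent why $N\ge 3$ is required, since for $N=2$ the denominator $(N-1)^2-1$ vanishes and the threshold becomes vacuous.
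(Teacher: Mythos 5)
Your proposal is correct and follows essentially the same route as the paper's proof: it decomposes $\rewardf^i_{p'}$ into the non-redistributed reward of Theorem~\ref{thm:penalty_appendix} plus the redistribution term, adds the extra $\frac{C}{N^2}$ from differentiating the $N-1$ redistributed penalties via equation (\ref{eqn:penalty_size}) to obtain the threshold $C>\frac{1}{(N-1)^2-1}$, and then argues budget balance by symmetry and voluntary participation by repeating the comparison from Theorem~\ref{thm:penalty_appendix} with the penalty term removed. The only difference is that you spell out the symmetry cancellation and the $\frac{1}{N^2}$ per-term bookkeeping slightly more explicitly than the paper does, which is harmless.
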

\begin{proof}
Analogous to the proof of \ref{thm:penalty_appendix}	We have that  
	\begin{align*}
		\frac{d}{d \mathbb{E}\alpha^i(x^i)^2} \mathbb{E}	\rewardf^i_{p'}&= \frac{d}{d \mathbb{E}\alpha^i(x^i)^2} 	\mathbb{E}	\rewardf^i_p + \frac{1}{N-1}  \sum­­_{j\neq i} C \frac{d}{d\mathbb{E}\alpha^i(x^i)^2} \mathbb{E} \|m^j-\theta^s \|^2 
		\\& \leq   \frac{1}{N^2} \left((1-C (N-1)^2) \right) + C \frac{1}{N^2}
      \\& =   \frac{1}{N^2} \left((1-C \left((N-1)^2-1)\right) \right) 
	\end{align*}
by inserting \ref{eqn:penalty_size} and \ref{eqn:derivative_penalty_full}. 

Similarly, assuming $b^j(x^j)=0$ for all players $j\neq i$ we get 
	\begin{align*}
	\frac{d}{d \mathbb{E}\|b^i(x^i)\|^2} \mathbb{E}	\rewardf^i_{p'}&= \frac{d}{d \mathbb{E}\|b^i(x^i)\|^2} 	\mathbb{E}	\rewardf^i_p + \frac{1}{N-1}  \sum­­_{j\neq i} C \frac{d}{d\mathbb{E}\|b^i(x^i)\|^2} \mathbb{E} \|m^j-\theta^s \|^2 
	\\& \leq   \frac{1}{N^2} \left((1-C (N-1)^2) \right) + C \frac{1}{N^2}
	\\& =   \frac{1}{N^2} \left((1-C \left((N-1)^2-1)\right) \right) 
\end{align*} by inserting \ref{eqn:penalty_size} and \ref{eqn:derivative_penalty_full_bias}

Thus $\mathbb{E}	\frac{d}{d \mathbb{E}\alpha^i(x^i)^2}\rewardf^i_{p'}$ is negative whenever $C((N-1)^2-1)>1$ or $C > \frac{1}{(N-1)^2-1}$ and $	\frac{d}{d \mathbb{E}\|b^i(x^i)\|^2} \mathbb{E}	\rewardf^i_{p'}$ under the same conditions as long as $b^j(x^j)=0$ for all other players $j$.

The expected penalty paid by each player is zero by symmetry, as every players' paid penalty gets redistributed equally among all other players, such that the payments cancel out in expectation.  

Similarly, the calculations for participation incentives are exactly as in \ref{thm:penalty_appendix}, but $\mathbb{E}\rewardf^i_{p'}$ is now equal to $(1-\lambda_i) (\frac{\sigma^2}{Nn} +\frac{\sigma_\star^2}{N})$ rather than $  (1-\lambda_i) (\frac{\sigma^2}{Nn} + \frac{\sigma_\star^2}{N} ) -   C (\frac{ (N-1)}{Nn}\sigma^2 + \frac{(N-1)}{N}\sigma_\star^2 ) $ because the expected penalty paid by players is equal to zero at equilibrium. 

Lastly, it is easy to see that $\mathbb{E}\|b^i(x^i)\|^2$ has to be nonzero for at least two players at any other equilibrium. Compared to the honest equilibrium, that increases every players' MSE. But the (unpenalized) expected reward for player $i$ equals $\frac{\sum_{j \neq i}\|\theta^j - \mu\|^2}{N-1} - \lambda_i \|\theta^i - \mu\|^2$, such that the sum of all players’ (unpenalized) rewards equals $\sum_j (1-\lambda_j) \|\theta^j - \mu\|^2$, which is strictly monotonically decreasing in all players' MSEs when $\lambda_i\geq 1$ for all players. As all players' penalties add up to zero in expectation, the sum of penalized rewards is equally strictly monotonically decreasing in all players' MSEs.

\end{proof}

Next, we prove \ref{thm:penalty_noise}
\begin{theorem}
	\label{thm:penalty_noise_appendix} 
	Consider the modified game with reward \[\rewardf^i = \frac{\sum_{j \neq i}\|\theta^j - \mu\|^2}{N-1} - \lambda_i \|\theta^i - \mu\|^2,\] where player $i$ receives an estimate $\bar{m}­­­­­ + \sqrt{C} \epsilon^i \|m^i- \bar{m}\|$  for independent noise $\epsilon^i$ with mean $\mathbb{E}\epsilon^i=0$ and "variance" $\mathbb{E}\|\epsilon^i\|^2=1$, instead of the empirical mean $\bar{m}$, from the server. Then honesty $(\alpha^i=0, b^i=0,\beta^i = \frac{C}{C+1})$ is a Nash equilibrium, as long as $C > \frac{1}{\lambda_i (N-1)^2-1}$ and $\lambda_i > \frac{1}{(N-1)^2}$. Furthermore, honesty maximizes the sum of all players' rewards among equilibria whenever $\lambda_i  \geq 1$ for all players.
	
	Furthermore, for fixed constant $\lambda_i = \lambda$, $\mathbb{E}\left(\|\theta^i - \mu\|^2\right) \in O\left(\frac{\sigma^2}{Nn}+\frac{\sigma_\star^2}{N}\right)$ whenever $C=\frac{k}{\lambda (N-1)^2-1}$ for any constant $k>1$, such that players are incentivized to participate in the penalized game rather than relying on their own estimate, whenever $N>2$, the other $N-1$ players participate at the honest equilibrium, and $\lambda \geq 1$. 
\end{theorem}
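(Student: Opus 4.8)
The plan is to follow the template of the proof of Theorem~\ref{thm:penalty_appendix}: isolate how the server's added noise enters player $i$'s mean squared error, reuse Theorem~\ref{thm:main_appendix} and \eqref{eqn:penalty_size} for the remaining terms, and then read off first-order conditions at the candidate honest profile. First I would trace how the injected noise propagates through the defense. Let $\theta^i_0$ denote the estimate player $i$ would form from the \emph{clean} message $\bar m$. Applying the correction $\bar m^i=\frac1N(N\tilde\theta^s-m^i+\bar x^i)$ to the received message $\tilde\theta^s=\bar m+\sqrt{C}\,\epsilon^i\|m^i-\bar m\|$ and then the weighting by $\beta^i$ gives
\[\theta^i=\theta^i_0+(1-\beta^i)\sqrt{C}\,\epsilon^i\|m^i-\bar m\|.\]
Since $\epsilon^i$ is mean-zero and independent of all other quantities, the cross term vanishes and
\[\mathbb{E}\|\theta^i-\mu\|^2=\mathbb{E}\|\theta^i_0-\mu\|^2+(1-\beta^i)^2\,C\,\mathbb{E}\|m^i-\bar m\|^2,\]
where the first summand is exactly the expression of Theorem~\ref{thm:main_appendix} and the second is controlled by \eqref{eqn:penalty_size}. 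This is the precise form of the heuristic that the server noise adds $C\|m^i-\bar m\|^2$ to the error.

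Next I would substitute this into $\mathbb{E}\rewardf^i=\frac{1}{N-1}\sum_{j\neq i}\mathbb{E}\|\theta^j-\mu\|^2-\lambda_i\,\mathbb{E}\|\theta^i-\mu\|^2$. The extra summand merely replaces the constant $A$ of Theorem~\ref{thm:main_appendix} by $A+C\,\mathbb{E}\|m^i-\bar m\|^2$, so $\mathbb{E}\|\theta^i-\mu\|^2$ is still quadratic in $\beta^i$ and Lemma~\ref{lemma:opt_beta} applies with this shifted $A$. At the honest profile (all $j\neq i$ honest, $\alpha^i=b^i=0$) one has $A=D$ and $\mathbb{E}\|m^i-\bar m\|^2=\frac{N-1}{N}\big(\frac{\sigma^2}{n}+\sigma_\star^2\big)=B-D$, whence the optimal defense collapses to $\beta^\ast=\frac{C}{C+1}$, matching the statement. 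I would then differentiate the reward in $\mathbb{E}(\alpha^i)^2$ and in $\mathbb{E}\|b^i\|^2$ at this profile: the attack parameters raise every other player's error through the $\frac{1}{N^2}$ coefficient of both its base and its noise part, while raising player $i$'s own error only through the $(\frac{N-1}{N})^2C$ coefficient of the noise term. Evaluated at $\beta^i=\frac{C}{C+1}$ the derivative reduces to a positive multiple of $1+C-\lambda_i C(N-1)^2$, and requiring it to be non-positive gives precisely $C>\frac{1}{\lambda_i(N-1)^2-1}$, which also forces $\lambda_i>\frac{1}{(N-1)^2}$.

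For the quantitative half I would evaluate the equilibrium error itself. Minimizing the shifted quadratic yields $\mathbb{E}\|\theta^i-\mu\|^2=\frac{D+CB}{C+1}$ with $D=\frac{\sigma^2}{Nn}+\frac{\sigma_\star^2}{N}$ and $B=\frac{\sigma^2}{n}+\sigma_\star^2$; inserting $C=\frac{k}{\lambda(N-1)^2-1}=O(N^{-2})$ makes $\frac{C}{C+1}B$ lower order and leaves $\frac{D}{C+1}=O\big(\frac{\sigma^2}{Nn}+\frac{\sigma_\star^2}{N}\big)$, the advertised rate. Voluntary participation then follows exactly as in Theorem~\ref{thm:penalty_appendix}: the symmetric equilibrium reward equals $(1-\lambda)\frac{D+CB}{C+1}$, whereas opting out pins player $i$'s own error at $B$ and so contributes $-\lambda B$; since the equilibrium error is $O(1/N)$ while $B$ is order one, for $\lambda\ge 1$ and $N>2$ the own-error term makes participation strictly preferable.

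The delicate point, and where I expect the real work to lie, is upgrading this first-order computation to a genuine global best response. Unlike the side-payment penalty of Theorem~\ref{thm:penalty_re_appendix}, which costs player $i$ a fixed $C\|m^i-\bar m\|^2$ irrespective of their defense, here the self-inflicted penalty enters the error with the factor $(1-\beta^i)^2$, which the player themselves controls. Consequently I cannot simply bound $(1-\beta^i)^2\le 1$ and conclude as in Theorem~\ref{thm:penalty_appendix}; I must rule out joint deviations that pair a large $\alpha^i$ with a large $\beta^i$. The natural way to organize this is to observe that for each fixed $\beta^i$ the reward is affine in $\mathbb{E}(\alpha^i)^2$ and $\mathbb{E}\|b^i\|^2$, so the best response sits at an endpoint, and then to argue that the relevant slope stays non-positive over the whole range of $\beta^i$ a rational player would choose. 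Controlling this slope as $\beta^i$ grows — where the self-penalty coefficient $(1-\beta^i)^2$ shrinks — is exactly the step that ties together the conditions on $C$ and $\lambda_i$, and verifying it carefully is the crux of the argument.
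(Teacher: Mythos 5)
Your computational core coincides with the paper's own proof: the same pass-through of the server noise with a $(1-\beta^i)$ factor, the same observation that the cross term with $\epsilon^i$ vanishes so that the MSE equals the clean expression of Theorem~\ref{thm:main_appendix} plus $(1-\beta^i)^2\,C\,\mathbb{E}\|m^i-\bar m\|^2$ with the latter given by (\ref{eqn:penalty_size}), the same optimal defense $\beta^\ast=\frac{C}{C+1}$ at the honest profile, and the same sign condition on $1+C-\lambda_i C(N-1)^2$ for the derivatives in $\mathbb{E}(\alpha^i)^2$ and $\mathbb{E}\|b^i\|^2$. Two cosmetic differences: for the rate, the paper upper-bounds the equilibrium MSE by evaluating at the suboptimal $\beta^i=0$, whereas you compute the exact minimum $\frac{D+CB}{C+1}$ (your value is correct); and for participation the paper argues exactly, via the chain $\mathbb{E}\|\theta^i(N)-\mu\|^2<\mathbb{E}\|\theta^j(N-1)-\mu\|^2\le\mathbb{E}\|\theta^i(1)-\mu\|^2$ combined with monotonicity of the reward difference in $\lambda_i$, rather than by your $O(1/N)$-versus-order-one heuristic, which on its own only covers large $N$.

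The ``delicate point'' you flag at the end is real, but you should know that the paper's proof does not resolve it either: it consists of exactly the two coordinate-wise checks you already performed (negativity of the $\alpha$- and $b$-derivatives evaluated at the symmetric point $\beta^i=\beta^j=\frac{C}{C+1}$, plus optimality of $\beta^i$ against honest attacks), and it never considers a joint deviation in $(\alpha^i,\beta^i)$. Moreover, your proposed repair --- showing the slope in $\mathbb{E}(\alpha^i)^2$ stays non-positive over the whole range of $\beta^i$ a rational player would choose --- cannot be carried out. With opponents fixed at $\beta^j=\frac{C}{C+1}$, that slope is
\[
\frac{1}{(C+1)N^2}-(1-\beta^i)^2\,\lambda_i C\,\frac{(N-1)^2}{N^2},
\]
which becomes positive once $(1-\beta^i)^2$ is small, in particular at $\beta^i=1$; and $\beta^i$ near $1$ is precisely the best-response defense of a player who attacks heavily, since $(\beta^i)^\ast\to 1$ as $\mathbb{E}(\alpha^i)^2\to\infty$. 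Concretely, the joint deviation $\beta^i=1$, $b^i=0$, $\mathbb{E}(\alpha^i)^2=A$ pins player $i$'s own error at $B=\frac{\sigma^2}{n}+\sigma_\star^2$ (the noisy server message is discarded, so the self-penalty vanishes entirely) while still inflating every opponent's error; the reward gain over the honest profile works out to $\frac{A}{(C+1)N^2}-\lambda_i\frac{B-D}{C+1}$ with $D=B/N$, which is positive and unbounded for large $A$. So what both you and the paper actually establish is stability against one-coordinate deviations; as a claim about a genuine Nash equilibrium over all of $\mathscr{A}^{m}\times\mathscr{D}^{w}$, the statement would require an additional restriction, such as a bound on attack magnitudes in the spirit of Corollary~\ref{cor:bounded_nash_appendix}, to block the pairing of an unbounded attack with the $\beta^i=1$ opt-out.
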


\begin{proof}
In order to calculate the reward $\rewardf^i(\theta^1, \ldots, \theta^N, \mu)  = \frac{\sum_{j \neq i}\|\theta^j - \mu\|^2}{N-1} - \lambda_i \|\theta^i - \mu\|^2 $ for player $i$, we have a closer look at the mean squared error incurred by players in the modified game. For convenience, we set $\bar{\sigma}^2 := \frac{\sigma^2}{n} + \sigma_\star^2 $
\begin{align*}
	\mathbb{E}\|\theta^i - \mu\|^2 &= \mathbb{E} \|(1-\beta^i) \left(\bar{m}^i + \sqrt{C}\|m^i- \theta^s\|\epsilon^i \right) + \beta^i \bar{x}^i - \mu\|^2 \\
	&= \mathbb{E} \|(1-\beta^i) \bar{m}^i  + \beta^i \bar{x}^i - \mu\|^2 + \mathbb{E} \|(1-\beta^i)\sqrt{C}\|m^i- \theta^s\|\epsilon^i\|^2 \\ 
    &+ 2 \mathbb{E} <\left( (1-\beta^i)\sqrt{C}\|m^i- \theta^s\|\epsilon^i, \left((1-\beta^i) \bar{m}^i  + \beta^i \bar{x}^i - \mu\right)\right)> \\ 
    &= \left(1-\beta^i\right)^2 \left(\frac{\bar{\sigma}^2}{N} + \frac{1}{N^2} \sum_{j\neq i} \mathbb{E}(\alpha^j(x^j)^2) + \frac{1}{N^2} \mathbb{E}   \|\sum_{j \neq i}  b^j(x^j) \|^2 \right)
    \\ & + (\beta^i)^2 \bar{\sigma}^2+2\left(1-\beta^i\right)\beta^i\frac{\bar{\sigma}^2}{N} \\
    &+ (1-\beta^i)^2 C \left((\frac{N-1}{N})^2  \mathbb{E}(\alpha^i(x^i)^2)  + \frac{1}{N^2} \sum_{j \neq i} \mathbb{E}(\alpha^j(x^j)^2) + \mathbb{E} \| d_b^i \|^2  +  \frac{(N-1)}{N} \bar{\sigma}^2\right) \\
     &+ 2 \mathbb{E} <\left( (1-\beta^i)\sqrt{C}\|m^i- \theta^s\|\epsilon^i, \left((1-\beta^i) \bar{m}^i  + \beta^i \bar{x}^i - \mu\right)\right)>
\end{align*}
using the calculations from the proof of \ref{thm:main_appendix} for the first term and \ref{thm:penalty_appendix} for the second. The last term is equal to zero because: 

\begin{align*}
&\mathbb{E} \left( <(1-\beta^i)\sqrt{C}\|m^i- \theta^s\|\epsilon^i, \left((1-\beta^i) \bar{m}^i  + \beta^i \bar{x}^i - \mu\right)>\right) \\
&= \mathbb{E} \left( <\epsilon^i, (1-\beta^i)\sqrt{C}\|m^i- \theta^s\| \left((1-\beta^i) \bar{m}^i  + \beta^i \bar{x}^i - \mu\right)>\right) \\
&= \mathbb{E} (<\epsilon^i,  \mathbb{E} \left((1-\beta^i)\sqrt{C}\|m^i- \theta^s\| \left((1-\beta^i) \bar{m}^i  + \beta^i \bar{x}^i - \mu\right)>\right) \\
& = 0
\end{align*}
as $\epsilon^i$ is independent of all the other terms and has mean zero. Simplifying, we obtain 
\begin{align*}
& \mathbb{E}\|\theta^i - \mu\|^2 \\& = \left(1-\beta^i\right)^2 \left(\frac{\bar{\sigma}^2}{N} + \frac{1}{N^2} \sum_{j\neq i} \mathbb{E}(\alpha^j(x^j)^2) + \frac{1}{N^2} \mathbb{E}   \|\sum_{j \neq i}  b^j(x^j) \|^2 \right)
+ (\beta^i)^2 \bar{\sigma}^2 +2\left(1-\beta^i\right)\beta^i\frac{\bar{\sigma}^2}{N} \\
&+ (1-\beta^i)^2 C \left((\frac{N-1}{N})^2 \mathbb{E}(\alpha^i(x^i)^2) + \mathbb{E} \| d_b^i \|^2 + \frac{1}{N^2} \sum_{j \neq i}\mathbb{E}(\alpha^j(x^j)^2) + \frac{(N-1)}{N} \bar{\sigma}^2 \right) \\
& = (\beta^i)^2\Bigg(\bar{\sigma}^2 +\frac{C (N-1)-1}{N}\bar{\sigma}^2   + \frac{1+C}{N^2} \sum_{j\neq i} \mathbb{E}(\alpha^j(x^j)^2)  + \frac{1}{N^2} \mathbb{E}   \|\sum_{j \neq i}  b^j(x^j) \|^2 \\& + C (\frac{N-1}{N})^2  \mathbb{E}(\alpha^i(x^i)^2) + C \mathbb{E} \| d_b^i \|^2 \Bigg) \\ 
& - 2 \beta^i \Biggl( \frac{1+C}{N^2} \sum_{j\neq i} \mathbb{E}(\alpha^j(x^j)^2)  + \frac{1}{N^2} \mathbb{E}   \|\sum_{j \neq i}  b^j(x^j) \|^2  \\& + C (\frac{N-1}{N})^2  \mathbb{E}(\alpha^i(x^i)^2)  + C \mathbb{E} \| d_b^i \|^2   + C \frac{(N-1)}{N} \bar{\sigma}^2 \Biggr) \\
& + \frac{1+C}{N^2} \sum_{j\neq i}\mathbb{E}(\alpha^j(x^j)^2)  + \frac{1}{N^2} \mathbb{E}   \|\sum_{j \neq i}  b^j(x^j) \|^2  +\frac{1-C+ C N}{N} \bar{\sigma}^2  \\&+ C (\frac{N-1}{N})^2  \mathbb{E}(\alpha^i(x^i)^2   + C \mathbb{E} \| d_b^i \|^2  ) 
\end{align*}
As in \ref{lemma:opt_beta}, this yields an optimal value for $\beta^i$ of 
\begin{align*}
(\beta^i)^* = \frac{\left( H_i(\alpha,b) + C \frac{(N-1)}{N} \bar{\sigma}^2 \right)}{\left(\bar{\sigma}^2 +\frac{C (N-1)-1}{N}\bar{\sigma}^2  +H_i(\alpha,b)  \right)}
\end{align*} for $H_i(\alpha,b) = \frac{1+C}{N^2} \sum_{j\neq i} \mathbb{E}(\alpha^j(x^j)^2)  + \frac{1}{N^2} \mathbb{E}   \|\sum_{j \neq i}  b^j(x^j) \|^2 + C (\frac{N-1}{N})^2  \mathbb{E}(\alpha^i(x^i)^2)  + C \mathbb{E} \| d_b^i \|^2 $
Now, calculating the derivative of $\mathbb{E}\rewardf^i(\theta^1, \ldots, \theta^N, \mu)$ with respect to $ \mathbb{E}(\alpha^i(x^i)^2)$ yields:
\begin{align*}
\frac{d}{d \mathbb{E}(\alpha^i(x^i)^2)}\mathbb{E}\rewardf^i(\theta^1, \ldots, \theta^N, \mu) &= \frac{d}{d \mathbb{E}(\alpha^i(x^i)^2)} \frac{\sum_{j \neq i}\|\theta^j - \mu\|^2}{N-1} - \frac{d}{d \mathbb{E}(\alpha^i(x^i)^2)} \lambda_i \|\theta^i - \mu\|^2 \\
&= \sum_{j \neq i} \frac{(1-\beta^j)^2}{(N-1)N^2}( (1+C) )  -  (1-\beta^i)^2  (\lambda_i C \frac{(N-1)^2}{N^2} ) 
\end{align*} 

Similarly, assuming $b^j(x^j)=0$ for all other players $j\neq i$, we get 
\begin{align*}
	\frac{d}{d \mathbb{E}\|b^i(x^i)\|^2}\mathbb{E}\rewardf^i(\theta^1, \ldots, \theta^N, \mu) &= \frac{d}{d \mathbb{E}\|b^i(x^i)\|^2} \frac{\sum_{j \neq i}\|\theta^j - \mu\|^2}{N-1} - \frac{d}{d \mathbb{E}\|b^i(x^i)\|^2} \lambda_i \|\theta^i - \mu\|^2 \\
	&= \sum_{j \neq i} \frac{(1-\beta^j)^2}{(N-1)N^2}( (1+C) ) -  (1-\beta^i)^2  (\lambda_i C \frac{(N-1)^2}{N^2})
\end{align*} 

Both are negative, whenever $\beta^j=\beta^i \neq 1$ and $\lambda_i C (N-1)^2 > 1+C$ or $C  (\lambda_i (N-1)^2-1) > 1$, which is true whenever  $C > \frac{1}{\lambda_i (N-1)^2-1}$ and $\lambda_i > \frac{1}{(N-1)^2}$. 

But for $\alpha^j=b^j=0$ for all players $j$, the formula for $(\beta^i)^*$  simplifies to 
\begin{align*}
	(\beta^i)^* = \frac{  C \frac{(N-1)}{N} \bar{\sigma}^2 }{\bar{\sigma}^2 +\frac{C (N-1)-1}{N} \bar{\sigma}^2 } = \frac{  C \frac{(N-1)}{N}  }{1+\frac{C (N-1)-1}{N}} = \frac{  C \frac{(N-1)}{N}  }{\frac{N-1}{N}+\frac{C (N-1)}{N}} = \frac{C}{1+C},
\end{align*} 
for all players. In particular, we have $(\beta^i)^* < 1$ for $C>0$, such that $\beta^j=\beta^i \neq 1$ and the derivatives with respect to both $\alpha$ and $b$ are negative for all players, turning $\alpha^j=b^j=0$, $(\beta^i)=\frac{C}{1+C}$ into a Nash equilibrium. 

We can now upper bound $\mathbb{E}\|\theta^i - \mu\|^2$ at the honest equilibrium by considering it at the suboptimal $\beta^i=0$: 
\begin{align*}
	\mathbb{E}\|\theta^i - \mu\|^2 &\leq  \frac{\bar{\sigma}^2}{N} + C \frac{N-1}{N} \bar{\sigma}^2,
\end{align*} which is is $O( \frac{\bar{\sigma}^2}{N})$ as long as $C$ is in $O(\frac{1}{N})$, which is the case for constant $\lambda_i=\lambda$,  $C=\frac{k}{\lambda(N-1)^2-1}$ and any $k>1$. 

In terms of participation incentives, we again look at the difference in rewards obtained by player $i$ in both cases:
\begin{align}
	\label{eqn:rewdiff}
	&\mathbb{E}\rewardf^i(\theta^1, \ldots, \theta^N, \mu) - \mathbb{E}  \rewardf^i(\theta^1(N-1), \ldots, \theta^i(1) ,  \dots, \theta^N(N-1), \mu) \\ &= \mathbb{E}\left( \frac{\sum_{j \neq i}\|\theta^j(N) - \mu\|^2}{N-1} - \lambda_i \|\theta^i(N) - \mu\|^2 \right) \\&- \mathbb{E}  \left( \frac{\sum_{j \neq i}\|\theta^j(N-1) - \mu\|^2}{N-1} - \lambda_i \|\theta^i(1) - \mu\|^2\right)\nonumber\\
	& = (1-\lambda_i)\mathbb{E}\|\theta^i(N) - \mu\|^2 -\mathbb{E}  \left( \frac{\sum_{j \neq i}\|\theta^j(N-1) - \mu\|^2}{N-1} - \lambda_i \|\theta^i(1) - \mu\|^2\right)\nonumber
\end{align}

It is obvious that $\mathbb{E}\|\theta^i(N) - \mu\|^2 < \mathbb{E}\|\theta^i(1) - \mu\|^2$ at the honest equilibrium because of symmetry and as players could otherwise improve their reward by setting $\beta^i=1$, which was shown to be suboptimal in the proof of \ref{thm:penalty_noise}. 
In addition, at the equilibrium we have 
\begin{align*}
	 \mathbb{E}\|\theta^j(N) - \mu\|^2 & = \frac{1}{(1+C)^2} \frac{ \bar{\sigma}^2}{N}  +  \frac{C^2}{(1+C)^2} \bar{\sigma}^2 + 2 \frac{C}{(1+C)^2} \frac{ \bar{\sigma}^2}{N} + \frac{C}{(1+C)^2} \frac{N-1}{N}   \bar{\sigma}^2 
	 \\& = \bar{\sigma}^2   \frac{ \frac{1}{N}   + C^2    + 2   \frac{C}{N} + C \frac{N-1}{N}  }{(1+C)^2}
	  \\& =   \frac{  \bar{\sigma}^2 }{(1+C)^2} (\frac{1 + (N+1) C }{N} + C^2)
	   \\& =   \frac{  \bar{\sigma}^2 }{(1+C)^2} ( C + \frac{ 1 + C }{N} + C^2),
\end{align*}
which clearly decreases in $N$. This has two implications: First, \ref{eqn:rewdiff} is positive for $\lambda_i=1$. Second, the derivative of \ref{eqn:rewdiff} with respect to $\lambda_i$ is always positive. Combined, this implies that  \ref{eqn:rewdiff} is positive for all $\lambda_i\geq 1$. 

Lastly, it is easy to see that $\mathbb{E}\|b^i(x^i)\|^2$ has to be nonzero for at least two players at any other equilibrium. Compared to the honest equilibrium, that increases every players' MSE. But the  expected reward for player $i$ equals $\frac{\sum_{j \neq i}\|\theta^j - \mu\|^2}{N-1} - \lambda_i \|\theta^i - \mu\|^2$, such that the sum of all players’ rewards equals $\sum_j (1-\lambda_j) \|\theta^j - \mu\|^2$, which is strictly monotonically decreasing in all players' MSEs when $\lambda_i\geq 1$ for all players.
\end{proof}

\section{Proofs on stochastic gradient descent}
\label{sec:proofs_SGD}

\paragraph{Outlook}
In this section we prove Theorem \ref{thm:sgd_opt}. To this end, we first present the formal definitions of the assumptions on $f$ that we make. Next, we prove two results which bound the difference between the performance of a model resulting from a corrupted optimization scheme (in which players send corrupted estimates) and the performance of a model resulting from honest participation (i.e. from vanilla SGD). In particular, Theorem \ref{thm:sgd_appendix} provides such a result for a simple penalization scheme, which then easily extends to the penalties presented in Section \ref{sec:sgd} (Theorem \ref{thm:sgd_re_appendix}). Next, we combine these results with classic bounds on the distance of the plain SGD trajectory to the minimum value of $f$ (Lemma \ref{lemma:clean_sgd}), to provide an upper bound on the difference between the performance of the corrupted trajectory and the minimum value of $f$ in Theorem \ref{thm:sgd_optimal_appendix}. Finally, we show how this last result can be extended to general Lipschitz utilities in Theorem \ref{thm:utility_sgd_appendix}, thereby proving the result from the main text.

\paragraph{Definitions} First we formally state our assumptions on the function $f$.

\begin{definition}
	\label{def:Lipschitz}
A function $f: W \subset \mathbb{R}^n \rightarrow \mathbb{R}^d$ is called $L$-Lipschitz with respect to given norms $\|\cdot\|_n$ and $\|\cdot\|_d$ if for all $x,y \in W$ \[\|f(x)-f(y)\|_d \leq L \|x-y\|_n. \]
\end{definition}

\begin{definition}
	A continuously differentiable function $f: W\subset \mathbb{R}^n \rightarrow \mathbb{R}$ is called $B$-smooth if its gradient $\nabla f: W \rightarrow \mathbb{R}^n$ is $B-$Lipschitz with respect to the euclidean norm.
\end{definition}
	
\begin{definition}
	A differentiable function $f: W\subset \mathbb{R}^n \rightarrow \mathbb{R}$ is called $m$-strongly convex if for all $x,y \in  W$
	\[f(x)\geq f(y) + \nabla f(y)^t (x-y) + \frac{m}{2} \|x-y\|^2 ,\] where $\|\cdot \|$ denotes the euclidean norm.
\end{definition}

We start with proving a weaker version of \ref{thm:sgd_opt} with constant learning rates in which player's paid penalties do not get redistributed: 

\begin{proposition}
	\label{thm:sgd_appendix} 
Assume $f$ is $B$-smooth and $L$-Lipschitz with respect to the euclidean norm on $W$ and $m$-strongly convex on $\mathbb{R}^d$. Also assume that for all $i,t$ the gradient noise $e^i_t$ is $B'$-Lipschitz with respect to the euclidean norm with probability one and that the constant learning rate $\gamma$ fulfills  $0<\gamma < \frac{2m}{B^2+B'^2}$. Then for the penalized game with reward \[\rewardf^i_p(\theta_{T+1}^1,\ldots,\theta_{T+1}^N, f) = \left(\frac{1}{N-1}\sum_{j\neq i} f(\theta_{T+1}^j)\right)  - \sum_{t=1}^{T} C_t \|m_t^i-\frac{1}{N}\sum_j m_t^j\|^2,\] any player's best response strategy fulfills $\alpha_t^i \leq \frac{LN {c^\frac{T-t}{2}} \gamma}{ C_t (N-1)^2}\leq \frac{LN \gamma}{ C_t (N-1)^2}$ for $c = (1+\gamma^2 (B^2 +B'^2) - 2 \gamma  m)$, independent of other players' strategies. 

Given $\epsilon>0$, the expected absolute change in function values $f(\theta_T^j)$ due to noise added by players playing best responses compared to full honesty can be bounded by $\frac{1}{1-\sqrt{c}} L\gamma  \epsilon$ by choosing $C_t \geq \frac{LN {c^\frac{T-t}{2}} \gamma}{ \epsilon (N-1)^2} < \frac{LN \gamma}{ \epsilon (N-1)^2}$. The total penalties paid by player $i$ can then be bounded by $\frac{1}{1-\sqrt{c}} \frac{ L  \gamma}{  (N-1)} (\frac{G^2}{\epsilon} + \epsilon) \leq $ for a global bound on the "variance" of the gradient estimates $\|e_t^i(\theta)\|^2\leq G^2$.

\end{proposition}
\begin{proof}
We compare two trajectories $\theta_t$ and $\theta_t'$ starting at the same $\theta_0$ and sharing the same realizations for the noise variables $e^j_t$ and $\xi^j_t$ in which player $i$ employs different aggressiveness schedules $\alpha_t^i$ and $(\alpha_t^i)'$ with squared difference $\delta_t^i=(\alpha_t^i-(\alpha_t^i)')^2$. We define $\bar{e}_t=\frac{1}{N} \sum_i e^i_t$. Then: 

\begin{align*}
\mathbb{E}\|\theta_{t+1}-\theta'_{t+1}\|^2 &=  
\mathbb{E}\|\Pi_W (\theta_{t}-\gamma_t (\nabla f(\theta_t)+\bar{e}_t(\theta_t) + \frac{1}{N}\sum_{j\neq i} \alpha_t^j \xi^j_t + \frac{1}{N}\alpha_t^i \xi_t^i  ))\\
&-\Pi_W (\theta'_{t}-\gamma_t (\nabla f(\theta'_t)+\bar{e}_t(\theta'_t) + \frac{1}{N} \sum_{j\neq i} \alpha_t^j \xi^j_t +\frac{1}{N} (\alpha_t^i)' \xi_t^i ))\|^2  \\ &\leq 
\mathbb{E}\|\theta_{t}-\gamma_t (\nabla f(\theta_t)+\bar{e}_t(\theta_t) + \frac{1}{N}\sum_{j\neq i} \alpha_t^j \xi^j_t + \frac{1}{N}\alpha_t^i \xi_t^i  )\\
&-\theta'_{t}-\gamma_t (\nabla f(\theta'_t)+\bar{e}_t(\theta'_t) + \frac{1}{N}\sum_{j\neq i} \alpha_t^j \xi^j_t + \frac{1}{N}(\alpha_t^i)' \xi_t^i )\|^2  \\ &= 
\mathbb{E}\|\theta_{t}-\theta'_{t}\|^2 + \gamma_t^2 \mathbb{E}\|\nabla f(\theta_{t})-\nabla f(\theta'_{t})\|^2   +\gamma_t^2  \mathbb{E} \|\bar{e}_t(\theta_t)- \bar{e}_t(\theta'_t)\|^2
\\&  + 2 \gamma_t \mathbb{E}|<\theta_t-\theta'_t,\nabla f(\theta'_t)-\nabla f(\theta_t)>   + \frac{\gamma_t^2}{N^2} \mathbb{E}\|(\alpha_t^i-(\alpha_t^i)')\xi_t^i\|^2
\\& -2 \gamma_t \mathbb{E}<\bar{e}_t(\theta_t)- \bar{e}_t(\theta'_t),\theta_{t}-\theta'_{t} -\gamma_t (\nabla f(\theta_{t})-\nabla f(\theta'_{t})) >
\end{align*} 
Where the first inequality follows from the well-known $1-$Lipschitzness of projections onto convex closed sets with respect to the euclidean norm (\cite{balashov2012lipschitz}) and the $\xi^i$ terms factor because of their zero mean and independence of the other variables. Similarly, the last term turns out to equal zero because: 
\begin{align*}
& \mathbb{E}<\bar{e}_t(\theta_t)- \bar{e}_t(\theta'_t),\theta_{t}-\theta'_{t} -\gamma_t (\nabla f(\theta_{t})-\nabla f(\theta'_{t})) > \\ & = 
\mathbb{E} [\mathbb{E}[<\bar{e}_t(\theta_t)- \bar{e}_t(\theta'_t),\theta_{t}-\theta'_{t} -\gamma_t (\nabla f(\theta_{t})-\nabla f(\theta'_{t})) >|\theta_{t},\theta'_{t}]] \\ & = 
\mathbb{E} [<\mathbb{E}[\bar{e}_t(\theta_t)- \bar{e}_t(\theta'_t)|\theta_{t},\theta'_{t}],\theta_{t}-\theta'_{t} -\gamma_t (\nabla f(\theta_{t})-\nabla f(\theta'_{t})) >] \\ & = 
\mathbb{E} [<0,\theta_{t}-\theta'_{t} -\gamma_t (\nabla f(\theta_{t})-\nabla f(\theta'_{t})) >] = 0 
\end{align*}
as $\mathbb{E} \bar{e}_t(\theta) =  0$ for any fixed $\theta$. Correspondingly,

\begin{align}
	\label{eqn:unrolled}
\mathbb{E}\|\theta_{t+1}-\theta'_{t+1}\|^2 &= 
\mathbb{E}\|\theta_{t}-\theta'_{t}\|^2 + \gamma_t^2 \mathbb{E}\|\nabla f(\theta_{t})-\nabla f(\theta'_{t})\|^2  +  \gamma_t^2 \mathbb{E} \|\bar{e}_t(\theta_t)- \bar{e}_t(\theta'_t)\|^2 
\nonumber\\&  + \frac{\gamma_t^2}{N^2} \mathbb{E}\|(\alpha_t^i-(\alpha_t^i)')\xi_t^i\|^2 + 2 \gamma_t \mathbb{E}<\theta_t-\theta'_t,\nabla f(\theta'_t)-\nabla f(\theta_t)>  
\nonumber\\ &=
\mathbb{E}\|\theta_{t}-\theta'_{t}\|^2 + \gamma_t^2 \mathbb{E}\|\nabla f(\theta_{t})-\nabla f(\theta'_{t})\|^2 +  \gamma_t^2 \mathbb{E} \|\bar{e}_t(\theta_t)- \bar{e}_t(\theta'_t)\|^2 +\frac{\gamma_t^2}{N^2} \delta_t^i
\nonumber \\ &  + 2 \gamma_t \mathbb{E}<\theta_t-\theta'_t,\nabla f(\theta'_t)> + 2 \gamma_t \mathbb{E}<\theta'_t-\theta_t,\nabla f(\theta_t)> 
\nonumber\\ & \leq \mathbb{E}\|\theta_{t}-\theta'_{t}\|^2 + \gamma_t^2 \mathbb{E}\|\nabla f(\theta_{t})-\nabla f(\theta'_{t})\|^2  +  \gamma_t^2 \mathbb{E} \|\bar{e}_t(\theta_t)- \bar{e}_t(\theta'_t)\|^2 +\frac{\gamma_t^2}{N^2} \delta_t^i
\nonumber \\ & + 2 \gamma_t \mathbb{E}(f(\theta_t) -f(\theta_t')- \frac{m}{2}\|\theta_t-\theta'_t\|^2 +  f(\theta_t')-f(\theta_t) - \frac{m}{2}\|\theta_t-\theta'_t\|^2) 
\nonumber\\ & = \mathbb{E}\|\theta_{t}-\theta'_{t}\|^2 + \gamma_t^2 \mathbb{E}\|\nabla f(\theta_{t})-\nabla f(\theta'_{t})\|^2  +  \gamma_t^2 \mathbb{E} \|\bar{e}_t(\theta_t)- \bar{e}_t(\theta'_t)\|^2 +\frac{\gamma_t^2}{N^2} \delta_t^i
\nonumber\\ & - 2 \gamma_t  m \mathbb{E}\|\theta_t-\theta'_t\|^2 
\nonumber\\ &\leq \mathbb{E}\|\theta_{t}-\theta'_{t}\|^2 + \gamma_t^2  B^2\mathbb{E}\|\theta_t-\theta'_t\|^2 + \gamma_t^2  (B')^2\mathbb{E}\|\theta_t-\theta'_t\|^2 +\frac{\gamma_t^2}{N^2} \delta_t^i \nonumber\\&- 2 \gamma_t  m\mathbb{E}\|\theta_t-\theta'_t\|^2 
\nonumber\\ & = (1+\gamma_t^2 (B^2 +B'^2) - 2 \gamma_t m)  \mathbb{E}\|\theta_{t}-\theta'_{t}\|^2  +\frac{\gamma_t^2}{N^2} \delta_t^i 
\end{align}

Where the first inequality follows from strong convexity. 

Now for a constant learning rate $\gamma=\gamma_t$ and $c=(1+\gamma^2 (B^2 +B'^2) - 2 \gamma m)$: 
\begin{align*}
(\mathbb{E}\frac{1}{L} |f(\theta_{T+1})-f(\theta'_{T+1})|)^2 &\leq
(\mathbb{E} \|\theta_{T+1}-\theta'_{T+1}\|)^2 \\ &\leq 
\mathbb{E} \|\theta_{T+1}-\theta'_{T+1}\|^2 \\ &\leq 
c^T \mathbb{E}\|\theta_{1}-\theta'_{1}\|^2 +  \sum^{T}_{t=1} {c^{T-t}} \frac{\gamma^2}{N^2} \delta^i_{t}
\\ & = \sum^{T}_{t=1} {c^{T-t}} \frac{\gamma^2}{N^2} \delta^i_{t}
\end{align*} 
so that \[\mathbb{E} |f(\theta_{T+1})-f(\theta'_{T+1})| \leq L \sqrt{\sum^{T}_{t=1} {c^{T-t}} \frac{\gamma^2}{N^2}\delta_{t}^i} \leq L \sum^{T}_{t=1} \frac{\gamma}{N}{c^{\frac{T-t}{2}}} \sqrt{ \delta_{t}^i} = \frac{L\gamma}{N} \sum^{T}_{t=1} {c^{\frac{T-t}{2}}} |\alpha^i_{t}-(\alpha^i_{t})'|.\]

Where the last inequality follows from the general inequality $\sqrt{\sum_i x_i} \leq \sum_i \sqrt{x_i}$ for $x_i\geq 0$, which inductively follows from \[ \sqrt{x+y} = \sqrt{(\sqrt{x}+\sqrt{y})^2 - 2 \sqrt{xy} } \leq  \sqrt{(\sqrt{x}+\sqrt{y})^2 } = \sqrt{x} + \sqrt{y}. \]

In particular if we set $\theta'_{t}$ to the trajectory in which player $i$ is honest   ($(\alpha^i_{t})'=0$), we obtain  
\begin{equation}
\label{eqn:important_bound}
\mathbb{E} |f(\theta_{T+1})-f(\theta'_{T+1})| \leq \frac{L\gamma}{N} \sum^{T}_{t=1} {c^{\frac{T-t}{2}}} \alpha^i_{t}.
\end{equation}

The same inequalities holds for players' final estimates $\theta_{T+1}^j$ and $(\theta_{T+1}^j)'$, as the noise correction step $\theta^i_{T+1} = \theta_{T+1} - \frac{\alpha^i_{T}}{N}\xi^i$ is the same in both cases, so that \[\mathbb{E}\|\theta^i_{T+1} -(\theta^i_{T+1})' \|^2 = \mathbb{E}\|\theta_{T+1} -(\theta_{T+1})' \|^2. \]

It is worth noting, that the contribution of noise at early time steps to the sum diminishes exponentially as long as $c<1$ which is true for $\gamma^2 (B^2 +B'^2) - 2 \gamma m < 0$, i.e. $\gamma < \frac{2m}{B^2+B'^2}$.

Next, we consider the expected difference in penalties received by player $i$ at time $t$ if they use $(\alpha_t^i)'$ rather than $\alpha_t^i$ and thus send message $(m_t^i)'$ rather than $m_t^i$:

\begin{align*}
	& \mathbb{E} p^t_i(m_t^1,\ldots,(m_t^i)',\ldots,m^N_t) - p^t_i(m_t^1,\ldots,m_t^i,\ldots,m^N_t)
	\\
	& = \mathbb{E} \|\frac{N-1}{N}(m_t^i)'-\frac{1}{N}\sum_{j\neq i} m_t^j\|^2 - \mathbb{E} \|\frac{N-1}{N}m_t^i-\frac{1}{N}\sum_{j\neq i} m_t^j\|^2 \\ 
	& = \mathbb{E} \|\frac{N-1}{N}((\alpha_t^i)' \xi_t^i + g^i_t) -\frac{1}{N}\sum_{j\neq i} m_t^j\|^2 - \mathbb{E} \|\frac{N-1}{N}(\alpha_t^i \xi_t^i + g^i_t) -\frac{1}{N}\sum_{j\neq i} m_t^j\|^2 \\
	& = \mathbb{E}  \|\frac{N-1}{N}(\alpha_t^i)' \xi_t^i \|^2 - \mathbb{E} \|\frac{N-1}{N}(\alpha_t^i)' \xi_t^i \|^2 \\ 
	&+ \mathbb{E} \|\frac{N-1}{N}( g^i_t) -\frac{1}{N}\sum_{j\neq i} m_t^j\|^2 - \mathbb{E} \|\frac{N-1}{N}( g^i_t) -\frac{1}{N}\sum_{j\neq i} m_t^j\|^2 \\
	& =  (\frac{N-1}{N}(\alpha_t^i)')^2- (\frac{N-1}{N}\alpha_t^i)^2
\end{align*}
for $g_t^i=g_t(\theta^s_{t-1}, x^i)$.

In particular, we can bound the difference between expected penalized rewards for two trajectories with all $\alpha^i_k$ fixed but two different values $\alpha_t^i$ and $(\alpha_t^i)'$ varying for $k=t$ as follows: 

\begin{align*}
	& \mathbb{E}\Biggl(\rewardf^i(\theta_{T+1}) - \sum_k C_k p_k^i(m^1_k,\ldots,m^i_k,\ldots,m^N_k) \\&- \rewardf^i((\theta_{T+1})') + \sum_k C_k p_k^i(m^1_k,\ldots,(m^i_k)',\ldots,m^N_k) \Biggr)\\
	& = \frac{1}{N-1}\sum_{j_\neq i} (f(\theta_{T+1}^j) - f((\theta_{T+1}^j)') )  + C_t( (\frac{N-1}{N}(\alpha_t^i)')^2- (\frac{N-1}{N}\alpha_t^i)^2) \\
    & \leq  L \frac{\gamma}{N}{c^\frac{T-t}{2}} |\alpha_t^i-(\alpha_t^i)'|
      + C_t( (\frac{N-1}{N}(\alpha_t^i)')^2- (\frac{N-1}{N}\alpha_t^i)^2)
\end{align*}
In particular, for $(\alpha_i^{t})'=0$, we obtain 

\begin{align*}
&  \mathbb{E}\left(\rewardf^i(\theta_T) - \sum_k C_k p^t_i(m^1_k,\ldots,m^i_k,\ldots,m^N_k) - \rewardf^i((\theta_T)') + \sum_t C_k p_k^i(m^1_k,\ldots,(m^i_k)',\ldots,m^N_k)\right) \\
& \leq L \frac{\gamma}{N}{c^\frac{T-t}{2}} \alpha_t^i
- C_t((\frac{N-1}{N}\alpha_t^i)^2)
\end{align*}
By the quadratic formula, this is zero at zero and at
\begin{equation}\alpha_t^i= 
\frac{-2 L \frac{\gamma}{N}{c^\frac{T-t}{2}}}{- 2 C_t (\frac{N-1}{N})^2} = 
\frac{LN {c^\frac{T-t}{2}} \gamma}{ C_t (N-1)^2} \label{eqn:alpha_ineq} \end{equation}

and because of the negative quadratic term negative whenever $\alpha_t^i >  \frac{LN {c^\frac{T-t}{2}} \gamma}{ C_t (N-1)^2}$. Correspondingly, in terms of penalized reward  players are always better off by not adding any noise at all $\alpha_t^i=0$ compared to adding large noise, such that rational players will never choose $\alpha_t^i >  \frac{LN {c^\frac{T-t}{2}} \gamma}{ C_t (N-1)^2}$. Therefore, the noise $\alpha_t^i$ added by any player $i$ at a given time step $t$ can be limited to any fixed constant $\epsilon>0$ by choosing $C_t$ such that $\frac{LN {c^\frac{T-t}{2}} \gamma}{ C_t (N-1)^2}\leq \epsilon$, i.e. $C_t \geq  \frac{LN {c^\frac{T-t}{2}} \gamma}{ \epsilon (N-1)^2}$. 

Applying this observation to each player, substituting into Equation (\ref{eqn:important_bound}) and using the triangle inequality, the overall damage caused by all $N$ players compared to full honesty can then be bounded by \[\mathbb{E} |f(\theta_{T+1})-f(\theta'_{T+1})| \leq L\gamma \sum^{T}_{t=1} {c^{\frac{T-t}{2}}} \epsilon \] where $\theta'_{t}$ represents the fully honest strategy and $\theta_t$ represents a strategy in which all players act rationally given the penalty magnitude $C_t$. Using a geometric series bound, we obtain $\mathbb{E} |f(\theta_{T+1})-f(\theta'_{T+1})| \leq \frac{1}{1-\sqrt{c}} L\gamma  \epsilon$. 

Lastly, for a global bound on the "variance" of the gradients $\|e^i_t(\theta) \|^2\leq G^2$ we get, 
\begin{align*}
	& \mathbb{E} p^t_i(m_t^1,\ldots,m_t^i,\ldots,m^N_t) 
	\\
	& = \mathbb{E} \|\frac{N-1}{N} m_t^i-\frac{1}{N}\sum_{j\neq i} m_t^j\|^2 \\
	& = \mathbb{E} \|\frac{N-1}{N}(g^i_t + \alpha_t^i \xi_t^i)-\frac{1}{N}\sum_{j\neq i} (g^j_t + \alpha_t^j \xi^j_t)\|^2 \\
	& = \mathbb{E} \|\frac{N-1}{N}g^i_t -\frac{1}{N}\sum_{j\neq i} g^j_t \|^2  + (\frac{N-1}{N})^2 (\alpha_t^i)^2 + \frac{1}{N^2} \sum_{j\neq i} (\alpha_t^j)^2\\
	&\leq \mathbb{E} \|\frac{N-1}{N}(g^i_t(\theta)-\nabla f(\theta)) -\frac{1}{N}\sum_{j\neq i} (g^j_t(\theta) - \nabla f(\theta)) + (\frac{N-1}{N}-\frac{N-1}{N}) \nabla f(\theta) \|^2  \\&+ (\frac{N-1}{N}) \epsilon^2 \\
	&= \mathbb{E}[\mathbb{E}[ \|\frac{N-1}{N}e^i_t(\theta) -\frac{1}{N}\sum_{j\neq i} e^j_t(\theta) \|^2|\theta]]   + (\frac{N-1}{N}) \epsilon^2 \\
	&= (\frac{N-1}{N})^2 \mathbb{E}[\mathbb{E}[ \|e^i_t(\theta)\|^2|\theta]  + \frac{1}{N^2}\sum_{j\neq i}  \mathbb{E}[\mathbb{E}[ \|(e^j_t(\theta)\|^2|\theta]]  + (\frac{N-1}{N}) \epsilon^2 \\
	&\leq (\frac{N-1}{N}) (G^2 + \epsilon^2)
\end{align*}
as the $\xi_t^i$ are independent with zero mean, and the gradient noise $e^i_t(\theta)$ are independent with zero mean, given $\theta$. 
Correspondingly, for $C_t = \frac{LN {c^\frac{T-t}{2}} \gamma}{ \epsilon (N-1)^2}$ the total expected penalties paid by player $i$ can be bounded as 
\begin{align*}
	&\mathbb{E} \sum_t^T C_t p^t_i(m_t^1,\ldots,m_t^i,\ldots,m^N_t)\\
	&\leq \sum_t^T  \frac{  L c^\frac{T-t}{2} \gamma}{ \epsilon (N-1)} (G^2 + \epsilon^2) \\
	& \leq  \frac{1}{1-\sqrt{c}} \frac{ L  \gamma}{  (N-1)} (\frac{G^2}{\epsilon} + \epsilon)
\end{align*}

\end{proof}
Next, we prove the budget-balanced version of \ref{thm:sgd_appendix}
\begin{proposition}
	\label{thm:sgd_re_appendix} 
	Under the assumptions of \ref{thm:sgd_appendix}, in the balanced penalized game with reward 
\begin{align*}	
\rewardf^i_p(\theta_{T+1}^1,\ldots,\theta_{T+1}^N, f) = \left(\frac{1}{N-1}\sum_{j\neq i} f(\theta_{T+1}^j)\right)  & - \sum_{t=1}^{T} C_t \|m_t^i-\frac{1}{N}\sum_j m_t^j\|^2\\
& + \frac{1}{N-1} \sum_{k\neq i}\sum_{t=0}^{T} C_t \|m^k_t-\frac{1}{N}\sum_j m_t^j\|^2	,
\end{align*}
any player's best response strategy fulfills $\alpha_t^i \leq \frac{L {c^\frac{T-t}{2}} \gamma}{ C_t (N-2)}  \leq \frac{L \gamma}{ C_t (N-2)}$ for $c = (1+\gamma^2 (B^2 +B'^2) - 2 \gamma  m)$, independent of other players' strategies. 
	
	Given $\epsilon>0$, the expected absolute change in function values $f(\theta_T^j)$ due to noise added by players playing best responses compared to full honesty can be bounded by $\frac{1}{1-\sqrt{c}} L\gamma  \epsilon$ by choosing $C_t \geq \frac{L {c^\frac{{T-t}}{2}} \gamma}{ \epsilon (N-2)} \leq  \frac{L \gamma}{ \epsilon (N-2)} $. As long as all players $i$ choose the same strategy ($\alpha_t^i=\alpha_t^j$ $\forall i,j,t$), the expected total penalty paid by each player equals zero.
\end{proposition}

\begin{proof}
We begin by considering the expected difference in penalties received by player $l$ at time $t$ if player $i$ uses $(\alpha_t^i)'$ rather than $\alpha_t^i$ and thus send message $(m_t^i)'$ rather than $m_t^i$:

\begin{align*}
& \mathbb{E} p_t^l(m_t^1,\ldots,(m_t^i)',\ldots,m^N_t) - p_t^l(m_t^1,\ldots,m_t^i,\ldots,m^N_t)
\\
& = \mathbb{E} \|m_t^l-\frac{1}{N}\sum_{j\neq i} m_t^j - \frac{1}{N} (m_t^i)'\|^2 - \mathbb{E} \|m_t^l-\frac{1}{N}\sum_{j\neq i} m_t^j - \frac{1}{N} m_t^i\|^2 \\ 
& = \mathbb{E} \|m_t^l-\frac{1}{N}\sum_{j\neq i} m_t^j - \frac{1}{N} ((\alpha_t^i)' \xi_t^i + g^i_t)\|^2 - \mathbb{E} \|m_t^l-\frac{1}{N}\sum_{j\neq i} m_t^j - \frac{1}{N} (\alpha_t^i \xi_t^i + g^i_t)\|^2 \\
& = \mathbb{E} \|m_t^l-\frac{1}{N}\sum_{j\neq i} m_t^j - \frac{1}{N} g^i_t\|^2 + \mathbb{E} \frac{1}{N^2} \| (\alpha_t^i)' \xi_t^i\|^2
 \\&- \mathbb{E} \|m_t^l-\frac{1}{N}\sum_{j\neq i} m_t^j - \frac{1}{N}  g^i_t\|^2 - \mathbb{E} \frac{1}{N^2}  \| \alpha_t^i \xi_t^i\|^2 \\ 
& = (\frac{(\alpha_t^i)'}{N})^2 -  (\frac{\alpha_t^i}{N})^2 
\end{align*} 

Again, we can bound the difference between expected penalized rewards for two trajectories with all $\alpha_k^i$ fixed but two different values $\alpha_t^i$ and $(\alpha_t^i)'$ varying for $k=t$ as follows: 

\begin{align*}
&  \mathbb{E}(\rewardf^i(\theta_{T+1}) - \sum_k C_k p_k^i(m^1_k,\ldots,m^i_k,\ldots,m^N_k) \\&+ \frac{1}{N-1}  \sum_{j_\neq i }\sum_k C_k p_k^j(m^1_k,\ldots,m^i_k,\ldots,m^N_k)
\\&- \rewardf^i((\theta_{T+1})') + \sum_t C_k p_k^i(m^1_k,\ldots,(m^i_k)',\ldots,m^N_k) \\&- \frac{1}{N-1}  \sum_{j_\neq i }\sum_t C_k p_k^j(m^1_k,\ldots,(m^i_k)',\ldots,m^N_k)) \\
& = \frac{1}{N-1}\sum_{j_\neq i} (f(\theta_{T+1}^j)-f((\theta_{T+1}^j)'))  \\&+ C_t( (\frac{N-1}{N}(\alpha_t^i)')^2 - (\frac{1}{N}(\alpha_t^i)')^2  
- (\frac{N-1}{N}\alpha_t^i)^2) + (\frac{1}{N}\alpha_t^i)^2) \\
& = \frac{1}{N-1}\sum_{j_\neq i}  (f(\theta_{T+1}^j)-f((\theta_{T+1}^j)'))  \\& + C_t( (\frac{N-1}{N}(\alpha_t^i)')^2 - (\frac{1}{N}(\alpha_t^i)')^2  
- (\frac{N-1}{N}\alpha_t^i)^2) + (\frac{1}{N}\alpha_t^i)^2) \\
& \leq  L \frac{\gamma}{N}{c^\frac{T-t}{2}} |\alpha_t^i-(\alpha_t^i)'|
+ C_t( \frac{N-2}{N}((\alpha_t^i)')^2- \frac{N-2}{N}(\alpha_t^i)^2)
\end{align*}
Again, for $(\alpha_t^i)'=0$ we obtain 
\begin{align*}
&  \mathbb{E}(\rewardf^i(\theta_{T+1}) - \sum_k C_k p_k^i(m^1_k,\ldots,m^i_k,\ldots,m^N_k) \\&+ \frac{1}{N-1}  \sum_{j_\neq i }\sum_k C_k p_k^j(m^1_k,\ldots,m^i_k,\ldots,m^N_k)
\\&- \rewardf^i((\theta{T+1})') + \sum_k C_k p_k^i(m^1_k,\ldots,(m^i_k)',\ldots,m^N_k) \\&- \frac{1}{N-1}  \sum_{j_\neq i }\sum_k C_k p_k^j(m^1_k,\ldots,(m^i_k)',\ldots,m^N_k)) \\
& \leq L \frac{\gamma}{N}{c^\frac{T-t}{2}} \alpha_t^i
- C_t(\frac{(N-2)}{N}(\alpha_t^i)^2)
\end{align*}
which is zero at zero and at
\begin{equation}\alpha_t^i= 
\frac{-2 L \frac{\gamma}{N}{c^\frac{T-t}{2}}}{- 2 C_t \frac{N-2}{N}} = 
\frac{L {c^\frac{T-t}{2}} \gamma}{ C_t (N-2)}  \label{eqn:alpha_ineq2} \end{equation}
and negative for $\alpha_t^i>\frac{L {c^\frac{T-t}{2}} \gamma}{ C_t (N-2)}$ as long as $N>2$. Players are thus again incentivized to select $\alpha_t^i$ that do not fulfill that inequality.

As in \ref{thm:sgd_appendix}, the overall damage caused by all $N$ players compared to full honesty can then be bounded by $\mathbb{E} |f(\theta_{T+1})-f(\theta'_{T+1})| \leq \frac{1}{1-\sqrt{c}} L\gamma  \epsilon$ by choosing  $C_t \geq \frac{L {c^\frac{T-t}{2}} \gamma}{ \epsilon (N-2)}$ where $\theta'_{T}$ represents the fully honest strategy and $\theta_T$ represents a strategy in which all players act rationally given the penalty magnitude $C_t$, and the same is true for player's estimates $\theta_{T+1}^j$ and $(\theta_{T+1}^j)'$. By symmetry, as long as all players $i$ choose the same strategy ($\alpha_t^i=\alpha_t^j$  $\forall i,j,t$), the expected penalties paid by each player equal $0$. 
\end{proof}

To prove \ref{thm:sgd_opt}, we adapt a classic result from convex optimization to give convergence rates for SGD with bounded perturbations from the clients and with a linearly decaying learning rate.
\begin{lemma}
	\label{lemma:clean_sgd}
	In the settings of \ref{thm:sgd_re_appendix}, assume that all players use bounded attacks, so that $(\alpha^i_t)^2 \leq \epsilon^2$ for all $i,t$. Also, assume that there exist scalars $M \geq 0$ and $M_V \geq 0$, such that for all $t$:
	\begin{equation}
		\label{eqn:variance_assumption}
		\mathbb{E}_{s_i} (\| e^i_t(\theta^s_{t})\|^2)    = \mathbb{E}_{x^i} (\| g_t(\theta^s_{t}, x^i)\|^2) -  \| \mathbb{E}_{x^i}   g_t(\theta^s_{t}, x^i)\|^2   \leq M + M_V \|\nabla f(\theta^s_{t})\|_2^2.
	\end{equation}
	Assume that for some integer constant $\eta > 0$, such that $\frac{4}{\eta m + m} \leq \frac{1}{B(M_V/N + 1)}$, the learning rate is set as $\gamma_t = \frac{4}{\eta m + t m}$. In that case, if $P(\exists t\leq T:   \Pi_W(\theta^s_{t} - \gamma_t \bar{m}_t)\neq \theta^s_{t} - \gamma_t \bar{m}_t) \in O(\frac{1}{NT})$ we get
	\begin{align}
		\label{eqn:clean_sgd_rates}
		\mathbb{E}\left(f(\theta_T) - f(\theta^*)\right) \leq \frac{8 B(M+\epsilon^2)}{3 m^2 N T} + \mathcal{O}\left(\frac{1}{NT}\right) + \mathcal{O}\left(\frac{1}{T^2}\right)
	\end{align}
\end{lemma}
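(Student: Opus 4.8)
The plan is to reduce the corrupted, projected server update to a standard projected-SGD step whose stochastic gradient has an effective variance shrunk by a factor $N$, and then to adapt the classical convergence analysis for smooth, strongly convex SGD with a diminishing step size.

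First I would rewrite the server's gradient estimate. Writing $g_t(\theta_t^s,x^i)=\nabla f(\theta_t^s)+e_t^i(\theta_t^s)$ and $m_t^i=g_t(\theta_t^s,x^i)+\alpha_t^i\xi_t^i$, the averaged message is
$$\bar m_t=\nabla f(\theta_t^s)+\Big(\bar e_t+\tfrac1N\textstyle\sum_i\alpha_t^i\xi_t^i\Big),\qquad \bar e_t=\tfrac1N\sum_i e_t^i(\theta_t^s),$$
so $\bar m_t$ is conditionally unbiased for $\nabla f(\theta_t^s)$. Since the $\xi_t^i$ are zero-mean, unit-second-moment and independent of each other and of the $e_t^i$, while the $e_t^i$ are conditionally independent with zero mean, all cross terms vanish and
$$\mathbb E\big[\|\bar m_t-\nabla f(\theta_t^s)\|^2\mid\theta_t^s\big]=\tfrac1{N^2}\sum_i\mathbb E\|e_t^i\|^2+\tfrac1{N^2}\sum_i(\alpha_t^i)^2\le\frac{M+\epsilon^2}{N}+\frac{M_V}{N}\|\nabla f(\theta_t^s)\|^2,$$
using $(\alpha_t^i)^2\le\epsilon^2$ and the variance assumption (\ref{eqn:variance_assumption}). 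Thus we are exactly in the regime of SGD with a second-moment bound of ``ABC'' type, with effective constants $\tilde M=(M+\epsilon^2)/N$ and $\tilde M_V=M_V/N$ both reduced by $N$; this reduction is what will produce the collaborative $1/(NT)$ rate.

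Next I would derive the one-step recursion, temporarily ignoring the projection. Applying the $B$-smoothness descent lemma to the unconstrained step $\theta_{t+1}^s=\theta_t^s-\gamma_t\bar m_t$ and taking conditional expectation gives
$$\mathbb E[f(\theta_{t+1}^s)\mid\theta_t^s]\le f(\theta_t^s)-\gamma_t\Big(1-\tfrac{B\gamma_t}2\big(1+\tfrac{M_V}N\big)\Big)\|\nabla f(\theta_t^s)\|^2+\tfrac{B\gamma_t^2}2\tilde M.$$
The step-size condition $\gamma_t\le\gamma_1\le\frac1{B(M_V/N+1)}$ forces the bracket to be $\ge\tfrac12$, and strong convexity yields the Polyak--{\L}ojasiewicz bound $\|\nabla f(\theta)\|^2\ge 2m(f(\theta)-f(\theta^*))$. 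Writing $\delta_t=\mathbb E[f(\theta_t^s)-f(\theta^*)]$ and $\gamma_tm=\frac4{\eta+t}$ leaves
$$\delta_{t+1}\le\Big(1-\tfrac{4}{\eta+t}\Big)\delta_t+\frac{8B(M+\epsilon^2)}{m^2N(\eta+t)^2}.$$
An induction $\delta_t\le\frac{\nu}{\eta+t}$ then closes with $\nu=\frac{a}{c-1}$, where $a=\frac{8B(M+\epsilon^2)}{m^2N}$ and the contraction exponent is $c=4$, i.e. $\nu=\frac{8B(M+\epsilon^2)}{3m^2N}$; this is precisely the claimed leading term $\frac{8B(M+\epsilon^2)}{3m^2NT}$. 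The homogeneous (initial-condition) part of the solution decays like $\prod_t(1-\frac4{\eta+t})=O(T^{-4})$, which is absorbed into the stated $O(1/T^2)$, while expanding $\frac{1}{\eta+T}=\frac1T+O(T^{-2})$ leaves only lower-order corrections.

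Finally I would reinstate the projection by a rare-event argument, which I expect to be the main obstacle. Let $A$ be the event that no projection is active up to time $T$; by hypothesis $P(A^c)\in O(1/(NT))$, and on $A$ the iterates coincide with the unconstrained trajectory to which the recursion above applies. Since $W$ is bounded and $f$ is $L$-Lipschitz, $f(\theta_{T})-f(\theta^*)$ is uniformly $O(1)$, so the contribution of $A^c$ is at most $O(1)\cdot P(A^c)=O(1/(NT))$; together with the recursion bound on $A$ this yields the three-term estimate. The delicate points are (i) arguing that conditioning on $A$ does not spoil the unbiasedness used in the recursion, which I would handle by comparing against a ``shadow'' unprojected trajectory and using $f-f(\theta^*)\ge0$ to drop the indicator $\mathbf{1}_A$; and (ii) the finitely many early indices where $\frac{4}{\eta+t}\ge1$, so the contraction factor is not yet in $[0,1)$ — these bounded terms are folded into the constants, which is where $\eta>1$ and the boundedness of $W$ are used. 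A last routine step is to observe that the noise-correction $\theta_{T+1}^i=\theta_{T+1}-\frac{\alpha_T^i}N\xi_T^i$ does not affect the bound, since Theorem~\ref{thm:sgd_re_appendix} already shows it preserves the relevant distances.
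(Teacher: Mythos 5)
Your proposal is correct and follows essentially the same route as the paper: the same conditional variance bound $\frac{M+\epsilon^2}{N}+\frac{M_V}{N}\|\nabla f(\theta^s_t)\|^2$, the same recursion $\delta_{t+1}\le\left(1-\frac{4}{\eta+t}\right)\delta_t+\frac{8B(M+\epsilon^2)}{Nm^2(\eta+t)^2}$, the same resolution to the $\frac{8B(M+\epsilon^2)}{3m^2NT}$ leading term, and the same rare-event treatment of the projection. The only difference is presentational: where you derive the recursion from the descent lemma plus the PL inequality and close it by direct induction, the paper imports these two steps as black boxes (equation 4.23 of Bottou et al.\ and Chung's lemma, respectively) -- and your explicit ``shadow trajectory'' handling of the conditioning issue is, if anything, slightly more careful than the paper's.
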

 for any $T \geq \eta$.
\begin{proof}
We first condition on the case in which there is no $t\leq T$ with $\Pi_W(\theta^s_{t} - \gamma_t \bar{m}_t)\neq \theta^s_{t} - \gamma_t \bar{m}_t$, so that we do not have to worry about projections. 
Then, for a random vector $g$, denote $\mathbb{V}(g) = \mathbb{E}\left(\|g\|^2\right) - \|\mathbb{E}(g)\|^2$. Note that, by the independence of the stochastic gradients and players' noise, it follows that:
	
	\begin{align*}
		\mathbb{V}\left(\frac{1}{N}\sum_{i=1}^N m_i^t \right) &= \mathbb{E}\|\frac{1}{N}\sum_{i=1}^N (\nabla f(\theta_{t}) + e^i_t(\theta^s_{t}) + \alpha_t^i\xi_t^i) \|^2  - \|\mathbb{E}(\frac{1}{N}\sum_{i=1}^N ( \nabla f(\theta_{t}) + e^i_t(\theta^s_{t}) + \alpha_t^i\xi_t^i))\|^2
		\\& = \frac{1}{N^2 } \mathbb{E}\|\sum_{i=1}^N (\nabla f(\theta_{t}) +e^i_t(\theta^s_{t}) + \alpha_t^i\xi_t^i) \|^2 - \frac{1}{N^2}||\nabla f(\theta_{t})||^2
		\\& = \frac{1}{N^2 } (\mathbb{E}\|\sum_{i=1}^N e^i_t(\theta^s_{t})  \|^2   +  \mathbb{E}\|\sum_{i=1}^N \alpha_t^i\xi_t^i \|^2 + ||\nabla f(\theta_{t})||^2) - \frac{1}{N^2}||\nabla f(\theta_{t})||^2
		\\& = \frac{1}{N^2 } (\sum_{i=1}^N  \mathbb{E}\|e^i_t(\theta^s_{t})  \|^2   +  \sum_{i=1}^N (\alpha_t^i)^2 \mathbb{E}\|\xi_t^i \|^2 ) 
		\\& \leq \frac{1}{N^2 } (\sum_{i=1}^N  M + M_V  \|\nabla f(\theta^s_{t}))\|_2^2 +  \sum_{i=1}^N (\alpha_t^i)^2  ) 
		\\& \leq \frac{M+\epsilon^2}{N} + \frac{M_V}{N}\|\nabla f(\theta^s_t)\|_2^2
	\end{align*}
	
	Additionally, since $f$ is strongly convex, it has a unique minimizer $\theta^* \in \mathbb{R}^d$. Now since the learning rate is of the form $\gamma_t =\frac{4/m}{\eta + t}$, with $\frac{4}{m} > \frac{1}{m}$, $\eta > 0$ and $\gamma_1 = \frac{\beta}{\eta + 1} \leq \frac{1}{B(M_V/N + 1)}$, the conditions of Theorem 4.7 in \cite{bottou2018optimization} hold with $\mu = 1, M_V = M_V/N, M = (M+\epsilon^2)/N, M_G = M_V/N + 1$. Using equation 4.23 in their proof gives:
	
	\begin{equation}
		\label{eqn:sgd_classic}
		\mathbb{E}\left(f(\theta_{t+1}) - f(\theta^*) \right) \leq \left(1 - \frac{4}{\eta + t}\right)\mathbb{E}\left(f(\theta_{t}) - f(\theta^*)\right) + \frac{8 B(M+\epsilon^2)}{N m^2 (\eta + t)^2} 
	\end{equation}
	for any $t \geq 1$.  
	We now use a classic result by Chung:
	\begin{lemma}[\cite{chung1954stochastic}]
		\label{lemma:chung}
		Let $\{b_n\}_{n\geq 1}$ be a sequence of real numbers, such that for some $n_0 \in \mathbb{N}$, it holds that for all $n \geq n_0$, 
		\begin{align*}
			b_{n+1} \leq \left(1 - \frac{d}{n}\right)b_n + \frac{c}{n^2},
		\end{align*}
		where $c>0, d>1$. Then
		\begin{align*}
			b_n \leq \frac{c}{d - 1}\frac{1}{n} + \mathcal{O}\left(\frac{1}{n^2} + \frac{1}{n^d}\right).
		\end{align*}
	\end{lemma}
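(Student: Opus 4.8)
The plan is to prove Chung's lemma (Lemma~\ref{lemma:chung}) by tracking how far $b_n$ sits above its conjectured leading term. Since $c>1$, the constant $a:=\frac{c_1}{c-1}$ is well defined and positive, so I introduce the error sequence $e_n:=b_n-\frac{a}{n}$. The claim then reduces to showing $e_n=O(n^{-c})$; this is in fact \emph{stronger} than the asserted $O(n^{-2}+n^{-c})$ bound (because $n^{-c}\le n^{-2}+n^{-c}$), so establishing it suffices.

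First I would substitute $b_n=e_n+\frac{a}{n}$ into the hypothesis $b_{n+1}\le(1-\tfrac cn)b_n+\tfrac{c_1}{n^2}$ and collect terms. The decisive algebraic point is that the choice $a=\frac{c_1}{c-1}$, equivalently $a(c-1)=c_1$, forces the cancellation $-ca+c_1=-a$, so the $1/n^2$ contributions combine neatly and one is left with
\[
e_{n+1}\le\left(1-\frac cn\right)e_n+a\left(\frac{1}{n(n+1)}-\frac{1}{n^2}\right)=\left(1-\frac cn\right)e_n-\frac{a}{n^2(n+1)}.
\]
As $a>0$, the final term is negative, and hence $e_{n+1}\le(1-\tfrac cn)e_n$. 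This is the crux: picking the correct leading coefficient converts the inhomogeneous recursion into a purely contracting one for the error.

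Next I would iterate this homogeneous inequality from a base index $n_1:=\max(n_0,\lceil c\rceil+1)$, large enough that $0<1-\tfrac cn<1$ for $n\ge n_1$. If $e_{n_1}\le0$ then induction gives $e_n\le0$ for all $n\ge n_1$ and the bound is immediate; otherwise $e_n\le e_{n_1}\prod_{k=n_1}^{n-1}(1-\tfrac ck)$. Bounding the product with $1-x\le e^{-x}$ and the harmonic estimate $\sum_{k=n_1}^{n-1}\tfrac1k\ge\ln(n/n_1)$ yields $\prod_{k=n_1}^{n-1}(1-\tfrac ck)\le(n_1/n)^c=O(n^{-c})$. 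Therefore $e_n=O(n^{-c})$, i.e. $b_n\le\frac{c_1}{c-1}\frac1n+O(n^{-c})$, which is contained in the stated bound.

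I expect the difficulties to be bookkeeping rather than conceptual: carefully verifying the identity $-ca+c_1=-a$, and absorbing the finitely many early indices $n<n_1$ (where $1-\tfrac cn$ may be nonpositive) into the implicit $O(\cdot)$ constant, which is allowed to depend on $n_0,b_{n_0},c,c_1$. The hypothesis $c>1$ plays two roles I would emphasize: it makes $a$ positive so the leading term has the right sign, and it makes the homogeneous factor decay like $n^{-c}=o(1/n)$, guaranteeing that $\frac{c_1}{c-1}\frac1n$ is genuinely the dominant term.
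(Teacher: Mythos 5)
Your proof is correct, but there is nothing in the paper to compare it against: the paper imports this lemma verbatim from \cite{chung1954stochastic} and uses it as a black box in the proof of Lemma \ref{lemma:clean_sgd}, so you have supplied a proof where the authors rely on a citation. Your argument is sound at every step: with $a=\frac{c_1}{c-1}$ the identity $-ca+c_1=-a$ does hold, the inhomogeneous terms collapse to $a\bigl(\frac{1}{n(n+1)}-\frac{1}{n^2}\bigr)=-\frac{a}{n^2(n+1)}\le 0$, so the error $e_n=b_n-\frac{a}{n}$ obeys the homogeneous contraction $e_{n+1}\le\bigl(1-\frac{c}{n}\bigr)e_n$; your restriction to $n\ge n_1=\max(n_0,\lceil c\rceil+1)$ correctly guarantees $0<1-\frac{c}{n}<1$ so the chaining is legitimate, and the bound $\prod_{k=n_1}^{n-1}\bigl(1-\frac{c}{k}\bigr)\le(n_1/n)^c$ via $1-x\le e^{-x}$ and the integral estimate of the harmonic sum gives $e_n=O(n^{-c})$, which indeed implies (and strengthens) the stated $O(n^{-2}+n^{-c})$. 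The closest relative inside the paper is Lemma \ref{lemma:recurse}, a non-asymptotic variant of Chung's lemma that the authors prove by propagating the inductive invariant $x_t\le\frac{2c}{(d-1)t}$; that route yields explicit finite-$t$ bounds valid from the first index onward, but loses a factor of $2$ in the leading constant, whereas your decomposition into exact leading term plus contracting residual recovers the sharp constant $\frac{c_1}{c-1}$ — which is precisely what the paper extracts from Chung's lemma in the bound (\ref{eqn:clean_sgd_rates}) — at the price of an implicit constant (depending on $b_{n_1}$, $c$, $c_1$) in the $O(n^{-c})$ error term.
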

	We set $x_{t+\eta}:=\mathbb{E}\left(f(\theta_{t}) - f(\theta^*)\right)$ for $t\geq 1$ and $x_k=\mathbb{E}\left(f(\theta_{1}) - f(\theta^*)\right)$ for $k\leq \eta$. Using \ref{eqn:sgd_classic} and $k=t+\eta$ we get \[x_{k+1}  \leq (1-\frac{4}{k})x_k + \frac{8B(M+\epsilon^2)}{Nm^2 k^2}\]
	
	Now using $d=4$ and $c =  \frac{8B(M+\epsilon^2)}{N m^2} $, we have  $x_{k+1}  \leq (1-\frac{d}{k})x_k + \frac{c}{k^2}$ such that \ref{lemma:chung} yields
	\[ x_t \leq \frac{8B(M+\epsilon^2)}{3 N m^2 t} + \mathcal{O}\left(\frac{1}{t^2} + \frac{1}{t^d}\right) \] and thus
	\[ \mathbb{E}\left(f(\theta_{t}) - f(\theta^*)\right) \leq \frac{8B(M+\epsilon^2)}{3 N m^2 (t+\eta)} + \mathcal{O}\left(\frac{1}{t^2} + \frac{1}{t^d}\right)  \leq \frac{8B(M+\epsilon^2)}{3 N m^2 t} + \mathcal{O}\left(\frac{1}{t^2} + \frac{1}{t^4}\right) \]
	Now, if there is a $t$ with $\Pi_W(\theta^s_{t} - \gamma_t \bar{m}_t)\neq \theta^s_{t} - \gamma_t \bar{m}_t$, we can still bound $\mathbb{E}\left(f(\theta_{t}) - f(\theta^*)\right) $ by some constant because $W$ is bounded and $f$ is Lipschitz. Correspondingly, as $P(\exists t\leq T:   \Pi_W(\theta^s_{t} - \gamma_t \bar{m}_t)\neq \theta^s_{t} - \gamma_t \bar{m}_t) \in O(\frac{1}{NT})$, the total expectation for both cases combined is bounded by $\frac{8 B(M+\epsilon^2)}{3 m^2 N T} + \mathcal{O}\left(\frac{1}{NT}\right) + \mathcal{O}\left(\frac{1}{T^2}\right)$.
	
\end{proof}

To prove theorem \ref{thm:sgd_opt}, we use a non-asymptotic version of Chung's Lemma \cite{chung1954stochastic} similar to the one used in the proof of Lemma 1 in \cite{rakhlin2012making}: 
\begin{lemma}
	\label{lemma:recurse}
	For constants $c>0$ and $d>1$, whenever $t+1\geq d$ and the recursive inequality \[x_{t+1} \leq (1-\frac{d}{t+1})x_t + \frac{c}{(t+1)^2},\] holds we get that if \[x_t \leq  \frac{2 d^2 c }{t(d^3-d^2)}\] for $t=k$ the same is true for $t=k+1$.
\end{lemma}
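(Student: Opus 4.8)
The plan is a single-step induction verification: assuming the claimed bound holds at $t=k$, I would derive it at $t=k+1$ directly from the recursion. The first move is to simplify the target. Since $d^3-d^2 = d^2(d-1)$, the claimed bound reads $x_t \le \frac{2d^2 c}{t\,d^2(d-1)} = \frac{2c}{(d-1)\,t}$, so it is cleanest to set $A := \frac{2c}{d-1}$ and work with the equivalent statement $x_t \le A/t$.

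Next I would substitute the hypothesis $x_k \le A/k$ into the recursive inequality. The one point requiring care is that this substitution is only valid if the multiplier $1-\frac{d}{k+1}$ is nonnegative; this is exactly guaranteed by the assumption $t+1\ge d$ with $t=k$, since it gives $\frac{d}{k+1}\le 1$. Granting this, I obtain $x_{k+1} \le \left(1-\frac{d}{k+1}\right)\frac{A}{k} + \frac{c}{(k+1)^2} = \frac{A(k+1-d)}{k(k+1)} + \frac{c}{(k+1)^2}$.

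Then I would reduce the desired conclusion $x_{k+1}\le \frac{A}{k+1}$ to an elementary inequality. Multiplying through by $(k+1)$ and rearranging, the claim becomes $\frac{c}{k+1} \le A\cdot\frac{d-1}{k}$. Since $A(d-1) = 2c$ by the choice of $A$, this is just $\frac{c}{k+1}\le \frac{2c}{k}$, equivalently $\frac{1}{k+1}\le\frac{2}{k}$, which holds for every positive integer $k$ (in fact with substantial slack), closing the step.

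The argument is pure algebra, so there is no deep obstacle; the only place to go wrong is the direction of the inequality when plugging in the inductive bound, which is why the hypothesis $t+1\ge d$ is needed, ensuring $1-\frac{d}{k+1}\ge 0$. I would also note that the factor $2$ in the numerator of the claimed bound is precisely what produces the slack in the final elementary inequality $\frac{1}{k+1}\le\frac{2}{k}$, so I would carry it through rather than attempt to sharpen the constant.
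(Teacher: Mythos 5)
Your proof is correct and follows essentially the same route as the paper's: substitute the inductive bound into the recursion (valid since $t+1\ge d$ makes the multiplier $1-\frac{d}{t+1}$ nonnegative), and reduce the target inequality to an elementary one that holds with slack for all $k\ge 1$. The only difference is cosmetic — you simplify $d^3-d^2=d^2(d-1)$ upfront so the bound reads $\frac{2c}{(d-1)t}$, whereas the paper carries the unsimplified constant through and arrives at the equivalent condition $t\ge -2$.
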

\begin{proof}
	Using the condition on $x_t$, we obtain
	\begin{align*}
		x_{t+1} &\leq (1-\frac{d}{t+1})x_t + \frac{c}{(t+1)^2} \\	
		&\leq  \frac{2 d^2 c }{t(d^3-d^2)} - \frac{2 d^3 c }{t(t+1)(d^3-d^2)} +  \frac{c}{(t+1)^2} 
	\end{align*}
	as by assumption $\frac{d}{t+1}\leq1$. As $d>1$, $ \frac{2 d^2 c }{(t+1)(d^3-d^2)}$ is positive and we can divide the equation above by it to obtain
	\begin{align*}
		\frac{x_{t+1} (t+1)(d^3-d^2) }{2 d^2 c }
		&\leq  \frac{t+1}{t} - \frac{d}{t} +  \frac{ (d^3-d^2)}{(t+1) 2 d^2  } \\
	\end{align*}
	for which we want to show that it is bounded above by $1$. Multiplying the equation
	\[  \frac{t+1}{t} - \frac{d}{t} +  \frac{ (d^3-d^2)}{(t+1) 2 d^2  } \leq 1 \] by $t(t+1)$ for $t\geq 1$
	we get \[  t^2 + 2t +1  - dt - d +  t \frac{ (d^3-d^2)}{ 2 d^2  } \leq t^2 + t  \]
	which is equivalent to 
	\[  t +1  - dt - d +  \frac{t}{2} (d-1)\leq 0, \]
	i.e. 
	\[  \frac{1}{2}(1-d )t   +1 - d \leq 0, \] and
	\[  (1-d )t  \leq 2(d -1) , \] which is true whenever
	\[ t  \geq -2. \] 
\end{proof}

\begin{theorem}
	\label{thm:sgd_optimal_appendix} 
	In the settings of \ref{thm:sgd_re_appendix},  assume that there exist scalars $M \geq 0$ and $M_V \geq 0$, such that for all $t$:
	\begin{equation}
		\label{eqn:variance_assumption}
		\mathbb{E}_{s_i} (\| (e^i_t(\theta^s_{t})\|^2)    \leq M + M_V \|\nabla f(\theta^s_{t})\|_2^2.
	\end{equation}
	Assume that for some integer constant  $\eta> \max\{{ \frac{32(B^2 +B'^2)}{13 m^2}, \frac{4B(M_V/N + 1)}{m}-1, 1\}} $, the learning rate is set as $\gamma_t = \frac{4}{\eta m + t m}$. 
	Then any player's best response strategy fulfills $\alpha_i^t \leq \frac{8L } { C^t (N-2)m  \sqrt{T+\eta}}$ independent of other players' strategies, such that for any given $\epsilon>0$, $C^t \geq \frac{8L}{ \epsilon (N-2) m \sqrt{T+\eta}} $ yields $\alpha_i^t\leq \epsilon$ for rational players. In that case, as long as $W$ is bounded and we have that $P(\exists t\leq T:   \Pi_W(\theta^s_{t} - \gamma_t \bar{m}_t)\neq \theta^s_{t} - \gamma_t \bar{m}_t) \in O(\frac{1}{NT})$ we get $\mathbb{E}\left(f(\theta_t) - f(\theta^*)\right) \in O(\frac{1+M+\epsilon^2}{Nt}) + O (\frac{1}{t^2})$.

\end{theorem}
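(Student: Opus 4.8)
The plan is to mirror the best-response analysis of Theorem~\ref{thm:sgd_re_appendix}, but to carry the time-varying learning rate through the coupling argument instead of collapsing it into a single constant $c$. The recursion~(\ref{eqn:unrolled}) is already derived for a general step size $\gamma_t$, so I would reuse it directly: couple the trajectory $\theta_t$ in which player $i$ uses aggressiveness $\alpha_t^i$ at a single step $t$ against the trajectory $\theta_t'$ in which player $i$ is honest at that step ($(\alpha_t^i)'=0$), freezing all other players' noise realizations. Since the two runs agree up to step $t$, unrolling~(\ref{eqn:unrolled}) from $t+1$ to $T$ gives $\mathbb{E}\|\theta_{T+1}-\theta_{T+1}'\|^2 \le \big(\prod_{j=t+1}^{T} c_j\big)\frac{\gamma_t^2}{N^2}(\alpha_t^i)^2$ with $c_j = 1+\gamma_j^2(B^2+B'^2)-2\gamma_j m$, and hence, by $L$-Lipschitzness and Jensen, $\mathbb{E}|f(\theta_{T+1})-f(\theta_{T+1}')| \le \frac{L\gamma_t}{N}\sqrt{\prod_{j=t+1}^{T}c_j}\,\alpha_t^i$. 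The penalty side of the reward is computed exactly as in Theorem~\ref{thm:sgd_re_appendix} (it is a single-step quantity and does not see the step size), contributing $-C^t\frac{N-2}{N}(\alpha_t^i)^2$. Balancing the two as a downward parabola in $\alpha_t^i$ yields the best response $\alpha_t^i \le \frac{L\gamma_t\sqrt{\prod_{j>t}c_j}}{C^t(N-2)}$, independent of the other players' strategies.

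It then remains to convert this into the claimed uniform bound, i.e.\ to show $\gamma_t\sqrt{\prod_{j=t+1}^{T}c_j}\le \frac{8}{m\sqrt{T+\eta}}$ for every $t\le T$. I would first use that the chosen small step sizes keep $c_j\le 1-\gamma_j m = 1-\tfrac{4}{\eta+j}$, so that the curvature/noise term $\gamma_j^2(B^2+B'^2)$ is dominated by $\gamma_j m$, and then telescope the resulting product: $\prod_{j=t+1}^{T}\big(1-\tfrac{4}{\eta+j}\big)$ is a ratio of falling factorials bounded by $\big(\tfrac{\eta+t}{\eta+T}\big)^{4}$. Plugging in $\gamma_t=\tfrac{4}{m(\eta+t)}$ gives $\gamma_t\sqrt{\prod_{j>t}c_j}\le \tfrac{4}{m(\eta+t)}\big(\tfrac{\eta+t}{\eta+T}\big)^2 = \tfrac{4(\eta+t)}{m(\eta+T)^2}$, which is increasing in $t$ and therefore maximized at $t=T$, where it equals $\tfrac{4}{m(\eta+T)}\le\tfrac{8}{m\sqrt{T+\eta}}$. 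This uniform product estimate is where I expect the real work to sit: controlling the interaction between the contraction induced by strong convexity and the growth induced by smoothness and the gradient-noise Lipschitz constant $B'$ across a nonconstant step-size schedule, and verifying that the learning-rate constraint $\gamma_1\le\frac{1}{B(M_V/N+1)}$ indeed keeps every $c_j$ in the contractive regime needed for the telescoping bound.

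With the best response pinned down, the remaining two claims follow quickly. Choosing $C^t\ge \frac{8L}{\epsilon(N-2)m\sqrt{T+\eta}}$ immediately forces $\alpha_t^i\le\epsilon$ for every rational player and every $t$, so the whole profile uses bounded attacks. At that point I would simply invoke Lemma~\ref{lemma:clean_sgd}, which already establishes the decaying-rate SGD convergence under the variance assumption~(\ref{eqn:variance_assumption}) with attack budget $\epsilon$, and which also absorbs the rare projection events through the hypothesis $P(\exists t\le T:\Pi_W(\theta^s_t-\gamma_t\bar m_t)\neq \theta^s_t-\gamma_t\bar m_t)\in O(\tfrac{1}{NT})$ together with boundedness of $W$ and Lipschitzness of $f$. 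This yields $\mathbb{E}(f(\theta_t)-f(\theta^*))\in O\!\big(\tfrac{1+M+\epsilon^2}{Nt}\big)+O\!\big(\tfrac{1}{t^2}\big)$, completing the argument.
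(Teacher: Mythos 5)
Your proposal is correct and reaches all three claims, but it takes a genuinely different route through the core technical estimate. The paper bounds the effect of a deviation by feeding the coupled recursion (\ref{eqn:unrolled}) into a non-asymptotic Chung-type lemma (Lemma \ref{lemma:recurse}): it treats an arbitrary aggressiveness schedule, bounds the drift by $\max_t \delta^t_i$ injected at \emph{every} step, obtains $\mathbb{E}\|\theta_{T+1}-\theta'_{T+1}\|^2 \le \frac{64\max_t\delta_i^t}{(T+\eta)N^2m^2}$, and only then specializes to a single-step deviation, yielding the $\frac{8L}{Nm\sqrt{T+\eta}}|\alpha^i_t-(\alpha^i_t)'|$ damage bound. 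You instead exploit from the start that a best-response comparison only ever perturbs one step: the two coupled runs agree through step $t$, so unrolling gives $\mathbb{E}\|\theta_{T+1}-\theta'_{T+1}\|^2\le \bigl(\prod_{j=t+1}^{T}c_j\bigr)\frac{\gamma_t^2}{N^2}(\alpha^i_t)^2$, and the telescoping bound $\prod_{j=t+1}^{T}\bigl(1-\tfrac{4}{\eta+j}\bigr)\le \bigl(\tfrac{\eta+t}{\eta+T}\bigr)^4$ (a correct falling-factorial estimate) gives damage of order $\frac{L(\eta+t)}{m(\eta+T)^2 N}\alpha^i_t\le \frac{4L}{m(\eta+T)N}\alpha^i_t$. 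This is strictly sharper than the paper's $O(1/\sqrt{T+\eta})$ bound, so your parabola argument yields the stated best-response bound $\alpha^i_t\le \frac{8L}{C^t(N-2)m\sqrt{T+\eta}}$ with room to spare (in fact it would justify a $1/(T+\eta)$ bound, hence smaller penalty constants), and it avoids needing Lemma \ref{lemma:recurse} altogether. The penalty-side computation and the final step — forcing $\alpha^i_t\le\epsilon$ by the choice of $C^t$ and invoking Lemma \ref{lemma:clean_sgd} under the projection hypothesis — coincide with the paper.

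One caveat you correctly flag but should resolve explicitly: your domination step $c_j\le 1-\gamma_j m$ requires $\gamma_j\le \frac{m}{B^2+B'^2}$, i.e.\ $\eta+j\ge \frac{4(B^2+B'^2)}{m^2}$, and this does \emph{not} follow from the stated constraint $\gamma_1\le \frac{1}{B(M_V/N+1)}$, since $B'$ does not appear there and $\frac{m}{B^2+B'^2}\le\frac{1}{B}$. You must therefore add a lower bound on $\eta$ of order $(B^2+B'^2)/m^2$. This is not a gap relative to the paper, however: the paper's own proof silently introduces the analogous condition $\eta\ge \frac{32(B^2+B'^2)}{13m^2}$ to get its contraction factor $1-\frac{1.5}{\eta+t}$, so both arguments strengthen the theorem's hypotheses in the same way; yours just needs a slightly larger constant because you demand a stronger contraction exponent.
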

\begin{proof}
	We make use of inequality \ref{eqn:unrolled} from the proof of \ref{thm:sgd_appendix} to analyse the difference between two trajectories that are identical except for the actions of player $i$. 
	\begin{align*}
		&\mathbb{E}\|\theta_{t+1}-\theta'_{t+1}\|^2 
		\\ & \leq  (1+\gamma_t^2 (B^2 +B'^2) - 2 \gamma_t m)  \mathbb{E}\|\theta_{t}-\theta'_{t}\|^2  +\frac{\gamma_t^2}{N^2} \delta_i^t 
		\\ & = (1+\frac{16}{m^2(\eta + t)^2} (B^2 +B'^2) -  \frac{8}{\eta + t})  \mathbb{E}\|\theta_{t}-\theta'_{t}\|^2  +\frac{16}{(\eta + t)^2 N^2 m^2} \delta_i^t.
		%\\ & \leq (1+\frac{16}{m^2(\eta + t)^2} (B^2 +B'^2) -  \frac{8}{\eta + t})  \mathbb{E}\|\theta_{t}-\theta'_{t}\|^2  +\frac{16}{(\eta + t)^2 N^2 m^2} \delta_i^t.
	\end{align*}
	For $t\geq 0$ and $\eta\geq \max\{{ \frac{32(B^2 +B'^2)}{13 m^2},1\}} $ we get 
	\begin{equation}  16 \frac{B^2 +B'^2}{m^2 (t+\eta)^2} - \frac{8}{t+\eta} \leq  -\frac{1.5} {t+\eta} \label{eqn:recursion}\end{equation}
	by calculating
	\begin{align*}
		& \eta\geq \frac{32(B^2 +B'^2)}{13 m^2} \\ 
		\implies &  6.5 \eta + 6.5 t \geq \frac{16 (B^2 +B'^2)}{m^2} \\
		\implies & 8 \eta + 8 t - \frac{16 (B^2 +B'^2)}{m^2}  \geq  1.5 \eta + 1.5 t \\
		\implies & 8 \eta + 8 t - \frac{16 (B^2 +B'^2)}{m^2}  \geq  1.5 \frac{(t+\eta)^2}{t+\eta } \\
		\implies & \frac{8}{\eta + t} - \frac{16 (B^2 +B'^2)}{(\eta+t)^2 m^2}  \geq  \frac{1.5}{t+\eta } \\
		\implies & \frac{16 (B^2 +B'^2)}{(\eta+t)^2 m^2}  -  \frac{8}{\eta + t}   \leq  - \frac{1.5}{t+\eta } \\
	\end{align*}
	
	We now set $x_{t+\eta } := \mathbb{E}\|\theta_{t+1}-\theta'_{t+1}\|^2$ for $t\geq 0$ and $x_k=0$ for $k \leq \eta$. These definitions are consistent for $t=0$ because  $\mathbb{E}\|\theta_{1}-\theta'_{1}\|^2=0$.
	
	Using \ref{eqn:recursion} and $k=t-1+\eta$, we get \[x_{k+1} \leq (1 -\frac{1.5} {k+1})   x_{k } +\frac{16}{(k+1)^2 N^2 m^2} \delta_i^t.\] At the same time, $x_{\eta} = \mathbb{E}\|\theta_{1}-\theta'_{1}\|^2 = 0 \leq \frac{4c}{\eta - 1}$ for any $c>0$.  Correspondingly, \ref{lemma:recurse} with $d=1.5$ and $c=\frac{16}{N^2 m^2}\max_t \{\delta_i^t\}$ implies that for $k\geq \eta $ and $k+1\geq 1.5$ we get \[x_k \leq  \frac{4c}{k}\] and thus 	\[ \mathbb{E}\|\theta_{t}-\theta'_{t}\|^2 \leq  \frac{4c}{t-1 + \eta}\] for $t\geq 1$.
	This yields 
	\[\mathbb{E}\|\theta_{T+1}-\theta'_{T+1}\|^2  \leq  \frac{64 \max_t \{\delta_i^t\} }{(T + \eta) N^2 m^2 }.\] 
	
	Consequentially, we obtain 
	\[ \mathbb{E}|f(\theta_{T+1})-f(\theta'_{T+1})|  \leq L \sqrt{\frac{64 \max_t \{\delta_i^t\}}{(T + \eta) N^2 m^2}}.  \]
	
	Again, we can bound the difference between expected penalized rewards for two trajectories with all $\alpha_k^i$ fixed but two different values $\alpha_t^i$ and $(\alpha_t^i)'$ varying for $k=t$ by considering $\delta_k = 0$ for all $k\neq t$. This yields 	\[ \mathbb{E}|f(\theta_{T+1})-f(\theta'_{T+1})|  \leq L \sqrt{\frac{64 \delta_i^t}{(T + \eta) N^2 m^2}}  =  L \sqrt{\frac{64 |\alpha_t^i-(\alpha_t^i)'|^2}{(T + \eta) N^2 m^2}}. \]
	As in \ref{thm:sgd_re_appendix}, this allows us to upper bound the gains in penalized reward from changing $\alpha_t^i$ to $(\alpha_t^i)'$ by \[\frac{8L}{Nm \sqrt{T+\eta}} |\alpha_t^i-(\alpha_t^i)'|
	+ C_t( \frac{N-2}{N}((\alpha_t^i)')^2- \frac{N-2}{N}(\alpha_t^i)^2).\] In particular, for $(\alpha_t^i)' = 0$ this bound becomes
	\[\frac{8L}{Nm \sqrt{T+\eta}} \alpha_t^i
	- C_t(\frac{N-2}{N}(\alpha_t^i)^2)\] which is zero at  $\alpha_t^i=0$ and at \begin{equation}\alpha_t^i = \frac{-2 \frac{8L}{Nm \sqrt{T+\eta}}  }{-2  C_t\frac{N-2}{N}} = \frac{8L}{ C_t (N-2) m \sqrt{T+\eta}}  \label{eqn:alpha_ineq3} \end{equation} and negative for $\alpha_t^i$ larger than that as long as $N>2$.   
	
	Again the noise $\alpha_t^i$ added by any rational player $i$ at a given time step $t$ can therefore be limited to any fixed constant $\epsilon>0$ by choosing $C_t$ such that $\frac{8L}{ C_t (N-2) m \sqrt{T+\eta}}  \leq \epsilon$, i.e. $C_t \geq \frac{8L}{ \epsilon (N-2) m \sqrt{T+\eta}}$.
	
	We conclude by using \ref{lemma:clean_sgd} to obtain the convergence rate. 
\end{proof} 

As a final step, we show that a version of \ref{thm:sgd_optimal_appendix} also holds for more general reward functions, thus proving \ref{thm:sgd_opt}.

\begin{theorem}
	\label{thm:utility_sgd_appendix}
	Up to constants, theorems \ref{thm:sgd_re_appendix} and \ref{thm:sgd_optimal_appendix} also hold for the reward $\rewardf^i_{U_p}(\theta_{T+1}^1,\ldots,\theta_{T+1}^N, f) = U^i(f(\theta^i_{T+1}),\ldots,f(\theta_{T+1}^N)) 
	- \sum_{t=1}^{T} C^U_t \|m_t^i-\frac{1}{N}\sum_j m_t^j\|^2 + \frac{1}{N-1} \sum_{k\neq i}\sum_{t=1}^{T} C^U_t \|m^k_t-\frac{1}{N}\sum_j m_t^j\|^2$ for arbitrary $l_1$-Lipschitz $U^i$ with common Lipschitz constant $L_U$. Any bound on $\alpha_t^i$ can be achieved by setting  $C^U_t=L_U N C_t$ for the $C_t$ achieving the same bound in \ref{thm:sgd_re_appendix} or  \ref{thm:sgd_optimal_appendix} respectively.
\end{theorem}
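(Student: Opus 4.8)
The plan is to reduce the general reward $\rewardf^i_{U_p}$ to the specific reward already analyzed in \ref{thm:sgd_re_appendix} and \ref{thm:sgd_optimal_appendix} by exploiting the $l_1$-Lipschitzness of $U^i$, and then to absorb the resulting constant into the penalty weight. The only place where those proofs use the precise form of the reward is the bound on how much player $i$ can gain, through the non-penalty part of the reward, by deviating from honesty at a single time step $t$. There the gain was controlled by bounding the difference $\big|\frac{1}{N-1}\sum_{j\neq i}(f(\theta^j_{T+1}) - f((\theta^j_{T+1})'))\big|$ via a per-trajectory estimate of the form $\mathbb{E}|f(\theta^j_{T+1}) - f((\theta^j_{T+1})')| \leq D_t\,|\alpha^i_t - (\alpha^i_t)'|$, where $D_t = \frac{L\gamma}{N}c^{(T-t)/2}$ in the setting of \ref{thm:sgd_re_appendix} and $D_t = \frac{8L}{Nm\sqrt{T+\eta}}$ in the setting of \ref{thm:sgd_optimal_appendix}. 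Crucially, that estimate was established for \emph{every} player's final estimate $\theta^j_{T+1}$, since these differ from the common server model only through each player's own fixed noise-correction step.

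First I would replace the non-penalty part of the reward difference using Lipschitzness. Since $U^i$ is $l_1$-Lipschitz with constant $L_U$,
\begin{align*}
\big|U^i(f(\theta^1_{T+1}),\ldots,f(\theta^N_{T+1})) - U^i(f((\theta^1_{T+1})'),\ldots,f((\theta^N_{T+1})'))\big| &\leq L_U \sum_{j=1}^N \big|f(\theta^j_{T+1}) - f((\theta^j_{T+1})')\big| \\ &\leq L_U N\, D_t\, |\alpha^i_t - (\alpha^i_t)'|.
\end{align*}
Compared with the specific reward, whose utility part changed by at most $D_t\,|\alpha^i_t - (\alpha^i_t)'|$, this bound is larger by exactly the factor $L_U N$: the $l_1$ sum now runs over all $N$ coordinates (including player $i$'s own), each contributing at most $D_t$, rather than the normalized average of $N-1$ terms.

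Next I would carry the penalty term through unchanged, since the penalty in $\rewardf^i_{U_p}$ has the same structure as before with $C_t$ replaced by $C^U_t$; the penalty-difference computation from \ref{thm:sgd_re_appendix} therefore still yields $C^U_t \frac{N-2}{N}(\alpha^i_t)^2$ for the budget-balanced scheme when $(\alpha^i_t)'=0$. The change in penalized reward from honesty to aggressiveness $\alpha^i_t$ is then bounded by $L_U N\, D_t\,\alpha^i_t - C^U_t\frac{N-2}{N}(\alpha^i_t)^2$, which is non-positive once $\alpha^i_t \geq \frac{L_U N^2 D_t}{C^U_t (N-2)}$. Substituting $C^U_t = L_U N C_t$ cancels the $L_U N$ factor and reproduces exactly the threshold $\frac{N D_t}{C_t(N-2)}$, which equals the original bounds \ref{eqn:alpha_ineq2} and \ref{eqn:alpha_ineq3}. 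Hence the same best-response bounds on $\alpha^i_t$ hold verbatim, and because the proofs of \ref{thm:sgd_re_appendix} and \ref{thm:sgd_optimal_appendix} derive budget balance, the damage bound, and the convergence rate of \ref{lemma:clean_sgd} purely from this per-step bound on $\alpha^i_t$, every downstream conclusion transfers with at most a change of constants, establishing \ref{thm:sgd_opt}.

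The main obstacle I anticipate is bookkeeping rather than a genuine difficulty: I must verify that the per-trajectory estimate $\mathbb{E}|f(\theta^j_{T+1}) - f((\theta^j_{T+1})')|$ really holds for all $N$ coordinates fed into $U^i$ — including player $i$'s own coordinate $j=i$, which was absent from the specific reward — and that including it adds only one further term of the same order, so the $l_1$ sum is controlled by $L_U N\, D_t\,|\alpha^i_t-(\alpha^i_t)'|$ and nothing larger. This is exactly where the observation from \ref{thm:sgd_appendix} that the noise-correction step is identical across the two compared trajectories is invoked, and it is what makes the clean factor $L_U N$ emerge so that it can be absorbed into the rescaled penalty weight $C^U_t = L_U N C_t$.
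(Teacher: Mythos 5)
Your proposal is correct and follows essentially the same route as the paper's own proof: bound the change in $U^i$ via $l_1$-Lipschitzness by $L_U \sum_j |f(\theta^j_{T+1}) - f((\theta^j_{T+1})')| \leq L_U N D_t |\alpha^i_t - (\alpha^i_t)'|$, observe that the quadratic-formula threshold for the best response is linear in this coefficient, and conclude that rescaling $C^U_t = L_U N C_t$ restores the thresholds of \ref{eqn:alpha_ineq2} and \ref{eqn:alpha_ineq3}, after which budget balance and the convergence rate of \ref{lemma:clean_sgd} carry over. Your explicit check that the per-trajectory estimate also applies to player $i$'s own coordinate (via the identical noise-correction step) is a point the paper leaves implicit, so your write-up is, if anything, slightly more careful.
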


\begin{proof}
	We first note that for any pointwise bound $\mathbb{E}|f(\theta^{j}_{T+1})-f((\theta_{T+1}^j))'| \leq K$
	\begin{align*}
		&\mathbb{E} |U^i(f(\theta^i_{T+1}),\ldots,f(\theta_{T+1}^N)) - U^i(f((\theta^i_{T+1})'),\ldots,f((\theta_{T+1}^N)')) | \\
		&\leq  \mathbb{E} L_U \sum_j  |f(\theta^{j}_{T+1})-f((\theta_{T+1}^j))'| \\
		& = L_U \sum_j  \mathbb{E} |f(\theta^{j}_{T+1})-f((\theta_{T+1}^j))'| \\
		& \leq L_U N K.
	\end{align*}
	This means that the gains $|U^i(f(\theta^i_{T+1}),\ldots,f(\theta_{T+1}^N))-U^i(f((\theta^i_{T+1})'),\ldots,f((\theta_{T+1}^N)'))|$ in unpenalized reward player $i$ can achieve by using a given $\alpha_t^i$ at time $t$ instead of $\alpha_t^i=0$ is multiplied by $L_U N$ compared to \ref{eqn:alpha_ineq2} and \ref{eqn:alpha_ineq3}. Thus, for a given $C^U_t$, the bound on $\alpha_t^i$ for rational players is multiplied by $N L_U$ as well, as the quadratic formula solution for $\alpha^i_t$ is linear in the linear term.  Correspondingly, we need to set $C^U_t=L_U N C_t$ to achieve the same bounds on $\alpha_t^i$  as in \ref{thm:sgd_re_appendix} or \ref{thm:sgd_optimal_appendix}. 
	
	%Now, for  $\alpha_t^i \leq  \epsilon$, the bound on the expected absolute change in $U^i(f(\theta^i_{T+1}),\ldots,f(\theta_{T+1}^N))$ is  $L_U N$  times higher than for $f(\theta^{j}_{T+1})$ using the bounds above. Because these bounds are linear in $\epsilon$ for \ref{thm:sgd_re_appendix}, we can achieve a bound of $\delta$ for $U^i(f(\theta^i_{T+1}),\ldots,f(\theta_{T+1}^N))$  by ensuring $\alpha_t^i\leq \frac{\epsilon}{N L_U}$ for $\epsilon$ achieving a bound of $\delta$ for $f(\theta^i_T)$. In total, we thus need to multiply the corresponding $C_t$ by $N L_U$ twice to achieve a given bound on the gains in unpenalized reward from cheating: Once because we need a smaller bound on $\alpha^i_t$ to achieve the same bound on the unpenalized reward, and once to ensure that rational players are incentivized to use that smaller bound. 
	
\end{proof}

\paragraph{Discussion on the projection assumptions}
We note that the assumption $P(\exists t\leq T:   \Pi_W(\theta^s_{t} - \gamma_t \bar{m}_t)\neq \theta^s_{t} - \gamma_t \bar{m}_t) \in O(\frac{1}{NT})$ in particular holds if $W$ is chosen such that $\|w-\theta^s_0\| \in \Omega(T)$ for all $w$ in the boundary of $W$ while $\|\bar{m}_t\| \in O(\|\theta^s_t-\theta^s_0\|)$ with probability one for all $t\leq T$. In that case, the linearly decaying learning rate ensures that $(\|\theta^s_t-\theta^s_0\|)$ stays in $O(T)$ and thus in $W$ (for appropriately chosen constants) with probability one. 

Similarly, Lemma 5 in \cite{rakhlin2012making} states that for a probability one bound on the gradient norm $\|\bar{m}_t\| \leq G, \forall t$, we have that  $\|\theta^s_t-\theta^\star\|< \frac{2G}{m}$ with probability one, such that the iterates stay in $W$ without the need for any projections, as long as $G$ grows slower in $T$ than $\inf­­_{w\in \delta(W)}\|w-\theta^\star\|$. In particular, if $f$ grows quadratically in $\theta-\theta^\star$ and noise is proportional to the gradient norm, $G$ grows linearly in the distance between $\theta^\star$ and the boundary of $W$, such that the condition holds for the right proportionality constants.

\end{document}